\documentclass[aps,floatfix,longbibliography,prx,reprint,superscriptaddress]{revtex4-1}

\usepackage{amsmath,amssymb,amsthm}
\usepackage{bbm}  
\usepackage{graphicx}
\PassOptionsToPackage{hyphens}{url}
\usepackage{hyperref}
\usepackage[T1]{fontenc}
\usepackage[utf8x]{inputenc}
\usepackage{nicefrac}
\usepackage{verbatim}
\usepackage{tabularx}
\usepackage{xcolor}

\def\ind{{\mathbbm{1}}}

\newtheorem{theorem}{Property}[section]
\newtheorem{lemma}[theorem]{Lemma}
\newtheorem{corollary}[theorem]{Corollary}

\DeclareMathOperator*{\argmax}{arg\,max}

\def\mC{{\bm{C}}}
\def\mF{{\bm{F}}}
\def\mH{{\bm{H}}}
\def\mW{{\bm{W}}}
\def\mX{{\bm{X}}}
\def\mZ{{\bm{Z}}}
\def\vc{{\bm{c}}}
\def\vf{{\bm{f}}}
\def\vv{{\bm{v}}}
\def\vw{{\bm{w}}}
\def\vx{{\bm{x}}}
\def\vtheta{{\bm{\theta}}}
\def\vphi{{\bm{\phi}}}

\DeclareMathAlphabet{\mathpzc}{OT1}{pzc}{m}{it}
\def\cala{{\mathpzc{a}}}
\def\calb{{\mathpzc{b}}}
\def\calc{{\mathpzc{c}}}

\newcommand{\ussqR}{\frac{1}{\sqrt{D}}}
\newcommand{\ussqN}{\frac{1}{\sqrt{N}}}
\newcommand{\usR}{\frac{1}{D}}

\newcommand{\ussD}{\frac{1}{\sqrt{D}}}
\newcommand{\ussR}{\frac{1}{\sqrt{D}}}
\newcommand{\usN}{\frac{1}{N}}

\newcommand{\dd}{\operatorname{d}\!}
\newcommand{\EE}{\mathbb{E}\,}

\newcommand{\erf}{\mathrm{erf}}

\newcommand{\order}[1]{\mathcal{O}(#1)}

\hyphenation{CIFAR}
\hyphenation{MNIST}
\hyphenation{TeacherS}

\definecolor{C0}{HTML}{1f77b4}
\definecolor{C1}{HTML}{ff7f0e}
\definecolor{C2}{HTML}{2ca02c}
\definecolor{C3}{HTML}{d62728}

\begin{document}

\title{Modelling the influence of data structure\\ on learning in neural
  networks: the Hidden Manifold Model}

\author{Sebastian Goldt}
\email{sebastian.goldt@phys.ens.fr}
\affiliation{Laboratoire de Physique de l’Ecole Normale Supérieure, Université
  PSL, CNRS, Sorbonne Université, Université Paris-Diderot, Sorbonne Paris
  Cité, Paris, France}
\author{Marc Mézard}
\email{marc.mezard@ens.fr}
\author{Florent Krzakala}
\email{florent.krzakala@ens.fr}
\affiliation{Laboratoire de Physique de l’Ecole Normale Supérieure, Université
  PSL, CNRS, Sorbonne Université, Université Paris-Diderot, Sorbonne Paris
  Cité, Paris, France}
\author{Lenka Zdeborov\'a}
\email{lenka.zdeborova@cea.fr}
\affiliation{Institut de Physique Théorique, CNRS, CEA, Université
  Paris-Saclay, France}

\date{\today}

\begin{abstract}
  Understanding the reasons for the success of deep neural networks trained
  using stochastic gradient-based methods is a key open problem for the
  nascent theory of deep learning.
  The types of data where these networks are most successful, such as images or
  sequences of speech, are characterised by intricate correlations. %
  Yet, most theoretical work on neural networks does not explicitly model
  training data, or assumes that elements of each data sample are drawn independently from some
  factorised probability distribution.
  These approaches are thus by construction blind to the correlation structure
  of real-world data sets and their impact on learning in neural networks.
  Here, we introduce a generative model for \emph{structured} data sets that we
  call the \emph{hidden manifold model} (HMM).
  The idea is to construct high-dimensional inputs that lie on a
  lower-dimensional manifold, with labels that depend only on their position
  within this manifold, akin to a single layer decoder or generator in a
  generative adversarial network. %
  We demonstrate that learning of the hidden manifold model is amenable to an
  analytical treatment by proving a ``Gaussian Equivalence Property'' (GEP), and
  we use the GEP to show how the dynamics of two-layer neural networks trained
  using one-pass stochastic gradient descent is captured by a set of
  integro-differential equations that track the performance of the network at
  all times. %
  This permits us to analyse in detail how a neural network learns functions of
  increasing complexity during training, how its performance depends on its size
  and how it is impacted by parameters such as the learning rate or the
  dimension of the hidden manifold.
\end{abstract}

\maketitle

\section{Introduction}

The data sets on which modern neural networks are most successful, such as
images~\cite{krizhevsky2012imagenet, lecun2015deep} or natural
language~\cite{sutskever2014sequence}, are characterised by complicated
correlations. Yet, most theoretical works on neural networks in statistics or
theoretical computer science do not model the structure of the training data at
all~\cite{Vapnik2013, Mohri2012}, which amounts to assuming that the data set is
chosen in a worst-case (adversarial) manner. A line of theoretical works
complementary to the statistics approach emanated from statistical
physics~\cite{Gardner1989,Seung1992,Engel2001,Zdeborova2016}. These works model
inputs as element-wise i.i.d.\ draws from some probability distribution, with
labels that are either random or given by some random, but fixed function of the
inputs. Despite providing valuable insights, these approaches are by
construction blind to even basic statistical properties of real-world data sets
such as their covariance structure. This lack of mathematical models for data
sets is a major impediment for understanding the effectiveness of deep neural
networks.

The structure present in realistic data sets can be illustrated well with
classic data sets for image classification, such as the handwritten digits of
MNIST~\cite{lecun1998} or the images of
CIFAR10~\cite{krizhevsky2009learning}. The inputs that the neural network has to
classify are images, so \emph{a priori} the input space is the
high-dimensional~$\mathbb{R}^N$, corresponding to the number of pixels, with~$N$
large. However, the inputs that can be recognised as actual images rather than
random noise, span but a lower-dimensional manifold within~$\mathbb{R}^{N}$, see
Fig.~\ref{fig:intuition}. This manifold hence constitutes the actual input
space, or the ``world'', of our problem. While the manifold is not easily
defined, it is tangible: for example, its dimension can be estimated based on
the neighbourhoods of inputs in the data set~\cite{Grassberger1983, Costa2004,
  Levina2004a, Spigler2019}, and was found to be around $D\approx14$ for MNIST,
and $D\approx35$ for CIFAR10, compared to $N=784$ and $N=3072$,
respectively. 
We will call inputs \emph{structured} if they are concentrated on a
lower-dimensional manifold and thus have a lower-dimensional latent
representation, which consists of the coordinates of the input within that
manifold. 

A complementary view on the data manifold is provided by today's most powerful
generative models, called Generative Adversarial Networks
(GAN)~\cite{goodfellow2014generative}. A GAN~$\mathcal{G}$ is a neural network
that is trained to take random noise as its input and to transform it into
outputs that resemble a given target distribution. For example, GANs can
generate realistic synthetic images of human
faces~\cite{radford2015unsupervised,
  karras2019analyzing}. 
From this point of view, the mapping from the hidden manifold to the input space
is given by the function that the GAN $\mathcal{G}$ computes.

\begin{figure}
  \centering
  \includegraphics[width=\linewidth]{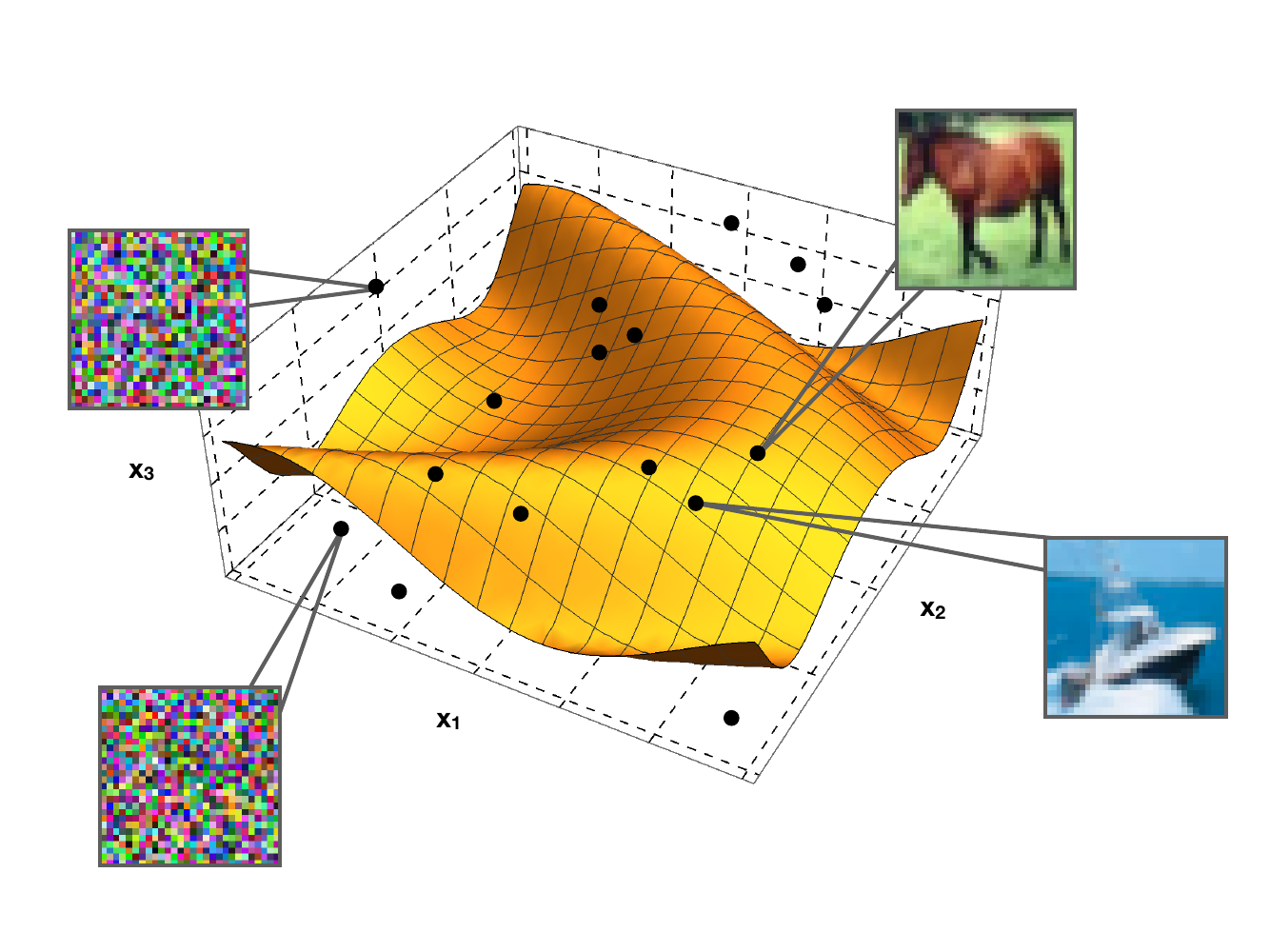}
  \caption{\label{fig:intuition} We illustrate the notion of a \textbf{hidden
      manifold in input space} using CIFAR10 example images. Each black point
    indicates a possible input in a high-dimensional input space
    $\mathbb{R}^N$. Most points in this space cannot be interpreted as images at
    all; however, those points that can be interpreted as real images tend to
    concentrate on a lower-dimensional manifold, here sketched as a
    two-dimensional curved surface in a three-dimensional space. The intrinsic
    dimension $D$ of these lower-dimensional manifolds has been measured
    numerically~\cite{Grassberger1983, Costa2004, Levina2004a, Spigler2019}.}
\end{figure}

\subsection{Main results}

In this paper, and specifically in Sec.~\ref{sec:hmm}, we introduce a generative
model for structured data sets in the above sense that we call the \emph{hidden
  manifold model} (HMM)~\footnote{Not to be confused with the hidden Markov
  models that are also commonly abreviated as HMM, but which are not used in the
  present paper.}. It is a generative model that produces tuples $(\vx, y^*)$ of
high-dimensional inputs $\vx\in\mathbb{R}^N$ and their scalar labels $y^*$. The
key idea is to construct the inputs such that they lie on a lower-dimensional
manifold; their labels are then a function of only their position within that
manifold. The way the inputs are generated is akin to a learnt single layer
decoder with random inputs; it can also be viewed as a single layer generator
neural network of a learnt GAN.  As a result, inputs drawn from the HMM have
non-trivial correlations, do not follow a normal distribution, and their labels
$y^*$ cannot be written as a simple function of the inputs $\vx$.

Our key theoretical result, presented in Sec.~\ref{sec:solution}, is to show
that despite these correlations, the HMM is amenable to an analytical treatment
in a thermodynamic limit of large dimensions $N$ and $D$, large number of
samples $P$, and fixed respective ratios $D/N$, $P/N$. We derive the solution by
first demonstrating a ``Gaussian Equivalence Property'' or GEP
(Proposition~\ref{gep}). As a first application, we use the GEP to derive a set
of integro-differential equations that captures the behaviour of two-layer
neural networks, with $K=\order{1}$ hidden units, trained using stochastic
gradient descent. These equations extend the classical analysis of the dynamics
of two-layer neural networks on unstructured data~\cite{Kinzel1990, Biehl1995,
  Saad1995a, Saad1995b} to the hidden manifold and provide detailed insight into the
dynamics of learning.

We then use these equations to study the dynamics and the performance of
two-layer neural networks trained on data generated by the HMM, in
Sec.~\ref{sec:results}. We find back the specialisation of hidden units, known
from the canonical teacher-student model. We analyse the learning for different
feature matrices, and show that Hadamard matrices perform slightly better than
i.i.d. Gaussian ones. We show analytically that the generalisation performance
deteriorates as the manifold dimension $D$ grows. We show that the learning rate
has a very minor influence on the asymptotic error, and analyse how the error
final error of the network changes as a function of the width of the hidden
layer.

Sec.~\ref{sec:comparison} is devoted to comparison of learning on the HMM and on
real data sets such as MNIST~\cite{lecun1998},
fashion-MNIST~\cite{xiao2017online} or CIFAR10~\cite{krizhevsky2009learning}.
In particular, we demonstrate that neural networks learn functions of increasing
complexity over the course of training on both the HMM and real data sets. We
also compare the memorisation of some samples during the early stages of
training between the HMM to real data. These comparisons provide strong evidence
that the HMM captures the properties of learning with one-pass SGD and two-layer
neural networks on some of the standard benchmark data sets rather faithfully.


\subsection{Further related work}

\paragraph*{The need for models of structured data.} Several works have
recognised the importance of the structure in the data sets used for machine
learning, and in particular the need to go beyond the simple component-wise
i.i.d.\ modelling~\cite{derrida1987exactly, bruna2013invariant, PatelNIPS2016,
  mezard2017mean, Gabrie2018, Mossel2018}. While we will focus on the ability of
neural networks to generalise from examples, two recent papers studied a
network's ability to \emph{store} inputs with lower-dimensional structure and
random labels: \citet{Chung2018} studied the linear separability of general,
finite-dimensional manifolds and their interesting consequences for the training
of deep neural networks~\cite{chung2018learning,cohen2020separability},
while~\citet{Rotondo2019a} extended Cover's classic argument~\cite{Cover1965} to
count the number of learnable dichotomies when inputs are grouped in tuples of
$k$ inputs with the same label. Recently, \citet{yoshida2019datadependence}
analysed the dynamics of online learning for data having an arbitrary covariance
matrix, finding an infinite hierarchy of ODEs. Their study implicitly assumes
that inputs have a Gaussian distribution, while our approach handles a more
general data structure.  The importance of the spectral properties of the data
was recognised for learning in deep neural networks by~\citet{Saxe2014} in the
special case of \emph{linear} neurons, where the whole network performs a linear
transformation of the data.

\paragraph*{Relation to random feature learning.} The hidden manifold model has
an interesting link to random feature learning with unstructured i.i.d.\ input
data. The idea of learning with random features goes back to the mechanical
perceptron of the 1960s~\cite{Rosenblatt1962} and was extended into the random
kitchen sinks of~\citet{rahimi2008random, rahimi2009weighted}. Remarkably,
random feature learning in the same scaling limit as used in the theoretical
part of this paper was analysed in several recent and concurrent works, notably
in~\cite{louart2018random, mei2019generalization} for ridge regression, and
in~\cite{montanari2019generalization} for max-margin linear classifiers. These
papers consider full batch learning, i.e.\ all samples are used at the same
time, which makes one difference from our online (one-pass stochastic gradient
descent) analysis. Another important difference is that we study learning in a
neural network with two layers of learned weights, while the existing works
study simpler linear (perceptron-type) architectures where only one layer is
learned. Perhaps more importantly, in our analysis the features do not need to
be random, but can be chosen deterministically or even be learnt from data using
a GAN or an autoencoder. The principles underlying the analytic solution of this
paper as well as~\cite{louart2018random, mei2019generalization,
  montanari2019generalization} rely on the Gaussian Equivalence Property, which
is stated and used independently in those papers.

\paragraph*{Gaussian equivalence and random matrix theory.} Special cases of the
Gaussian Equivalence Property were in fact derived previously using random matrix
theory in~\cite{hachem2007deterministic, cheng2013spectrum, fan2019spectral,
  pennington2017nonlinear}, and this equivalent Gaussian covariates mapping was
explicitly stated and used in~\cite{mei2019generalization,
  montanari2019generalization}. This formulation has recently been extended to a
broader setting of concentrated vectors encompassing data coming from a GAN
in~\cite{seddik2019kernel, seddik2020random}, a version closer to our
formulation.

\subsection{Reproducibility}

We provide the full code to reproduce our experiments as well as an integrator
for the equations of motion of two-layer networks online~\cite{code}.

\subsection{Learning setup}
\label{sec:setup}

This paper focuses on the dynamics and performance of fully-connected two-layer
neural networks with $K$ hidden units and first- and second-layer weights
$\mW\in\mathbb{R}^{K\times N}$ and $\vv\in\mathbb{R}^K$, resp. Given an input
$\vx\in\mathbb{R}^N$, the output of a network with parameters
$\vtheta=(\mW, \vv)$ is given by
\begin{equation}
  \label{eq:phi}
  \phi(\vx; \vtheta) = \sum_k^K v_k g\left(\vw_k \vx/\sqrt{N}\right),
\end{equation}
where $\vw_k$ is the $k$th row of $\mW$, and $g: \mathbb{R} \to \mathbb{R}$ is
the non-linear activation function of the network, acting component-wise. We
study sigmoidal and ReLU networks with $g(x)=\erf(x/\sqrt{2})$ and
$g(x)=\max(0, x)$, resp.

We will train the neural network on data sets with $P$ input-output pairs
$(\vx_\mu, y_\mu^*)$, $\mu=1,\ldots,P$, where we use the starred $y_\mu^*$ to
denote the \emph{true} label of an input $\vx_\mu$. Networks are trained by
minimising the quadratic training error
$E(\vtheta)= 1/2 \sum_{\mu=1}^P \Delta_\mu^2$ with
$\Delta_\mu = \left[ \phi(\vx_\mu, \vtheta) - y_\mu^*\right]$ using stochastic
(one-pass, online) gradient descent (SGD) with constant learning rate~$\eta$ and
mini-batch size 1,
\begin{equation}
  \label{eq:sgdtheta}
  \vtheta_{\mu+1} = \vtheta_\mu - \eta \nabla_{\vtheta} E(\theta)
  |_{\vtheta_\mu, \vx_\mu, y^*_\mu}.
\end{equation}
Initial weights for both layers were always taken component-wise i.i.d.\ from
the normal distribution with mean 0 and standard deviation $10^{-3}$.

The key quantity of interest is the \emph{test error} or \emph{generalisation
  error} of a network, for which we compare its predictions to the labels given
in a test set that is composed of $P^*$ input-output pairs $(\vx_\mu, y^*_\mu)$,
$\mu=1,\ldots,P^*$ that are \emph{not} used during training,
\begin{equation}
  \epsilon_g(\vtheta) \equiv \frac{1}{2P^*} \sum_\mu^{P^*} {\left[ \phi(\vx_\mu, \vtheta) -
      y_\mu^* \right]}^2.
\end{equation}
The test set in our setting is generated by the same
probabilistic model that generated the training data.

\subsubsection{The canonical teacher-student model}
\label{sec:datasets}

The joint probability distribution of input-output pairs $(\vx_\mu, y^*_\mu)$ is
inaccessible for realistic data sets such as CIFAR10, preventing analytical
control over the test error and other quantities of interest. To make
theoretical progress, it is therefore promising to study the generalisation
ability of neural networks for data arising from a probabilistic generative
model.

A classic model for data sets is the \emph{canonical teacher-student setup}
where inputs $\vx_\mu$ are drawn element-wise i.i.d.\ from the standard normal
distribution and labels are given by a random, but fixed, neural network with
weights $\vtheta^*$ acting on the inputs: $y^*_\mu = \phi(\vx_\mu,
\vtheta^*)$. The network that generates the labels is called the teacher, while
the network that is trained is called the student. The model was introduced by
Gardner \& Derrida~\cite{Gardner1989}, and its study has provided many valuable
insights into the generalisation ability of neural networks from an average-case
perspective, particularly within the framework of statistical
mechanics~\cite{Seung1992, Watkin1993, Engel2001, Zdeborova2016, Advani2017,
  Aubin2018, barbier2019optimal, Goldt2019b, Yoshida2019a}, and also in recent
works in theoretical computer science, e.g.~\cite{ge2017learning,
  li2017convergence, mei2019generalization, Arora2019}. However, it has the
notable shortcoming that its analysis crucially relies on the fact that inputs
are i.i.d.\ Gaussians and hence uncorrelated.

\section{The Hidden Manifold Model}
\label{sec:hmm}

\begin{figure}
  \centering
  \includegraphics[width=\linewidth]{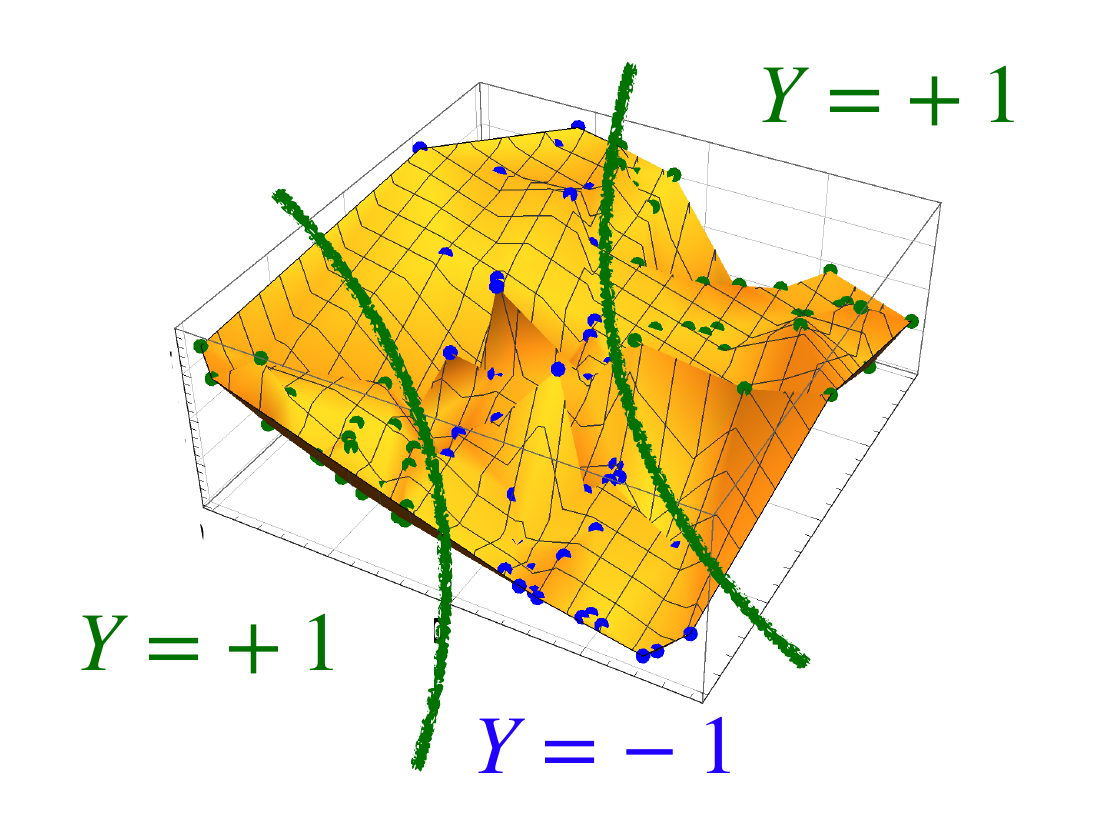}
  \caption{\label{fig:hmm_example} \textbf{The hidden manifold model} proposed
    here is a generative model for structured data sets, where inputs
    $\vx$, Eq.~\eqref{eq:structured-inputs} (blue and green balls), concentrate on a
    lower-dimensional manifold in input space (yellow surface). Their labels
    $y^*$ is a function of their position on the manifold; here we show the
    setup of a classification task with two classes~$y^*=\pm 1$. In
    our analysis the labels are generated according to Eq.~\eqref{eq:teacher2}.}
\end{figure}

We now introduce a new generative probabilistic model for structured data sets
with correlations. To generate a data set containing $P$ inputs in $N$
dimensions, we first choose $D$ feature vectors $ \vf_r$, $r=1,\dots,D$. These
are vectors in $N$ dimensions and we collect them in a feature matrix
$\mF\in\mathbb{R}^{D\times N}$. Next we draw $P$ vectors $\vc_\mu$ with random
i.i.d.\ components drawn from the normal distribution with mean zero and unit
variance and collect them in the matrix $\mC\in\mathbb{R}^{P \times D}$. The
vector $\vc_\mu$ gives the coordinates of the $\mu$th input on the
lower-dimensional manifold spanned by the feature vectors in $\mF$. We will call
$\vc_\mu$ the \emph{latent representation} of the input $\vx_\mu$, which is
given by the $\mu$th row of
\begin{equation}
  \label{eq:structured-inputs}
  \mX = f\left(\mC\mF/\sqrt{D}\right) \in\mathbb{R}^{P\times N},
\end{equation}
where $f$ is a non-linear function acting component-wise. In this model, the
``world'' of the data on which the true label can depend is a $D$-dimensional
manifold, which is obtained from the linear span of $\mF$ through a ``folding''
process induced by the nonlinear function $f$. We note that the structure of
data of the same type arises in a learned variational autoencoder
network~\cite{kingma2013auto} with single layer, or in a learned GAN
network~\cite{goodfellow2014generative} with a single layer generator network,
the matrix $C$ then corresponds to the random input, the $F$ to the learned
features, $f$ is the corresponding output activation. The matrix $\mF$ can be
generic with a certain normalisation, such that its elements are
$\order{1}$. For our analysis to be valid, we will later assume the normalisation given in
Eq.~\eqref{eq:F-cond} and balance condition given by
\eqref{eq:balance}, other than that our analysis hold for arbitrary
matrices $\mF$.

The labels are obtained by applying a two-layer neural network with
weights
$\widetilde{\vtheta}=(\widetilde{\mW}\in\mathbb{R}^{M\times D},
\widetilde{\vv}\in\mathbb{R}^M)$ within the unfolded hidden manifold according to
\begin{equation}
  \label{eq:teacher2}
  {y}_\mu^* = \phi(\vc_\mu, \widetilde{\vtheta}) =
  \sum_m^M \tilde{v}^m \tilde g\left(\widetilde{\vw}^m \vc_\mu/\sqrt{D}\right).
\end{equation}
We draw the weights in both layers component-wise i.i.d.\ from the normal
distribution with unity variance, unless we note it otherwise. The key point
here is the dependency of labels ${y}^*_\mu$ on the coordinates of the
lower-dimensional manifold $\vc_\mu$ rather than on the high-dimensional data
$\vx_\mu$. We expect the exact functional form of this dependence not to be
crucial for the empirical part of this work, and that there are other forms that
would present the same behaviour. Notably it would be interesting to consider
ones where the latent representation is conditioned to the labels as in
conditional GANs~\cite{michelsanti2017conditional} or the manifold model
of~\cite{cohen2020separability}.

\section{The solution of the Hidden Manifold Model}
\label{sec:solution}

\subsection{The Gaussian Equivalence Property}
\label{sec:GEP}

The difficulty in analysing HMM comes from the fact that the various components
of one given input pattern, say $x_{\mu i}$ and $x_{\nu j}$, are
correlated. Yet, a key feature of the model is that it is amenable to an
analytical treatment. To that end, we will be studying the standard
thermodynamic limit of the statistical physics of learning where the size of the
input space $N\to\infty$, together with the number $P\to \infty$ of patterns
that are presented for learning, while keeping the ratio $\alpha\equiv P/N$ fixed.  In
statistics this corresponds to the challenging high-dimensional limit. The
hidden manifold model can then be studied analytically if one assumes that the
latent dimension $D$, i.e.\ the dimension of the feature space, also scales with
$N$, meaning that it goes to $\infty$ with a fixed ratio $\delta\equiv D/N$
which is of order 1 with respect to $N$, so that we have
\begin{equation}
  \label{eq:thermolimit}
  N,P,D\to \infty, \quad \text{with fixed}~ \alpha \equiv \frac{P}{N} \;\text{and}\; \delta\equiv\frac{D}{N}.
\end{equation}
In this limit, the relevant variables are the ``local fields'' or
pre-activations that are acted upon by the neurons in the hidden layer. They can
be shown to follow a Gaussian distribution in the thermodynamic
limit~\eqref{eq:thermolimit}.  We will now make this statement precise by
formulating a ``Gaussian Equivalence Property'' (GEP). We will demonstrate the
power of this equivalence by deriving a set of exact equations for online
learning in Sec.~\ref{sec:sgd}.

\subsubsection{Statement of the Property}

Let $\{C_r\}_{r=1}^D$ be $D$ i.i.d.\ Gaussian random variables distributed as
${\cal N}(0,1)$.  In the following we shall denote by $\EE$ the expectation
value with respect to this distribution.  Define $N$ variables $u_i$,
$i=1,\dots,N$ as linear superpositions of the $C_r$ variables,
\begin{equation}
  \label{eq:u}
  u_i  \equiv \ussqR\sum_{r=1}^D C_r F_{ir},
\end{equation}
and $M$ variables $\nu^m$, $m=1,\dots, M$ as other linear
superpositions,
\begin{equation}
  \label{eq:nu}
  \nu^m  \equiv \ussqR \sum_{r=1}^D C_r \tilde w^m_r,
\end{equation}
where $\tilde w^m_r$ are the teacher weights Eq.~(\ref{eq:teacher2}). Define $K$ variables
$\lambda^k$ as linear superpositions of $f(u_i)$ where $f$ is an
arbitrary function:
\begin{equation}
  \label{eq:lambda}
  \lambda^k \equiv \ussqN \sum_{i=1}^N w_i^k f(u_i),
\end{equation}
where $\tilde w^k_i$ are the student weights Eq.~(\ref{eq:phi}). We will occasionally
write $(\lambda, \nu)$ to denote the tuple of all local fields $\lambda^k$ and $\nu^m$.
Denoting by $\langle g(u)\rangle$ the expectation of a function $g(u)$ when $u$ is a
normal variable with distribution $u\sim{\cal N}(0,1)$, we also introduce for
convenience the ``centered'' variables
\begin{equation}
  \label{eq:lambda_centered}
  \tilde \lambda^k \equiv \ussqN \sum_{i=1}^N w_i^k (f(u_i) -
  \langle  f(u) \rangle ) .
\end{equation}
Notice that our notations keeps upper indices for indices which take values in a
finite range ($k,\ell \in\{1,\dots,K\}$, $m,n \in\{1,\dots,M\}$), and lower
indices for those which have a range of order $N$ ($i,j \in\{1,\dots,N\}$;
$r,s \in\{1,\dots,D\}$).

As the $C_r$ are Gaussian, the $u_i$ variables are also Gaussian variables, with
mean zero and a matrix of covariance
\begin{equation}
  U_{ij} = \EE [ u_i u_j ]  = \usR \sum_{r=1}^D F_{ir}F_{jr}\ .
\end{equation}
Note that the covariances of the $u_i$ variables scale in the thermodynamic
limit as
\begin{equation}
  \EE [u_i^2]=1; \quad \EE[u_i u_j]=\order{1/\sqrt{D}}, \; i\neq j.
\end{equation}
We assume that, in the thermodynamic limit, the $\mW$, $\tilde \mW$ and $\mF$
matrices have elements of $\order{1}$ and that for $i\neq j$,
\begin{equation}
  \label{eq:F-cond}
  \frac{1}{\sqrt{D}}\sum_{r=1}^D F_{ir}F_{jr}=\mathcal{O}(1), \quad \mathrm{and}\quad   \sum_{r=1}^D (F_{ir})^2=D. 
\end{equation}
Notice that the only variables which are drawn i.i.d.\ from a Gaussian
distribution are the coefficients $C_r$. Most importantly, the matrices $\mF$
and $\mW$ can be arbitrary (and deterministic) as long as they are ``balanced''
in the sense that
$\forall p,q \ge 1, \; \forall k_1,\ldots, k_p,r_1,\ldots r_q$, we have
\begin{multline}
  \label{eq:balance}
  S^{k_1k_2\ldots k_p}_{r_1r_2\ldots r_q}\\=\frac{1}{\sqrt{N}}\sum_i
  w_i^{k_1}w_i^{k_2}\ldots w_i^{k_p}F_{ir_1}F_{ir_2}\ldots F_{ir_q}=\order{1},
\end{multline}
with the $q$ and $p$ distinct. We also have a similar scaling for the
combinations involving the teacher weights $\tilde w^m_r$. This is the key
assumption behind the Gaussian Equivalence Property, and we will discuss its
interpretation immediately after the statement of the GEP:
\begin{theorem}
  \label{gep}
  \textbf{Gaussian Equivalence Property (GEP)} In the asymptotic limit when
  $N\to \infty$, $D\to\infty$, with $K,M$ and the ratio $D/N$ finite, and under
  the assumption~\eqref{eq:balance}, $\{\lambda^k\}$ and $\{\nu^m\}$ are $K+M$
  jointly Gaussian variables, with mean
  \begin{equation}
    \EE [\lambda^k] =a \ussqN
    \sum_{i=1}^N w_i^k; \quad \EE [\nu^m]=0,
  \end{equation}
  and covariance 
  \begin{alignat}{3}
    & Q^{k\ell} && \equiv \EE [\tilde \lambda^k \tilde \lambda^\ell] &&
    =(c-a^2-b^2)
    W^{k\ell}+b^2 \Sigma^{k\ell} ,\label{eq:Q}  \\
    & R^{km} && \equiv \EE [\tilde \lambda^k \nu^m]&&= b \usR \sum_{r=1}^D
    S_r^k\tilde w_r^m ,  \label{eq:R}    \\
    & T^{mn} && \equiv \EE [ \nu^m \nu^n] &&= \usR \sum_{r=1}^D \tilde w_r^m
    \tilde w_r^n ,\label{eq:T}
  \end{alignat}
  The ``folding function'' $f(\cdot)$ appears through the three coefficients
  $a,b,c$, which are defined as
  \begin{equation}
    \label{eq:abc}
    a \equiv \langle f(u) \rangle, \quad  b \equiv \langle u f(u) \rangle, \quad
    c \equiv \langle {f(u)}^2\rangle
  \end{equation}
  where $\langle \psi(u)\rangle$ denotes the expectation value of the function
  $\psi$ when $u \sim {\cal N}(0,1)$ is a Gaussian variable.

  The covariances are defined in terms of the three matrices
  \begin{align}
    S_r^k & \equiv\ussqN\sum_{i=1}^N w_i^k F_{ir} , \label{eq:S}\\
    W^{k\ell}& \equiv\usN\sum_{i=1}^N w_i^k w_i^\ell, \label{eq:W}\\
    \Sigma^{k\ell}&\equiv \usR \sum_{r=1}^D S_r^k S_r^\ell ,\label{eq:Sigma}
  \end{align}
  whose elements are assumed to be of order $\order{1}$ in the asymptotic limit.
  The derivation of the property is given in Appendix~\ref{app:GEP}.
\end{theorem}
In Sec.~\ref{sec:sgd}, we will see that the GEP allows us to develop an
analytical understanding of learning with the hidden manifold model. We first
discuss several aspects of the GEP in detail.

\subsubsection{Discussion}

The Gaussian Equivalence Property states that the local fields $(\lambda, \nu)$
follow a joint normal distribution if the weights of the student fulfil the
balance
condition~\eqref{eq:balance}. 
In the simplest case, where the $x_i$ are element-wise i.i.d.\ Gaussian, joint
Gaussianity of~$(\lambda, \nu)$ follows immediately from the central limit
theorem (CLT). The CLT can also be applied directly when input vectors $\vx$ are
drawn from a multi-dimensional Gaussian with fixed covariance matrix, which is
the setup of~\citet{yoshida2019datadependence}, In the Hidden Manifold Model
considered here, the inputs $x_i=f(u_i)$ and thus the~$x_i$ are not normally
distributed and have a non-trivial covariance matrix.  The GEP can be seen as a
central limit theorem for sums of weakly correlated random variables, i.e.\
$\lambda^k \sim \sum_i w^k_i x_i$. In this case, the GEP establishes that
$\lambda^k$ is Gaussian, provided that the weights of the student $w_i^k$ do not
align ``too much'' with the weights of the generator $F_{ir}$. More precisely,
we require that a sum such as
$S^k_r = \nicefrac{1}{\sqrt{N}} \sum_i w^k_i F_{ir}$ remains of order 1 in the
thermodynamic limit~\eqref{eq:thermolimit}. The balance condition is a
generalisation of this idea to the higher-order tensors defined in
Eq.~\eqref{eq:balance}.

The expansion from the hidden manifold in~$\mathbb{R}^D$ to the input
space~$\mathbb{R}^N$ can equivalently be seen as a noisy transformation of the
latent variables $\mC$. As far as the local fields~$(\lambda, \nu)$ are
concerned, we can replace the data matrix
$\mX = f\left(\nicefrac{\mC\mF}{\sqrt{D}}\right)$ with the matrix
\begin{equation}
  \label{eq:tildeX}
  \tilde{\mX} \simeq a \ind + b \, \mC \mF + (c-a^2-b^2) \mZ,
\end{equation}
where $\mZ$ is a $P\times N$ matrix with entries drawn i.i.d.\ from the normal
distribution and $\ind$ is a matrix of the same size as $\mX$ with all entries
equal to one. We use the symbol $\simeq$ here to emphasise that the two matrices
on the left and right-hand side have matching first and second moments, and are
hence equivalent in terms of their low-dimensional projections, but are not the
same matrix. We can thus think of the inputs $\mX$ as a noisy transformation of
the latent variables, even without any explicit noise in
Eq.~\eqref{eq:structured-inputs}.

We could also add noise to the expansion explicitly, for example as
$\mX=f\left(\nicefrac{\mF \mC}{\sqrt{D}}\right) + \zeta$ or
$\mX=f\left(\nicefrac{\mF (\mC + \zeta)}{\sqrt{D}}\right)$, where $\zeta$ would
be a noise matrix of appropriate dimensions. These noise injections would indeed
make the data high-dimensional, or, if added directly to the latent variables
$\mC$, result in correlated noise in the input space. In all these cases, the
GEP applies and guarantees that the noise $\zeta$ would only change the
variance of the noise term $\mZ$ that appears after application of the
GET~\eqref{eq:tildeX}. Our results are thus robust to the injection of
additional noise.

Finally, the GEP shows that there is a whole family of activation functions
$f(x)$ (those that have the same values for $a,b$ and~$c$ from Eq.~\ref{eq:abc})
that will lead to equivalent analytical results for the learning curves studied
in this paper.

\paragraph*{Related results in random matrix theory.} A related result to the
Gaussian Equivalence Property was in fact known in random matrix
theory~\cite{hachem2007deterministic, cheng2013spectrum,
  pennington2017nonlinear, louart2018random, seddik2019kernel,
  fan2019spectral}. These works study quantities that can be written as an
integral over the spectral density of the distribution of inputs $x$, such as
the test and training errors for linear regression problem. However, this
spectral density is inaccessible analytically for realistic data. The key idea
is then to re-write these integrals by replacing the intractable spectral
density with the spectral density of a Gaussian model with matching first and
second moments. Using tools from random matrix theory (RMT), one can show that
certain integrals over both spectra coincide. This mapping was explicitly used
in~\cite{mei2019generalization, montanari2019generalization}. In order to apply
tools from RMT, these works have to assume that the weights $F$ of the generator
are random. The advantage of the formulation of the GEP above is that it does
\emph{not} require the matrix $\mF$ to be a random one, and is valid as well for
deterministic or learnt weight matrices, as long as the balanced conditions
stated in Eqs.~(\ref{eq:balance}-\ref{eq:F-cond}) hold. This allows to
generalise these mappings to the case of deterministic features using Hadamard
and Fourier matrices, such as the one used in Fastfood~\cite{le2013fastfood} or
ACDC~\cite{moczulski2016acdc} layers. These orthogonal projections are actually
known to be more effective than the purely random
ones~\cite{yu2016orthogonal}. It also allows generalisation of the analysis in
this paper for data coming from a learned GAN, along the lines
of~\cite{seddik2019kernel, seddik2020random}.  We shall illustrate this point
below by analysing the dynamics of online learning when the feature matrix $\mF$
is a deterministic Hadamard matrix (cf.~Sec.~\ref{sec:hadamard}).

\subsection{The dynamics of stochastic gradient descent for the Hidden Manifold
  Model}
\label{sec:sgd}

To illustrate the power of the GEP, we now analyse the dynamics of stochastic
gradient descent~\eqref{eq:sgdtheta} in the case of \emph{online learning},
where at each step of the algorithm $\mu=1,2,\ldots$, the student's weights are
updated according to Eq.~\eqref{eq:sgdtheta} using a previously unseen sample
$(\vx_\mu, y_\mu)$. This case is also known as one-shot or single-pass SGD. The
analysis of online learning has been performed previously for the canonical
teacher-student model with i.i.d. Gaussian inputs~\cite{Kinzel1990, Biehl1995,
  Saad1995a, Saad1995b, saad2009line}, and has recently been put on a rigorous
foundation~\cite{Goldt2019b}. Here, we generalise this type of analysis to
two-layer neural networks trained on the Hidden Manifold Model.

The goal of our analysis is to track the mean-squared generalisation error of
the student with respect to the teacher at all times,
\begin{equation}
  \label{eq:eg} \epsilon_g(\vtheta, \widetilde{\vtheta}) \equiv
  \frac{1}{2}\EE {\left[ \phi(\vx, \vtheta) - \tilde{y}^*\right]}^2,
\end{equation} where the expectation $\EE$ denotes an average over an input
drawn from the hidden manifold model, Eq.~\eqref{eq:structured-inputs}, with
label ${y}_\mu^* = \phi(\vc_\mu, \widetilde{\vtheta}^*)$ given
by a teacher network with fixed weights~$\tilde{\vtheta}^*$ acting on the latent
representation, Eq.~\eqref{eq:teacher2}. Note that the weights of both the student
and the teacher, as well as the feature matrix $F_{ir}$, are held fixed when
taking the average, which is an average only over the coefficients $c_{\mu r}$.
To keep notation compact, we focus on cases where $a=\EE f(u)=0$
in~\eqref{eq:abc}, which leads to $\tilde \lambda^k = \lambda^k$
in~\eqref{eq:lambda_centered}. A generalisation to the case where $a\neq0$ is
straightforward but lengthy.

We can make progress with the high-dimensional average over $\vx$ in
Eq.~\eqref{eq:eg} by noticing that the input $\vx$ and its latent representation
$\vc$ only enter the expression via the local fields~$\nu^m$ and~$\lambda^k$,
Eqs.~(\ref{eq:nu},~\ref{eq:lambda}):
\begin{equation}
  \label{eq:eg_preactivations}
  \epsilon_g(\vtheta, \widetilde{\vtheta}) =  \frac{1}{2}\EE {\left( \sum_{k}^K
      v^k g(\lambda^k) - \sum_{m}^M \tilde v^m \tilde g(\nu^m)\right)}^2
\end{equation}
The average is now taken over the joint distribution of local fields
$\{\lambda^{k=1,\ldots,K}, \nu^{m=1,\ldots,M}\}$. The key step is then to invoke
the Gaussian Equivalence Property~\ref{gep}, which guarantees that this
distribution is a multivariate normal distribution with covariances
$Q^{k\ell}, R^{km}$, and $T^{nm}$~(\ref{eq:Q}--\ref{eq:T}). Depending on the
choice of $g(x)$ and $\tilde g(x)$, this makes it possible to compute the
average analytically; in any case, the GEP guarantees that we can
express~$\epsilon_g(\vtheta, \widetilde{\vtheta})$ as a function of only the
second-layer weights $v^k$ and $\tilde v^m$ and the matrices
$Q^{k\ell}, R^{km}$, and $T^{nm}$, which are called \emph{order parameters} in
statistical physics~\cite{Kinzel1990, Biehl1995, Saad1995a}:
  \begin{equation}
    \label{eq:eg-order-parameters} \lim_{N, D \to\infty}
\epsilon_g(\vtheta, \widetilde{\vtheta}) =
\epsilon_g(Q^{k\ell}, R^{kn}, T^{nm}, v^k, \tilde v^m)
\end{equation} where in taking the limit, we keep the ratio
$\delta\equiv\nicefrac{D}{N}$ finite (see Eq.~\ref{eq:thermolimit}). 

\subsubsection{The physical interpretation of the order parameters}

The order parameter $R^{kn}$, defined in (\ref{eq:R}, \ref{eq:S}), measures the
similarity between the action of the $k$th student node on an input $\vx_\mu$
and the $n$th teacher node acting on the corresponding latent representation
$\vc_\mu$. In the canonical teacher-student setup, where (i) the input covariance
is simply $\EE x_i x_j=\delta_{ij}$ and (ii) labels are generated by the teacher
acting directly on the inputs $\vx$, it can be readily verified that the overlap
has the simple expression
$R^{kn} \equiv \EE \lambda^k \nu^n \sim \vw^k \tilde{\vw}^n$. It was hence
called the teacher-student overlap in the previous literature. In the HMM,
however, where teacher and student network act on different vector spaces, it is
not \emph{a priori} clear how to express the teacher-student overlap in suitable
order parameters.

The matrix $Q^{k\ell}= \left[c - b^2\right]W^{k\ell}+ b^2\Sigma^{k \ell}$
quantifies the similarity between two student nodes $k$ and $\ell$, and has two
contributions: the latent student-student overlap $\Sigma^{k\ell}$, which
measures the overlap of the weights of two students nodes after they have been
projected to the hidden manifold, and the ambient student-student overlap
$W^{k\ell}$, which measures the overlap between the vectors
$\vw^k, \vw^\ell\in\mathbb{R}^N$. Finally, we also have the overlaps of the
teacher nodes are collected in the matrix $T^{nm}$, which is \emph{not}
time-dependent, as it is a function of the teacher weights only.

\subsubsection{Statement of the equations of motion}

We have derived a closed set of equations of motion that describe the dynamics
of the order parameters~$R^{km}, \Sigma^{k\ell}, W^{k\ell}$ and~$v^k$ when the
student is trained using online SGD~\eqref{eq:sgdtheta}. We stress at this point
that in the online learning, at each step of SGD a new sample is given to the
network. The weights of the network are thus uncorrelated to this sample, and
hence the GEP can be applied at every step. This is in contrast with the
full-batch learning where the correlations between weights and inputs have to be
taken into account explicitly~\cite{montanari2019generalization}. Integrating
the equations of motion and substituting the values of the order parameters into
Eq.~\eqref{eq:eg-order-parameters} gives the generalisation error at all
times. Here, we give a self-contained statement of the equations, and relegate
the details of the derivation to Appendix~\ref{app:derivation};

A key object in our analysis is the spectrum of the matrix \begin{equation}
  \Omega_{rs}\equiv\usN \sum_i F_{ir} F_{is}.
\end{equation}
We denote its eigenvalues and corresponding eigenvectors by $\rho$ and
$\psi_\rho$, and write $p_\Omega(\rho)$ for the distribution of eigenvalues. It
turns out that it is convenient to rewrite the teacher-student overlap as an
integral over a density~$r^{km}(\rho, t)$, which is a function of $\rho$ and of
the normalised number of steps $t=P/ N$, which can be interpreted as a
continuous time-like variable. We then have \begin{equation}
  \label{eq:R_int}
  R^{km}(t)= b \;\int \dd \rho\; p_{\Omega}(\rho)\;
  r^{km}(\rho, t).
\end{equation}
with $b\equiv \langle u f(u) \rangle$~\eqref{eq:abc}. In the canonical
teacher-student model, introducing such a density and the integral that comes
with it is not necessary, but in the HMM is a consequence of the non-trivial
correlation matrix $\EE x_i x_j$ between input elements. Adopting the convention
that the indices $j,k,\ell,\iota=1,\ldots,K$ always denote \emph{student} nodes,
while $n,m=1,\ldots,M$ are reserved for teacher hidden nodes. 

\begin{widetext}
The equation of
motion of the teacher-student density can then be written as
\begin{align}
  \begin{split}
    \label{eq:eom-r}
    \frac{\partial r^{km}(\rho, t)}{\partial t} = -\frac{\eta}{\delta} v^k
    d(\rho) & \left( r^{km}(\rho) \sum_{j\neq k}^K v^j \frac{ Q^{jj}\; I_3(k, k,
        j) -Q^{kj} I_3(k, j, j)} {Q^{jj}Q^{kk}-(Q^{kj})^2}\right.  + \sum_{j\neq
      k}^K v^j r^{j m}(\rho) \frac{ Q^{kk} I_3(k, j,
      j) - Q^{kj}\; I_3(k, k, j) } {Q^{jj}Q^{kk}-(Q^{kj})^2} \\
    &\quad + v^k
    r^{k m}(\rho) \frac{1}{Q^{kk}} I_3(k, k, k) 
    - r^{km}(\rho) \sum_{n}^M \tilde v^n \frac{T^{nn} I_3(k, k, n)
      - R^{kn} I_3(k, n, n) } {Q^{kk}T^{nn}-(R^{kn})^2} \\
    & \quad \left. - \frac{b
        \rho}{d(\rho)} \sum_{n}^M \tilde v^n \tilde T^{nm} \frac{Q^{kk} I_3(k,
        n, n) -R^{kn} I_3(k, k, n)} {Q^{kk}T^{nn}-(R^{kn})^2} \right),
  \end{split}
\end{align}
where $d(\rho)=(c-b^2) \delta + b^2 \rho$. The teacher-teacher overlap
$T^{nm}\equiv\EE \nu^n \nu^m$~\eqref{eq:T}, while $\tilde T^{nm}$ is the overlap
of the teacher weights after rotation into the eigenbasis of $\Omega_{rs}$,
weighted by the eigenvalues $\rho$:
\begin{equation}
  \label{eq:reweighted-tilde-W}
  \tilde T^{mn} \equiv \usR\sum_\tau \rho_\tau \tilde \omega_\tau^m\tilde
  \omega_\tau^n; \qquad  \text{where} \quad \tilde{\omega}_\tau^m=\ussqR \sum_r \tilde w_r^m \psi_{\tau r}.
\end{equation}
In writing the equations, we used the following shorthand for the
three-dimensional Gaussian averages
\begin{equation}
  \label{eq:I3}
  I_3(k, j, n) \equiv \EE\left[ g'(\lambda^k) \lambda^j
    \tilde{g}(\nu^n) \right] ,
\end{equation}
which was introduced by Saad~\& Solla~\cite{Saad1995a}. Arguments passed to
$I_3$ should be translated into local fields on the right-hand side by using the
convention where the indices $j,k,\ell,\iota$ always refer to student local fields
$\lambda^j$, etc., while the indices $n,m$ always refer to teacher local fields
$\nu^n$, $\nu^m$. Similarly,
\begin{equation}
  I_3(k, j, j) \equiv \EE\left[ g'(\lambda^k) \lambda^j g(\lambda^j) \right],
\end{equation}
where having the index $j$ as the third argument means that the third factor is
$g(\lambda^j)$, rather than $\tilde{g}(\nu^m)$ in Eq.~\eqref{eq:I3}. The average
in Eq.~\eqref{eq:I3} is taken over a three-dimensional normal distribution with
mean zero and covariance matrix
\begin{equation}
  \label{eq:Phi3}
  \Phi^{(3)}(k, j, n) = \begin{pmatrix}
    Q^{kk} &  Q^{kj} & R^{kn} \\
    Q^{kj} &  Q^{jj} & R^{jn} \\

    R^{kn} &  R^{jn} & T^{nn}
  \end{pmatrix}.
\end{equation}

For the latent student-student overlap $\Sigma^{k \ell}$, it is again convenient
to introduce the density $\sigma^{k\ell}(\rho, t)$ as
\begin{equation}
  \label{eq:Sigma_int}
  \Sigma^{k\ell}(t)= \;\int \dd \rho\;  p_{\Omega}(\rho)\;  \sigma^{k\ell}(\rho, t),
\end{equation}
whose equation of motion is given by
\begin{align}
  \begin{split}
    \label{eq:eom-sigma}
    \frac{\partial \sigma^{k\ell}(\rho, t)}{\partial t} = -\frac{\eta}{\delta} &
    \left(d(\rho) v^k \sigma^{k\ell}(\rho) \sum_{j\neq k}v^j \frac{Q^{jj} I_3(k,
        k, j) -Q^{kj} I_3(k, j, j)} {Q^{jj}Q^{kk}-(Q^{kj})^2} + v^k \sum_{j\neq
        k} v^j d(\rho) \sigma^{j \ell}(\rho) \frac{ Q^{kk} I_3(k, j, j) - Q^{kj}
        I_3(k, k, j) }
      {Q^{jj}Q^{kk}-(Q^{kj})^2} \right. \\
    &\qquad + d(\rho) v^k \sigma^{k \ell}(\rho)v^k \frac{1}{Q^{kk}} I_3(k, k, k)
    - d(\rho) v^k \sigma^{k\ell}(\rho) \sum_n \tilde v^n\frac{T^{nn} I_3(k, k,
      n) - R^{kn} I_3(k, n, n) }
    {Q^{kk}T^{nn}-(R^{kn})^2} \\
    &\qquad -b \rho v^k \sum_n \tilde v^n r^{\ell n}(\rho) \frac{Q^{kk} I_3(k,
      n, n) -R^{kn}
      I_3(k, k, n)}{Q^{kk}T^{nn}-(R^{kn})^2} \\
    & \qquad + \text{all of the above with } \ell\to k, k\to\ell\Bigg).\\
    & \hspace*{-1em} + \eta^2 v^k v^\ell \left[(c - b^2)\rho +
      \frac{b^2}{\delta}\rho^2\right] \left(
      \sum_{j,\iota}^K v^j v^\iota I_4(k, \ell, j, \iota)\right. \\
    & \hspace*{12.5em} \left. - 2 \sum_j^K \sum_m^M v^j \tilde{v}^m I_4(k, \ell,
      j, m) + \sum_{n,m}^M \tilde{v}^n\tilde{v}^m I_4(k, \ell, n, m) \right).
  \end{split}
\end{align}
This equation involves again the integrals $I_3$ and a four-dimensional average
that we denote
\begin{equation}
  \label{eq:I4}
  I_4(k, \ell, j, n) \equiv \EE\left[ g'(\lambda^k)
    g'(\lambda^\ell) g(\lambda^j) g(\nu^n)\right].
\end{equation}
using the same notational conventions as for $I_3$, so the four-dimensional
covariance matrix reads
\begin{equation}
  \label{eq:Phi4}
  \Phi^{(4)}(k, \ell, j, n) = \begin{pmatrix}
    Q^{kk} &  Q^{k\ell} &  Q^{kj} & R^{kn} \\
    Q^{k \ell} &  Q^{\ell \ell} & Q^{\ell j} & R^{\ell n} \\
    Q^{kj} &  Q^{\ell j} & Q^{jj} & R^{jn} \\

    R^{kn} &  R^{\ell n} & R^{jn} & T^{nn}
  \end{pmatrix}.
\end{equation}

\begin{figure*}[ht!]
  \centering \includegraphics[width=.75\textwidth]{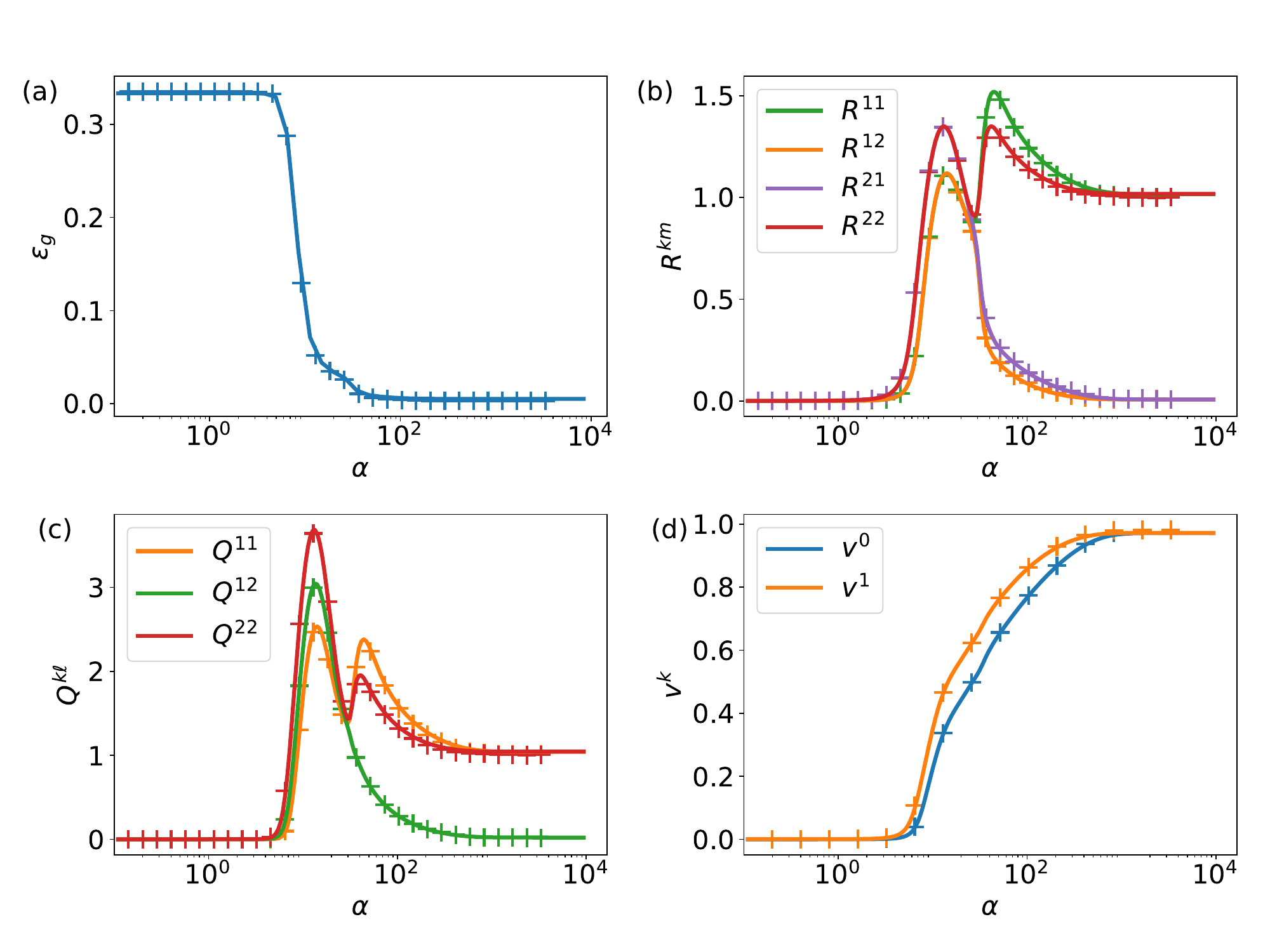}
  \caption{\label{fig:sim_vs_ode} \textbf{The analytical description of the
      hidden manifold generalisation dynamics matches experiments even at
      moderate system size.}  We plot the time evolution of the generalisation
    error $\epsilon_g(\alpha)$ and the order-parameters $R^{km}$, $Q^{k\ell}$
    and 2nd layer weights $v^k$ obtained by integration of the ODEs (solid) and
    from a single run of SGD (crosses).  \emph{Parameters:}
    $g(x)=\erf(x/\sqrt{2}), N=10000, D=100, M=2, K=2, \eta=0.2, \tilde{v}^m=1$.}
\end{figure*}

The equation of motion for the ambient student-student overlap $W^{k\ell}$ can
be written directly:
\begin{align}
  \begin{split}
    \label{eq:eom-W}
    \frac{\dd W^{k\ell}(t)}{\dd t} = &- \eta v^k \left(\sum_{j}^K v^j I_3(k, \ell, j) -
      \sum_n \tilde v^n I_3(k, \ell, n) \right) - \eta v^\ell \left(\sum_{j}^K
      v^j I_3(\ell, k, j) - \sum_n\tilde v^n I_3(\ell, k, n) \right)\\
    & + c\eta^2 v^k v^\ell \left( \sum_{j,a}^K v^j v^a I_4(k, \ell, j, a) - 2 \sum_j^K \sum_m^M
      v^j \tilde v^m I_4(k, \ell, j, m) + \sum_{n,m}^M \tilde v^n \tilde v^m I_4(k, \ell, n, m) \right).
  \end{split}
\end{align}
\end{widetext}
Finally, the ODE for the second-layer weights $v^k$ is straightforwardly given
by
\begin{equation}
  \label{eq:eom-v}
  \frac{\dd v^k}{\dd t} = \eta \left[ \sum_n^M \tilde v_n I_2(k, n) - \sum_j^K
    v^j I_2(k, j) \right],
\end{equation}
where we have introduced the final short-hand
$I_2(k, j) \equiv \EE\left[ g(\lambda^k) g(\lambda^j)\right]$.

\subsubsection{Solving the equations of motion}

The equations of motion are valid for any choice of $f(x)$, $g(x)$ and
$\tilde g(x)$. To solve the equations for a particular setup, one needs to
compute the three constants $a, b, c$~\eqref{eq:abc} and the averages~$I_3$
and~$I_4$~(\ref{eq:I3},~\ref{eq:I4}).
Choosing $g(x)=\tilde g(x)=\erf(x/\sqrt{2})$, they can be computed
analytically~\cite{Biehl1995}. Finally, one needs to determine the spectral
density of the matrix $\Omega_{rs}$. When drawing the entries of the feature
matrix $F_{ir}$ i.i.d.\ from some probability distribution with finite second
moment, the limiting distribution of the eigenvalues $p_\Omega(\rho)$ in the
integral~\eqref{eq:R_int} and~\eqref{eq:Sigma_int} is the Marchenko-Pastur
distribution~\cite{marchenko1967distribution}:
\begin{equation}
  \label{eq:pMP}
  p_{\mathrm{MP}}(\rho)=\frac{1}{2\pi
    \delta}\frac{\sqrt{(\rho_{\max}-\rho)(\rho-\rho_{\min})}}{\rho},
\end{equation}
where $\rho_{\min}=\left(1-\sqrt{\delta}\right)^2$ and
$\rho_{\max}=\left(1+\sqrt{\delta}\right)^2$, where we recall that
$\delta\equiv D/N$. Note that our theory crucially also applies to non-random
matrices; we will visit such an example in Sec.~\ref{sec:hadamard}, where we
also discuss the importance of this use case. A complete numerical
implementation of the equations of motion is available on GitHub~\cite{code}.

We illustrate the content of the equations of motion in Fig.~\ref{fig:sim_vs_ode}, where
we plot the dynamics of the generalisation error, the order parameters $R^{km}$,
$\Sigma^{k \ell}$ and~$W^{k\ell}$ and the second-layer weights $v^k$ obtained from a
single experiment with $N=10000, D=100, M=K=2$, starting from small initial weights
(crosses). The elements of the feature matrix are drawn i.i.d.\ from the standard normal
distribution, as are the elements of the latent representations $\vc$. The solid lines
give the dynamics of these order parameters obtained by integrating the equations of
motion. The initial conditions for the integration of the ODEs were taken from the
simulation. The ODE description matches this single experiment really well even at
moderate system sizes. For Fig.~\ref{fig:sim_vs_ode}, our choice of $N$ and $D$ results
in $\delta=0.01$, and we checked that the ODEs and simulations agree for various values
of $\delta$, \emph{cf.} Fig.~\ref{fig:delta}. 

\subsubsection{Discussion}

\citet{yoshida2019datadependence} recently analysed online learning for
two-layer neural networks~\eqref{eq:phi} trained on Gaussian inputs, with a
two-layer teacher acting directly on the inputs $\vx$. Their approach consists
of introducing distinct order parameters $R^{km}_{(i)}$, $Q^{kl}_{(i)}$ etc.\
for each distinct eigenvalue of the input covariance matrix $\Omega$. They
analysed their equations for covariance matrices with one and two distinct
eigenvalues. Here, we first introduced the GEP~\eqref{gep} to show that inputs
which are not normally distributed, such as
$\mX = f\left( \nicefrac{\mC\mF}{\sqrt{D}} \right)$, can be reduced to an
effective Gaussian model as far as the dynamics of learning are
concerned. Furthermore, the description of the learning dynamics we just
discussed allows us to analyse inputs with any well-defined spectral density
with just a single set of order parameters $Q^{kl}$, $R^{km}$ and $T^{nm}$. This
is made possible by introducing the integral over the order parameter densities
$r^{km}(\rho)$ etc. As we will see below, this integral can actually be solved
for small $\delta$, which simplifies the equations of motion considerably and
allowing for a detailed analysis (\emph{cf.}  Sec.~\ref{sec:small-delta}).

We lastly comment on the role of the dimensionality in our setup. Inspection of
the test error~\eqref{eq:eg_preactivations} reveals that a student has to
recover the local fields of the teacher $\nu^m$ in order to perform well (if she
has the same activation function as the teacher). If the student was trained
directly on the latent variables $\mC$, she could recover these local fields
perfectly and we would be back in the setup of~\citet{Saad1995a}. In the HMM,
the student is only given the high-dimensional inputs $\mX$, which can be seen
as a noisy projection of the latent variables $\mC$~\eqref{eq:tildeX}. The high
dimensionality of the student inputs is thus a constraint that must be overcome
to learn well, because projection to high dimensions is part of the
\emph{data-generating} process. This is to be contrasted with setups like random
features~\cite{rahimi2008random, rahimi2009weighted} or certain neural circuits
in sensory processing~\cite{babadi2014sparseness, kadmon2016optimal}, where
projection of the inputs to higher-dimensional spaces is part of the
\emph{analysis} and generally simplifies the subsequent learning problem.

\section{Analytical Results}
\label{sec:results}

The goal of this section is to use the analytic description of online learning
to analyse the dynamics and the performance of two-layer neural networks in
detail.

\subsection{Specialisation of student nodes in the HMM}
\label{sec:specialisation}

An intriguing feature of both the canonical teacher-student setup and the hidden
manifold model is that they both exhibit a \emph{specialisation} phenomenon. Upon closer
inspection of the time evolution of the order parameter $R^{km}$ in
Fig.~\ref{fig:sim_vs_ode} (b), we see that during the initial decay of the
generalisation error up to a time $t=\nicefrac{P}{N}\sim10$, all elements of the matrix
$R^{km}$ are comparable.
In other words, the correlations between the pre-activation
$\lambda^k$ of any student node and the pre-activation $\nu^m$ of any teacher
node is roughly the same. As training continues, the student nodes
``specialise'': the pre-activation of one student node becomes strongly
correlated with the pre-activation of only a single teacher node. In the example
shown in Fig.~\ref{fig:sim_vs_ode}, we have strong correlations between the
pre-activation of the first student and the first teacher node ($R^{11}$), and
similarly between the second student and second teacher node ($R^{22}$). The
specialisation of the teacher-student correlations is concurrent to a
de-correlation of the student units, as can be seen from the decay
of the off-diagonal elements of the latent and ambient student-student overlaps
$\Sigma^{k\ell}$ and $W^{k\ell}$, respectively (bottom of
Fig.~\ref{fig:sim_vs_ode}). Similar specialisation transitions have been
observed in the canonical teacher-student setup for both online and batch
learning~\cite{Riegler1995, Saad1995a}; see~\citet{Engel2001} for a
review.

\subsection{Using non-random feature matrices}
\label{sec:hadamard}

Our first example of the learning dynamics in Sec.~\ref{sec:specialisation} was
for a feature matrix $\mF$ whose entries were taken i.i.d.\ from the normal
distribution. The derivation of the ODEs for online learning however does not
require that the feature matrix $\mF$ be random; instead, it only requires the
balance condition stated in Eq.~\eqref{eq:balance} as well as the normalisation
conditions~\eqref{eq:F-cond}. To illustrate this point, we plot examples of
online learning dynamics with $M=K=2$ in Fig.~\ref{fig:sim_vs_ode_hadamard},
with the prediction from the ODE as solid lines and the result of a single
simulation with crosses. In blue, we show results where the elements of $F_{ir}$
were drawn i.i.d.\ from the standard normal distribution. For the experiment in
orange, $\mF=\mH_N$, where $\mH_N$ is a Hadamard
matrix~\cite{hadamard1893resolution}. Hadamard matrices are $N\times N$
matrices, hence $\delta=1$, and are popular in error-correcting codes such as
the Reed-Muller code~\cite{muller1954application, reed1953class}. They can be
defined via the relation
\begin{equation}
  \mH_N \mH_N^\top = N \mathbb{I}_N,
\end{equation}
where $\mathbb{I}_N$ is the $N\times N$ identity matrix. As we can see from
Fig.~\ref{fig:sim_vs_ode_hadamard}, the ODEs capture the generalisation dynamics
of the Hadamard-case just as well.

\begin{figure}[t]
  \centering
  \includegraphics[width=\linewidth]{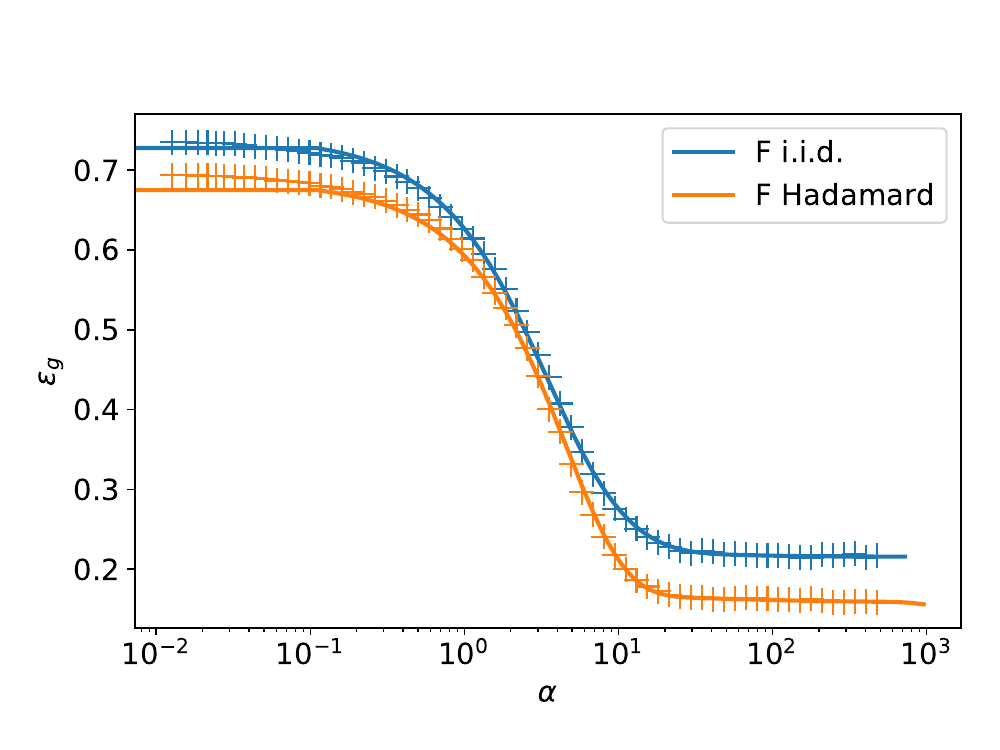}
  \caption{\label{fig:sim_vs_ode_hadamard}\textbf{The ODE analysis is
      asymptotically correct for non-random feature matrices $\mF$.}  We plot
    the time evolution of the generalisation error $\epsilon_g$ obtained by
    integration of the ODEs (solid) and from a single run of
    SGD~\eqref{eq:sgdtheta} (crosses) for two different matrices $\mF$: (i)
    elements $F_{ir}$ are drawn i.i.d.\ from the standard normal distribution
    (blue); (ii) $\mF$ is a Hadamard matrix~\cite{hadamard1893resolution}
    $f(x)=\mathrm{sgn}(x), g(x)=\tilde{g}(x)=\erf(x / \sqrt{2}), N=1023, D=1023,
    M=2, K=2, \eta=0.2, \tilde{v}^m=1$.}
\end{figure}

\subsection{The limit of small latent dimension}
\label{sec:small-delta}

The key technical challenge in analysing the analytical description of the
dynamics is handling the integro-differential nature of the equations. 
We can simplify the equations in the limit of small $\delta\equiv D /
N$. Numerical integration of the equations reveals that at convergence, the
continuous order parameter densities $r^{km}(\rho)$ and $\sigma^{k \ell}(\rho)$
are approximately constant:
\begin{equation}
  \label{eq:constant-r}
  r^{km}(\rho) = r^{km}; \quad \sigma^{k \ell}(\rho) = \sigma^{k \ell}
\end{equation}
This is a key observation, because making the ansatz~\eqref{eq:constant-r}
allows us to transform the integro-differential equations for the dynamics of
$r^{km}(\rho, t)$~\eqref{eq:eom-r} and
$\sigma^{k\ell}(\rho, t)$~\eqref{eq:eom-sigma} into first-order ODEs, provided
we can perform the integral over the eigenvalue distribution $p_\Omega(\rho)$ in
Eqs.~\eqref{eq:R_int} and~\eqref{eq:Sigma_int} analytically.  This is for
example the case if we take the elements of the feature matrix $\mF$ i.i.d.\
from any probability distribution with bounded second moment, in which case
$p_\Omega(\rho)$ is given by the Marchenko-Pastur
distribution~\eqref{eq:pMP}. We will focus on this case for the remainder of
this section.

Let us note that the regime of small delta is also the relevant regime for image
data sets such as MNIST and CIFAR10, whose $\delta$ has been estimated
previously to be around $\delta_{\mathrm{MNIST}}\sim ~14/784$ and
$\delta_{\mathrm{CIFAR10}}\sim 35 / 3072$, respectively~\cite{Grassberger1983,
  Costa2004, Levina2004a, Spigler2019}; \emph{cf.} our discussion in the
Introduction.

\subsubsection{The effect of the latent dimension $D=\delta N$}
\label{sec:eg-vs-delta}

\begin{figure}
  \includegraphics[width=\linewidth]{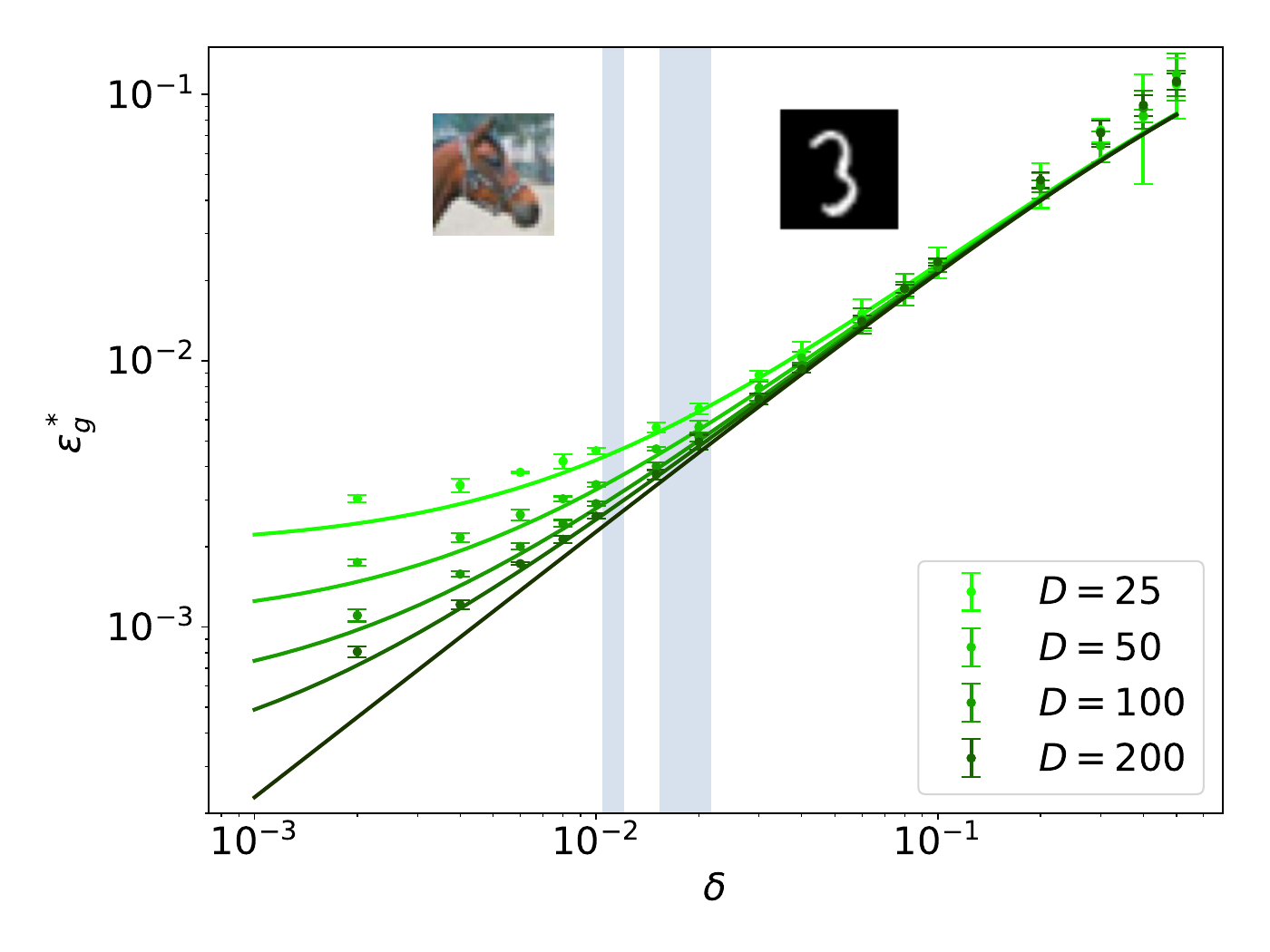}
  \caption{\label{fig:delta} \textbf{The impact of the latent dimension
      $\delta\equiv D/N$}. We plot the final test error $\epsilon_g^*$ of
    sigmoidal students trained on the hidden manifold model with three different
    intrinsic dimensions $D$ as a function of $\delta=D/N$, where $N$ is the
    input dimension. The average is taken over five runs. The solid lines are
    the asymptomatic theoretical predictions derived in
    Sec.~\ref{sec:eg-vs-delta}. The shaded bars indicate experimental
    estimates~\cite{Grassberger1983, Costa2004, Levina2004a, Spigler2019} for
    $\delta$ for the CIFAR10 data set (left) and the MNIST data set (right).
    $f(x)=\mathrm{sgn}(x), g(x)=\tilde{g}(x)=\erf(x/\sqrt{2}), M=K=2, \eta=0.2,
    \tilde v^m=1$}
\end{figure}

As a first application of this approach, we analyse the dependence of the
asymptotic test error $\epsilon_g^*$ on the latent dimension $D$ of the hidden
manifold when teacher and student have the same number of hidden nodes, $K=M$.

From inspection of the form of the order parameters after integrating the full
set of ODEs until convergence, we made the following ansatz for the overlap
matrices:
\begin{align}
    \label{eq:ansatz_matched}
          \Sigma^{k\ell} &= \begin{cases}
            S & \quad k=\ell, \\
            s & \quad \mathrm{otherwise},
          \end{cases}\quad
    W^{k\ell} &= \begin{cases}
      W & \quad k=\ell, \\
      w & \quad \mathrm{otherwise},
    \end{cases}\\
          T^{nm} &= \begin{cases}
            T & \quad n=m, \\
            t & \quad \mathrm{otherwise},
          \end{cases}\quad
    \tilde{T}^{nm} &= \begin{cases}
      \tilde{T} & \quad n=m, \\
      \tilde{t} & \quad \mathrm{otherwise},
    \end{cases} \\
      R^{km} &= \begin{cases}
      R & \quad k=m, \\
      r & \quad \mathrm{otherwise}
    \end{cases} 
    & v^k = v; \quad A^m = A
\end{align}
Substituting this ansatz into the ODEs allowed us to derive closed-form
expressions for ODEs governing the dynamics of seven order parameters
$R, r, S, s, W, w$ and $v$ that are valid for small $\delta$ and for any
$K=M$. The teacher-related order parameters $T, t, \tilde{T}$ and $\tilde{t}$
describe the teacher and are constants of the motion. They have to be chosen to
reflect the distribution from which the weights of the teacher network are drawn
in an experiment. The full equations of motion are rather long, so instead of
printing them here in full we provide a Mathematica notebook for
reference~\cite{code}.

The key idea of our analytical approach is to look for fixed points of this ODE
system and to substitute the values of the order parameters at those fixed
points into the expression for the generalisation
error~\eqref{eq:eg-order-parameters}. To understand the structure of the fixed
points of the ODEs, we ran a numerical fixed point search of the ODEs from 1000
initial values for the order parameters drawn randomly from the uniform
distribution. We found two types of solution. First, there exist solutions of
the form $R=r$, $S=s$ and $W=w$. This solution is a saddle point of the
equations and is thus not a stable fixed point of the dynamics. Instead, it
corresponds to a well-known ``unspecialised'' phase, when networks with $K>1$
hidden nodes have not yet specialised and hence achieve only the performance of
a network with $K=1$ hidden unit (cf.\ our discussion in
Sec.~\ref{sec:specialisation}). The learning dynamics approaches this saddle
point at an intermediate stage of learning, but finally drifts away from it
towards a ``specialised'' solution. This second solution corresponds to the
asymptotic fixed point of the learning dynamics where the student has
specialised, \emph{i.e.} we have $R$ large and $r$ small, etc. Substituting the
values of the order parameters of this solution into
Eq.~\eqref{eq:eg-order-parameters} yields the asymptotic generalisation error of
a student.

Making this argument rigorous would require a proof of global convergence of the
coupled, non-linear integro-differential equations of motion
(\ref{eq:eom-r},~\ref{eq:eom-sigma},~\ref{eq:eom-W},~\ref{eq:eom-v}) from random
initial conditions. This is a challenging mathematical problem that remains
open, despite some recent progress for two-layer neural networks with finite $N$
and large hidden layer~\cite{Soltanolkotabi2018, Mei2018, Rotskoff2018,
  Chizat2018, Sirignano2018}. Thus all predictions in this way ultimately need
to be compared to simulations to verify their accuracy.

We show the results of this analysis in Fig.~\ref{fig:delta}. The crosses are
experimental results for which we trained networks with $M=K=2$ on data from a
hidden manifold with latent dimension $D=25, 50, 100$ and 200, choosing the
input dimension $N$ to obtain the range of $\delta$ desired for each curve. We
plot the asymptotic error averaged over five runs with dots; error bars
indicated two standard deviations. The lowest solid line in Fig.~\ref{fig:delta}
is the theoretical prediction obtained by the procedure just explained when
assuming that $T=1, t=0, \tilde T = 1, \tilde t =0$.

While the experimental results are approaching the theoretical line as the
latent dimension $D$ increases, there are qualitative differences in the shape
of the $\delta$ dependence for small $\delta$. These differences arise due to
the following finite-size effect. While it is numerically easy to enforce
$T=1, t=0$ by orthogonalising the teacher weight matrix, it is not possible to
explicitly control the re-weighted teacher-teacher overlap
$\tilde T^{nm}$~\eqref{eq:reweighted-tilde-W}. The deviation of $\tilde T^{nm}$
from the identity lead to the deviations we see at small $\delta$. We
demonstrate this in Fig.~\ref{fig:delta} by also plotting theoretical
predictions for $\tilde T = 1-x, \tilde t=x$ and choosing $x=1/D$. These curves
match the experiments much better. Plotting the data with a linear y-scale (not
shown) reveals that the solution obtained making the small-$\delta$
ansatz~\eqref{eq:constant-r} is valid until $\delta\sim0.2$.

\subsubsection{Learning rate $\eta$}

\begin{figure}
  \includegraphics[width=\linewidth]{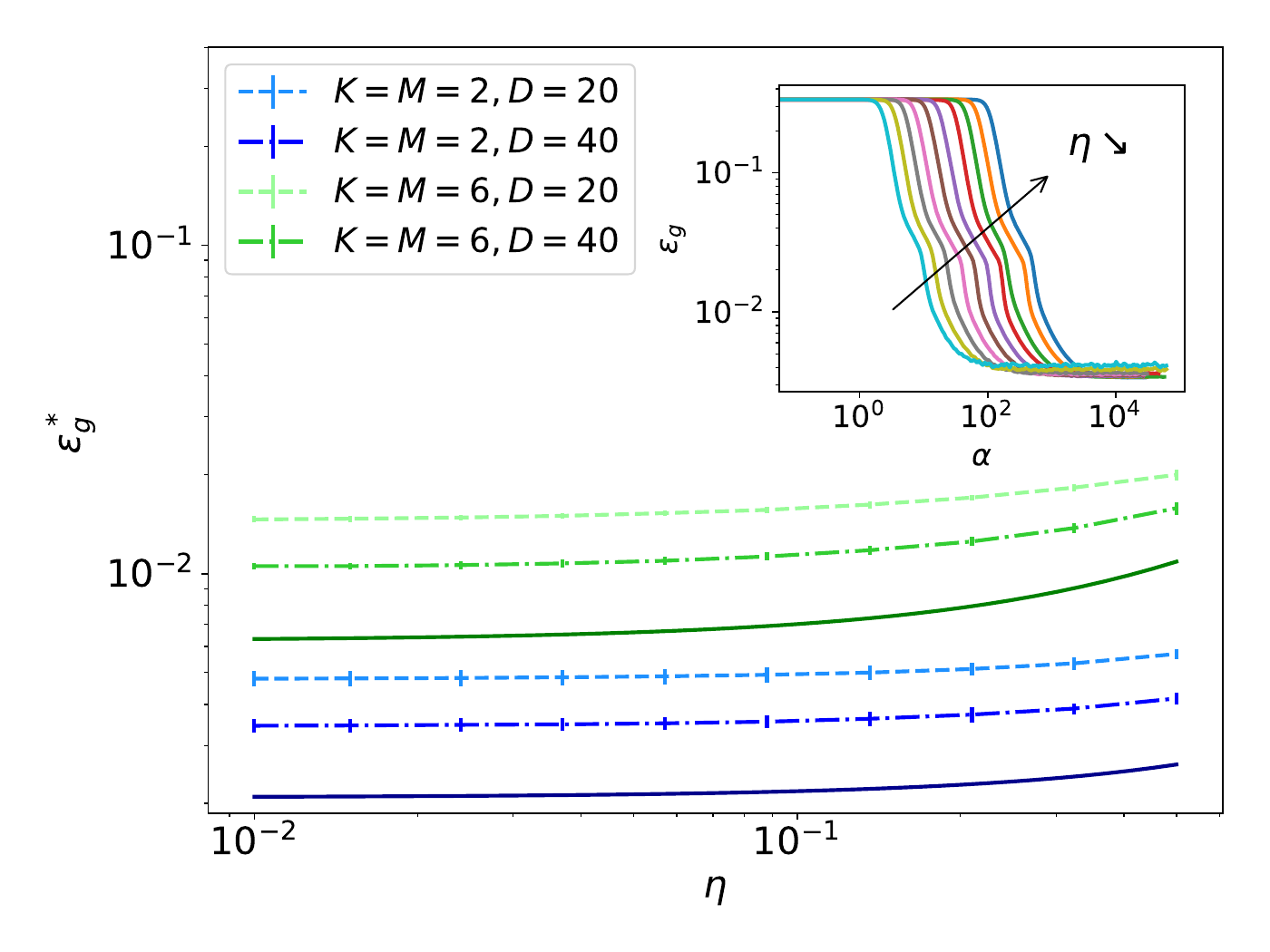}
  \caption{\label{fig:lr} \textbf{The impact of the learning rate $\eta$}. We
    plot the final test error $\epsilon_g^*$ of sigmoidal students trained on
    the hidden manifold model for a range of learning rates $\eta$ for sigmoidal
    networks with $K=M=2$ (blue) and $K=M=6$ (green). We repeated the
    experiments for two values of $D$, choosing $N$ such that
    $\delta=D/N=0.01$. \emph{(Inset)} Generalisation dynamics during training
    ($K=M=2$).  \emph{Parameters:}
    $f(x)=\mathrm{sgn}(x), g(x)=\tilde{g}(x)=\erf(x/\sqrt{2}), \delta=0.01,
    \tilde v^m= 1, K=M$.}
\end{figure}

\begin{figure*}
  \centering
  \includegraphics[width=.45\linewidth]{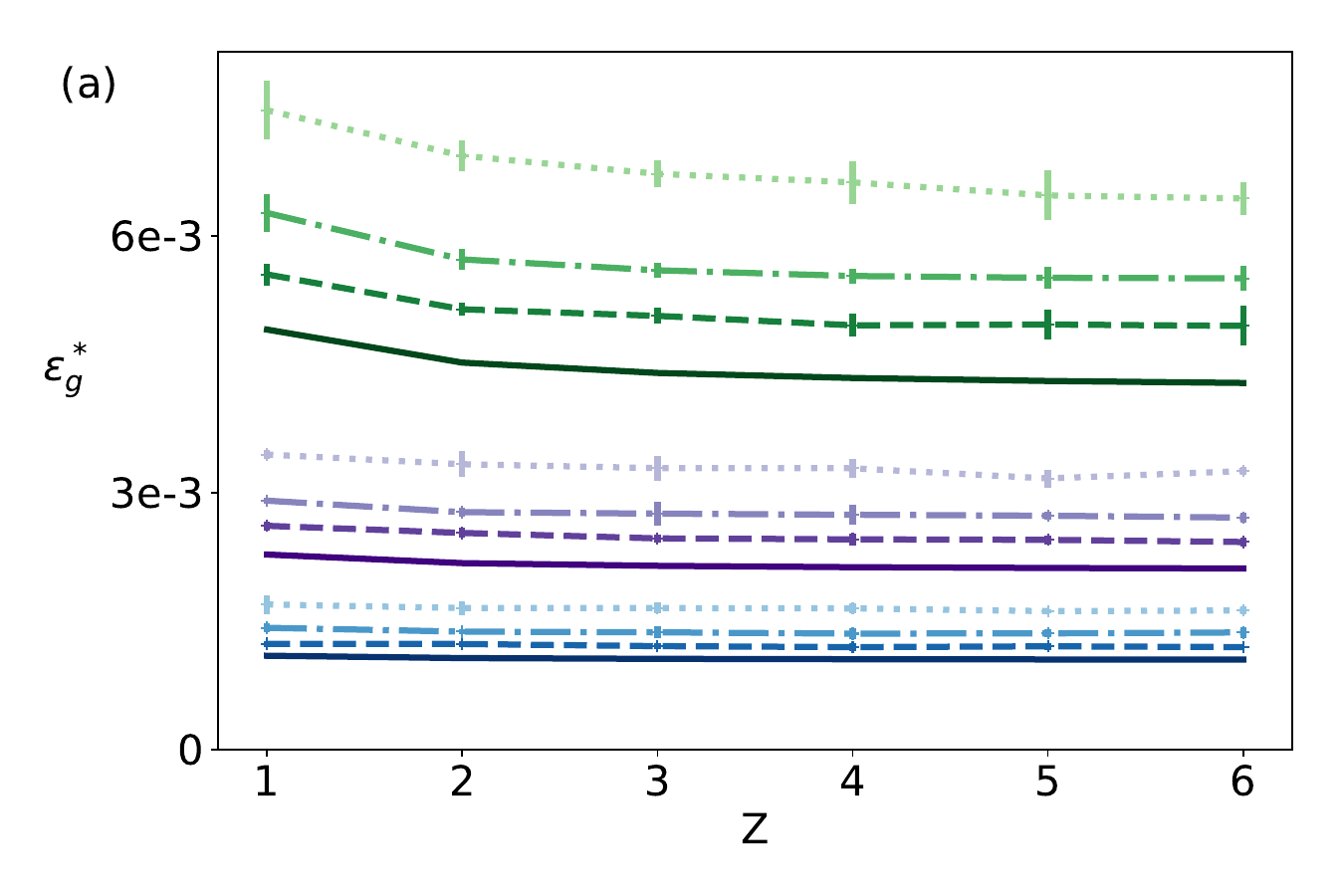}%
  \includegraphics[width=.25\linewidth]{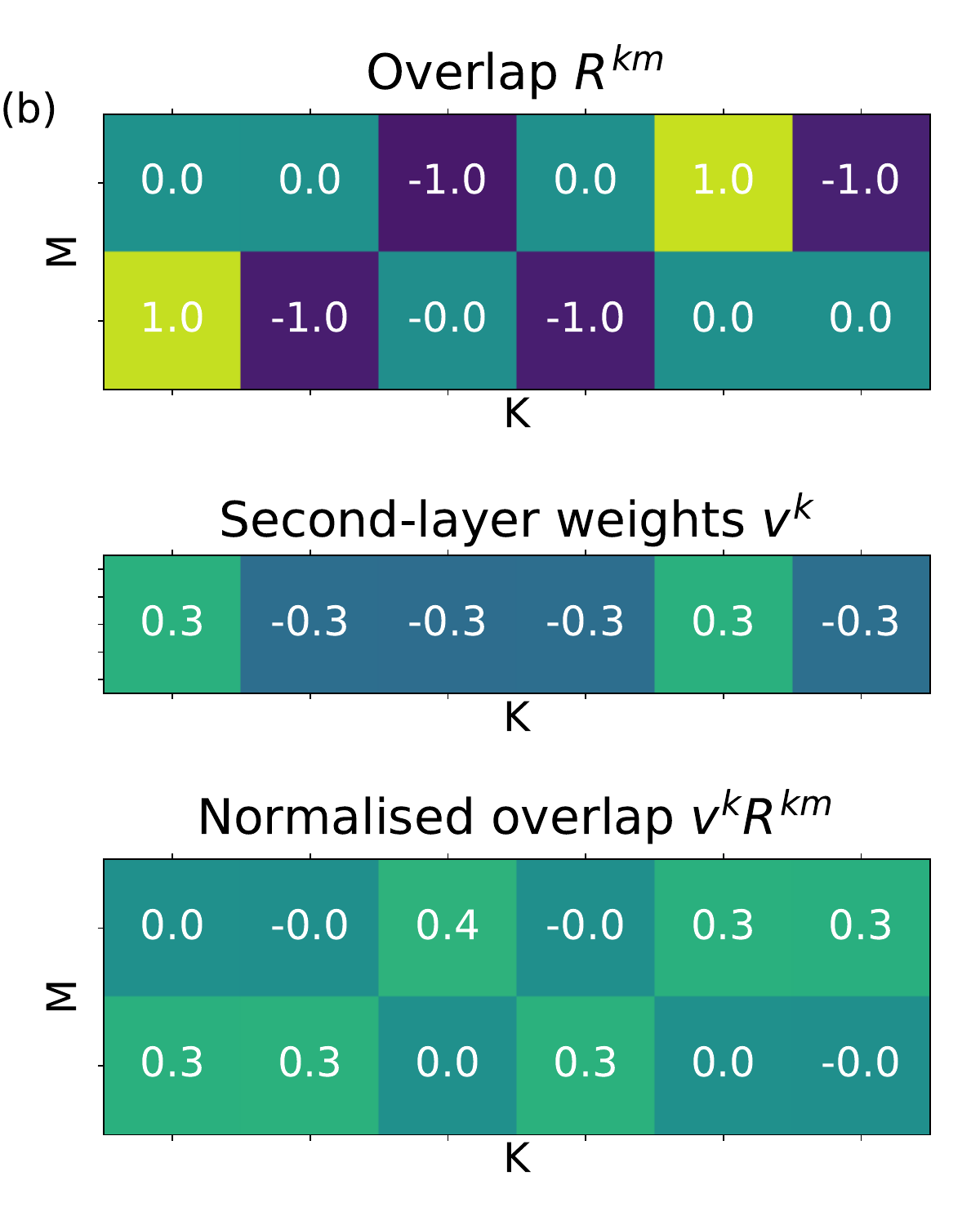}
  \includegraphics[width=.25\linewidth]{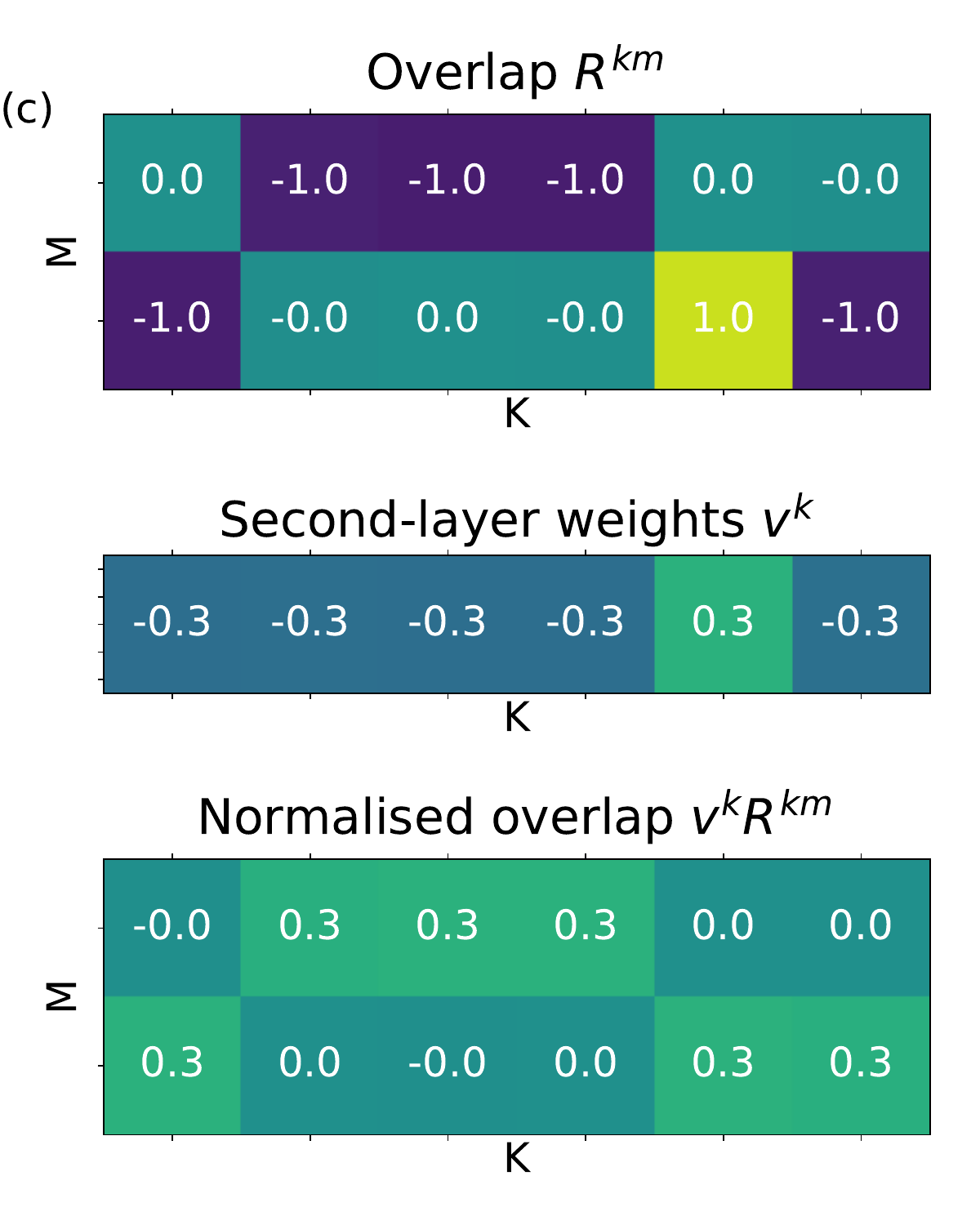}
  \caption{\label{fig:eg-vs-K} \textbf{(a)} \textbf{Asymptotic generalisation
      for online learning} of a student with $K=ZM$ hidden nodes learning from a
    teacher with $M=1$ (blue), $M=2$ (violet) and $M=4$ (green) hidden nodes,
    respectively. The dotted, dashed-dotted and dashed lines correspond to
    $D=50, 100$ and $200$, respectively. Error bars indicated two standard
    deviations over five runs. The solid line is the theoretical prediction
    obtained for $\tilde t=0$. \textbf{(b, c)} Teacher-student overlap
    $R^{km}$~\eqref{eq:R}, second-layer weights $v^k$ and the normalised overlap
    $v^k R^{km}$ obtained in two simulations used in the left plot with
    $M=2, K=6$, starting from different initial conditions, all other things
    being equal. \emph{Parameters:} In all plots,
    $f(x)=\mathrm{sgn}(x), g(x)=\erf(x/\sqrt{2}), \eta=0.2, \tilde v^m=1,
    \delta=0.01, \eta=0.2$. }
\end{figure*}

We found that the asymptotic test error $\epsilon_g^*$ depends only weakly on
the learning rate $\eta$, as we show in Fig.~\ref{fig:lr} for $M=K=2$ and
$M=K=6$, together with the theoretical prediction for $\tilde t=0$. This
theoretical prediction is again obtained by using the
ansatz~\eqref{eq:ansatz_matched} for the order parameters and solving the
resulting fixed point equations, as described in the previous section, but this
time varying the learning rate $\eta$. The weak dependence of $\epsilon_g$ on
$\eta$ should be contrasted with the behaviour the canonical teacher-student
setup, where the generalisation error is proportional to the learning rate in
the case of additive Gaussian output noise~\cite{Saad1997,Goldt2019b}.

In the inset of Fig.~\ref{fig:lr}, we plot the generalisation dynamics of a
neural network trained on the HMM at different learning rates. As expected, the
learning rate controls the speed of learning, with increased learning rates
leading to faster learning until the learning rate becomes so large that
learning is not possible anymore; instead, the weights just grow to infinity.

\subsubsection{The impact of student size}
\label{sec:eg-vs-K}

Another key question in our model is how the performance of the student depends
on her number of nodes~$K$. Adding hidden units to a student who has less hidden
units than her teacher ($K<M$) improves her performance, as would be
expected. This can be understood in terms of the specialisation discussed in
Sec.~\ref{sec:specialisation}: each additional hidden node of the student
specialises to another node of the teacher, leading to improved performance. We
will see an example of this below in Sec.~\ref{sec:complexity}.

But what happens if we give the student more nodes than her teacher has $K>M$?
It is instructive to first study the overlap matrices at the end of training. We
show two examples from an experiment with $M=2, K=6$ at $\delta=0.01$ for
networks starting from different initial conditions. In particular, we plot the
rescaled teacher-student overlap matrix $v^k R^{km}$ in Fig.~\ref{fig:eg-vs-K}
(b, c). We rescale $R^{km}$ by the second-layer weights to account for two
effects: first, the relative influence of a given node to the output of the
student, which is determined by the magnitude of the corresponding second-layer
weight; and second, we have a symmetry in the output of the student since for
sigmoidal activation function,
$v^k g(\vw^k \vx/\sqrt{N}) = -v^k g(-\vw^k \vx / \sqrt{N})$.

\begin{figure*}
  \centering
  \includegraphics[width=.9\linewidth]{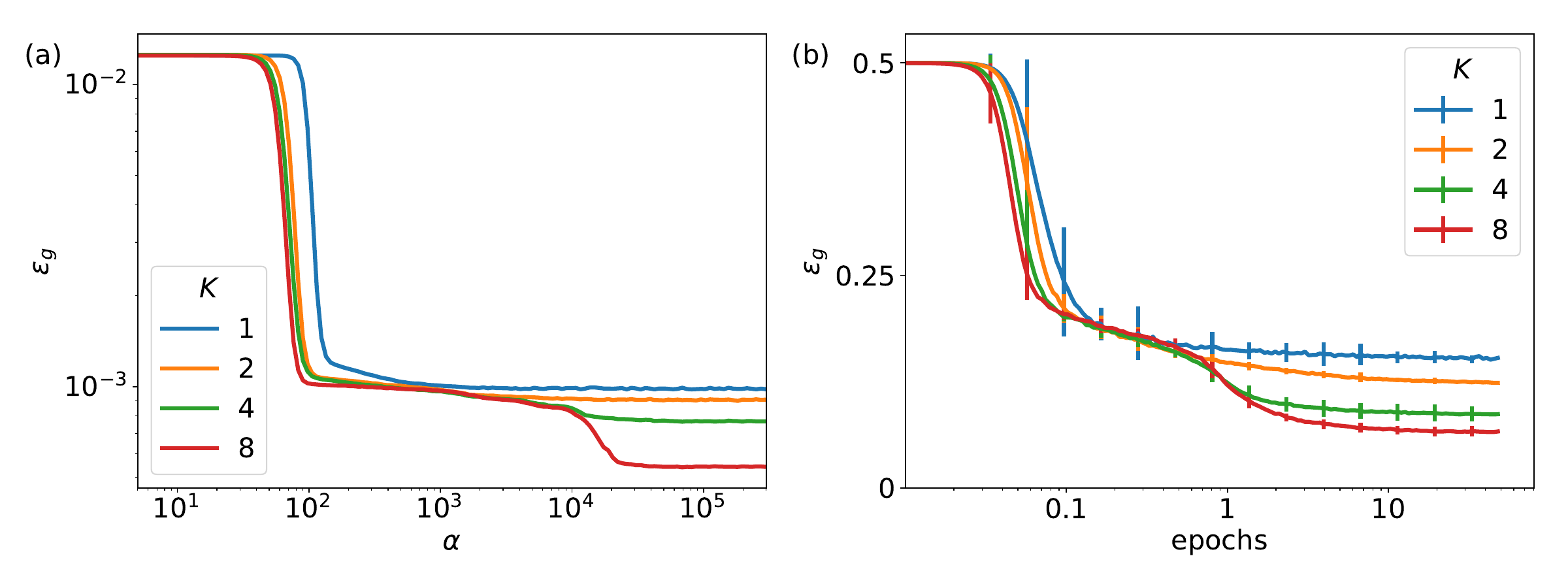}
  \caption{\label{fig:inc_comp} \textbf{Two-layer neural networks learn
      functions of increasing complexity.} We plot the generalisation error of
    sigmoidal two-layer networks with increasing number of hidden nodes $K$
    during a single run of online learning with the hidden manifold model with
    $\delta=0.05, D=25, M=10, \tilde v^m=\nicefrac{1}{M}$ on the HMM \emph{(a)} and
    when trained on odd-versus-even digit classification on MNIST, averaged over
    ten runs \emph{(b)}. Error bars indicate two standard deviations. For
    details, see Sec.~\ref{sec:complexity} and~\ref{sec:supp_complexity}.
    $g(x)=\erf\left(x/\sqrt{2}\right), \eta=0.2, N=784$. \emph{(b)}: batch size
    32.}
\end{figure*}

In the two overlap plots for $K>M$ in Fig.~\ref{fig:eg-vs-K}, the student nodes
display \emph{many-to-one} specialisation: several hidden units of the student
specialise to the same hidden node of the teacher, essentially providing several
estimates of the value of this teacher node. Note that each student node
specialises to one and only one of the teacher nodes, rather than a combination
of two or more teacher nodes. We found this pattern of activations consistently
across all of our runs for various $K$ and $M$. The fact that student nodes are
evenly distributed across teacher nodes is further motivated by the fact that
such an arrangement minimises the generalisation error if the second-layer
teacher weights $\tilde{v}^m$ have equal magnitude, and its first-layer weights
$\tilde \vw^m$ have the same norm. We anticipate that this specialisation pattern is
at least in part due to the sigmoidal form of the activation function~$g(x)$. We
note that the same many-to-one specialisation of hidden units has been
previously reported for the same two-layer networks trained on i.i.d.\
inputs~\cite{Goldt2019b}, and that a similar pattern of specialisation was
observed for networks with finite input and wide hidden layer, where this type
of specialisation was referred to as ``distributional dynamics''~\cite{Mei2018,
  Rotskoff2018, Chizat2018, Sirignano2018}.

These observations motivate the following ansatz for the overlaps of a student with
$K=ZM$ hidden nodes~($Z\in\mathbb{N}$)
\begin{align}
    \label{eq:ansatz_Z}
    R^{km} &= \begin{cases}
      R & \quad k \mod M = m \mod M, \\
      r & \quad \mathrm{otherwise}
    \end{cases}\\
    \Sigma^{k\ell} &= \begin{cases}
      S & \quad k \mod M = \ell \mod M, \\
      s & \quad \mathrm{otherwise}
    \end{cases}
\end{align}
and similarly for $W^{k\ell}$, while we use the same parameterisation for the
teacher order parameters $T, t, \tilde T, \tilde t, A$ and $v$. Searching again
for specialised fixed points of the resulting equations for the seven
time-dependent order parameters $R, r, S, s, W, w$ and $v$ and substituting
their values into Eq.~\eqref{eq:eg-order-parameters} yields the predictions we
indicate by solid lines in Fig.~\ref{fig:eg-vs-K}, where we plot the asymptotic
test error as a function of $Z\equiv K / M$.  We can see small performance
improvements as the student size increases. We also plot, for the three values
of $M$ used, the asymptotic test error measured in experiments with $D=50, 100$
and $200$. As we increase $D$, the experimental results approach the theoretical
prediction for $D\to\infty$.

We finally note that fixed points of the online dynamics with many-to-one specialisation
have been described previously in the canonical teacher-student setup~\cite{Goldt2019b},
who found that this behaviour leads to a more significant improvement of student
performance as $K$ increases for teacher tasks with $y^* = \phi(\vx)$ compared to the
improvement we observe for the HMM. The same type of many-to-one specialisation was also
found by \citet{Mei2018} and \citet{Chizat2018}, who considered a complementary regime
where the input dimension $N$ stays finite while the size of the hidden layer goes to
infinity.

\section{Comparing the hidden manifold model to real data}
\label{sec:comparison}

We finally turn our attention to the comparison of the hidden manifold model to
more realistic data sets, in our cases classic image databases such as CIFAR10
(see Fig.~\ref{fig:intuition} for two examples of images in CIFAR10).

\subsection{Neural networks learn functions of increasing complexity}
\label{sec:complexity}

The specialisation transition that we discussed in Sec.~\ref{sec:specialisation}
has an important consequence for the performance of the neural network, as we
show in Fig.~\ref{fig:inc_comp}. As we train increasingly large student networks
on a teacher with $M=10$ hidden units and second-layer weights
$\tilde v^m=\nicefrac{1}{M}$, we observe that learning proceeds in two
phases. First, there is an initial decay of the generalisation error
until all students have roughly the same test error as the student with a single
hidden unit $K=1$. In a second phase, students with $K>1$ break away from this
plateau after further training and achieve superior performance, with the larger
networks performing better. These improvements are a result of specialisation
after~$\sim 10^3$ epochs, which permits the student network to capitalise on
their additional hidden nodes.

This way of visualising specialisation not only illustrates its importance for
student performance, it is also applicable when training the same two-layer
neural networks on more realistic data sets such as MNIST
(Fig.~\ref{fig:inc_comp}~b) or Fashion MNIST~\cite{xiao2017online} and CIFAR
(Fig.~\ref{fig:supp_inc_comp_sigmoidal}). The plots demonstrate clearly that in
all these cases, the larger networks proceed by first learning functions that
are equivalent to the smaller networks.

In all cases, specialisation is preceded by a plateau where the generalisation
error stays constant because the student is stuck at a saddle point in its
optimisation landscape, corresponding to the unspecialised solution. This
plateau has been discussed extensively in the canonical teacher-student
setup~\cite{Saad1995b, Biehl1996, Rattray1998, Engel2001} and more recently in
the context of recurrent and deep neural networks~\cite{Saxe2014,
  dauphin2014identifying}. By comparing students of different sizes, this
plateau can also be demonstrated on image data sets, as we have done above. This
learning of functions with increasing complexity has also been observed in deep
convolutional networks by~\citet{kalimeris2019sgd}, who used quantities from
information theory to quantify how well one model explains the performance of
another.

These observations are interesting because they suggest how to explain the
ability of neural networks to generalise well from examples when they have
\emph{many more} parameters than samples in their training data set. This is a
key open problem in the theory of deep learning, since the intuition from
classical statistics suggests that in these cases, the networks overfit the
training data and thus generalise poorly~\cite{vapnik1998statistical,
  Mohri2012}. 
It is possible that by learning functions of increasing complexity, networks are
biased towards simple classifiers and avoid over-fitting if their training is
stopped before convergence. This topic is an active research
area~\cite{farnia2018spectral, rahaman2018spectral}.

\subsection{Memorisation of random and realistic data}
\label{sec:memorisation}

\begin{figure*}
  \includegraphics[width=0.5\textwidth]{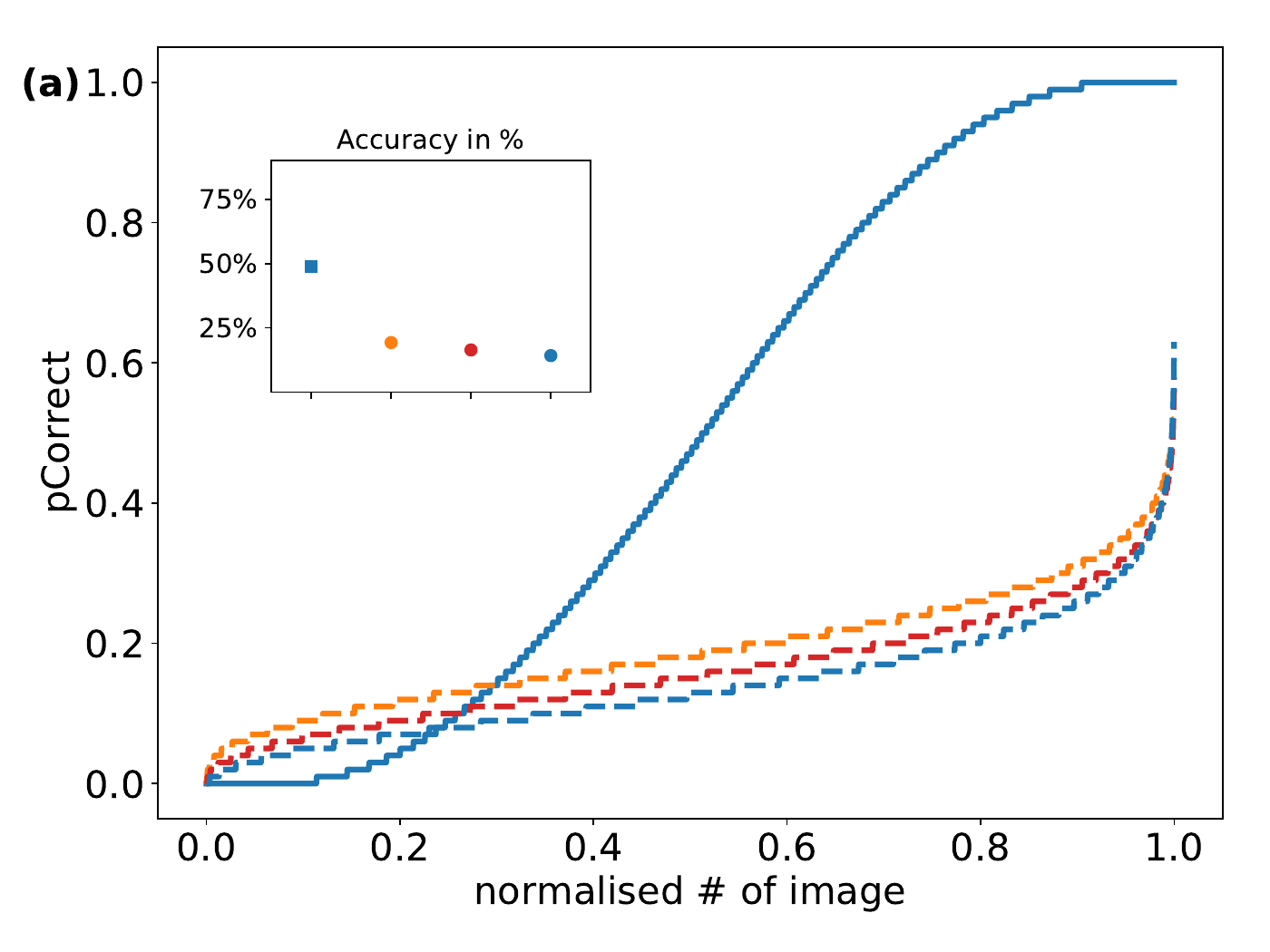}%
  \includegraphics[width=0.5\textwidth]{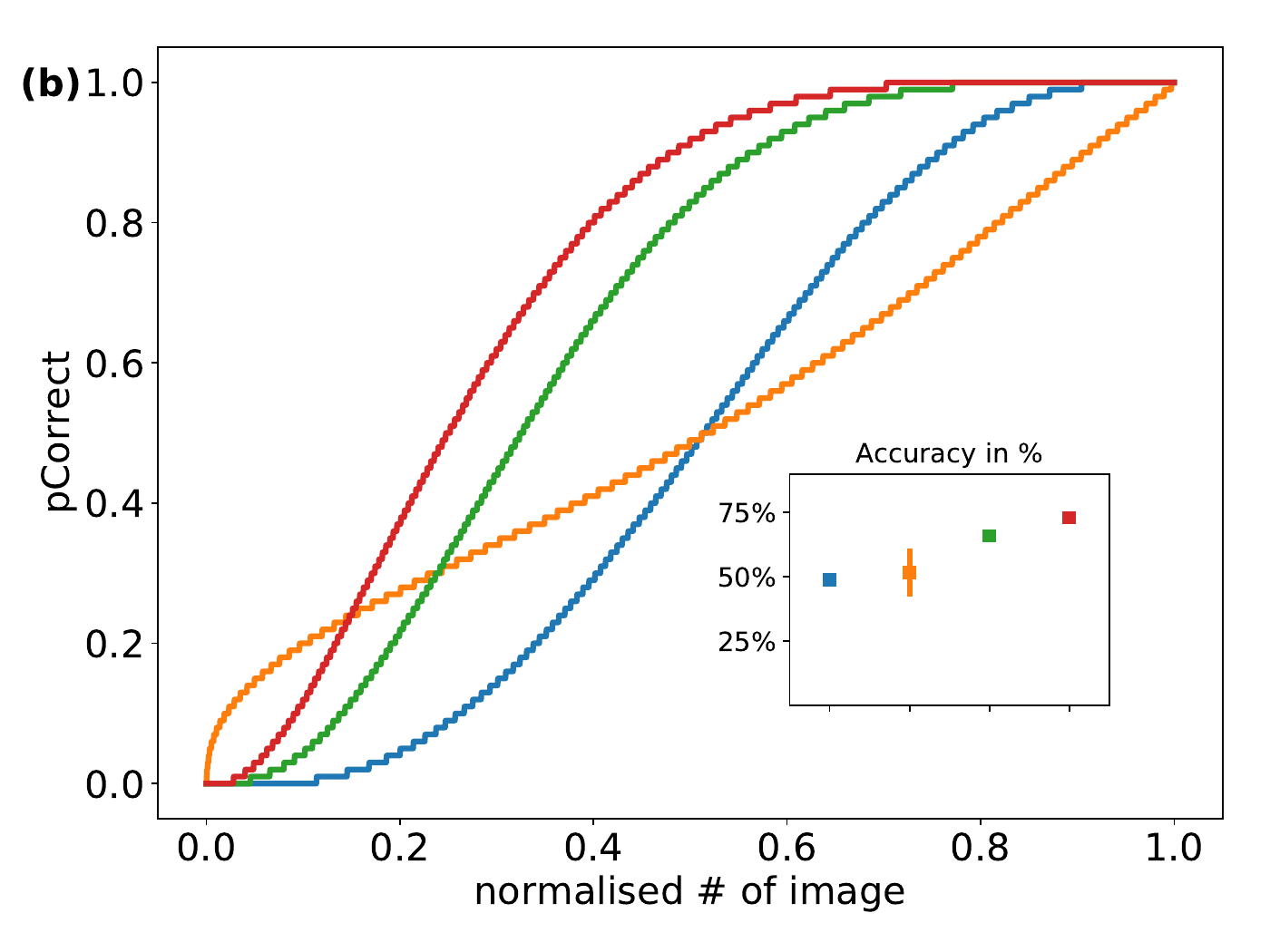}%
  \caption{\label{fig:memorisation} \textbf{Neural networks have different
      memorisation patterns for random and structured data sets.}  We plot the
    memorability of training images, i.e.\ the frequency with which an image
    from the training set is correctly classified by a neural network after
    training for only a single epoch. In Figs.~(a) and~(b), we reproduce the
    result of~\citet{arpit2017closer} for \textcolor{C0}{CIFAR10} (full blue
    line). This curve demonstrates the existence of hard and easy examples which
    are never, or always, classified correctly. Fig.~(a) shows that this
    property disappears in all models when the labels are reshuffled (dashed
    lines). The insets indicate the training accuracy after training, using
    circles for randomised data sets and squares for unmodified data
    sets. Fig.~(b) shows that these hard and easy examples also exist in the
    structured data models, \textcolor{C2}{TeacherS} and the
    \textcolor{C3}{HMM}, but not in the unstructured \textcolor{C1}{Gaussian}
    one.}
\end{figure*}

An interesting difference between random and realistic data was demonstrated in
a recent paper by \citet{arpit2017closer}. They trained 100 two-layer networks
($K=4096$ hidden units with ReLU activation, 10 output units with softmax
activation) for a single epoch on the ten-class image classification task on the
CIFAR10 data set, starting from different initial conditions each time. At the
end of training, they measured the frequency with which each individual image
was classified correctly by the network across runs, which we will call the
memorability of an image, which should be thought of as a function of the image
and the data set that contains it. We repeated this experiment on CIFAR10, and
added three different synthetic data sets: (colour codes refer to
Fig.~\ref{fig:memorisation}):
\begin{description}
\item[\textcolor{C0}{CIFAR10}] $\vx_\mu$: CIFAR10 images; $y^*_\mu\in [0, 9]$:
  CIFAR10 label giving the class of that image.
\item[\textcolor{C1}{Gaussian}] teacher acting on Gaussian inputs: $\vx_\mu$
  i.i.d.\ standard Gaussians; $ y^*_\mu = \argmax \vphi(\vx_\mu, \vtheta^*)$
\item[\textcolor{C2}{TeacherS}] teacher acting on
  \textcolor{C2}{\textbf{s}}tructured inputs $\vx_\mu=f(\mF \vc_\mu)$;
  $ y^*_\mu = \argmax \vphi(\vx_\mu, \vtheta^*)$
\item[\textcolor{C3}{HMM}] $\vx_\mu=f(\mF \vc_\mu)$;
  $ y^*_\mu = \argmax \vphi(\vc_\mu, \vtheta^*)$
\end{description}
The labels for the synthetic data sets were generated by two teacher networks,
one with input dimension $N$ for the Gaussian and TeacherS data sets, and
another with input dimension $D$ for the HMM. The teachers were two-layer fully
connected networks having $M=2K$ hidden units with ReLU activation function, and
10 nodes in the last readout layer. Thus the teacher's output
$\vphi(\cdot, \vtheta^*)\in\mathbb{R}^{10}$, and the class for a given input was
obtained as the index of the output node with the highest value for that input.

We plot the memorabilities for all images in the training set, sorted by their
memorability, in Fig.~\ref{fig:memorisation}. On the left, we first reproduce
the memorability curve for CIFAR10 that was found by~\citet{arpit2017closer}
(solid blue), which demonstrates that many examples are \emph{consistently}
classified correctly or incorrectly after a single epoch of training. The
memorability curve for a data set containing the same images with \emph{random}
labels (dashed blue) demonstrates that randomised CIFAR10 doesn't contain images
that are particularly hard or easy to memorise. The smaller variation in
memorability for the randomised data set is largely due to the fact that it
takes it more time to fit randomised data sets~\cite{Zhang2016a}. After one
epoch, the network thus has a lower training accuracy on the randomised data set
(cf.\ the inset of Fig.~\ref{fig:memorisation}), which leads to the smaller area
underneath the curve. We verified that no easy or hard samples appear when
training the randomised data sets to comparable training accuracy (not
shown). 
In fact, the memorability of data sets with random labels seem to coincide after
accounting for differences in the training error, regardless of whether the
inputs are CIFAR10 images, Gaussian inputs or structured inputs
$\mX = f(\mC \mF)$~\eqref{eq:structured-inputs} (dashed lines in
Fig.~\ref{fig:memorisation} a).

The memorability curves for the Gaussian, TeacherS and HMM data sets in
Fig.~\ref{fig:memorisation} (b) reveal that hard and easy examples exist for
TeacherS and HMM, which both contain structured inputs $\mX=f(\mC \mF)$, but not
in the Gaussian data set. The number of easy examples, but not their existence,
correlates well with the training accuracy on these data sets, shown in the
inset. In that sense, the hidden manifold model is thus a more realistic model
of image-like data than the canonical teacher-student setup.

Note that by making the teacher network larger than the students ($M=2K$), the
learning problem is unrealisable for all three synthetic data sets, \emph{i.e.}
there is no set of weights for the student that achieve zero generalisation
error. The absence of easy examples in the Gaussian data set thus suggests that
unrealisability alone is insufficient to obtain a data set with easy
examples. Our results also demonstrate that memorability is not just a function
of the input correlations: CIFAR10 images, Gaussian inputs and structured inputs
yield the same memorability curves when their labels are randomised. We leave it
to future work to identify some criterion, statistical or otherwise, that
predicts either whether a sample ($\vx_\mu, y^*_\mu$) is easy (or hard!) to
memorise or whether a training set contains easy examples at all.

\section{Concluding perspectives}

We have introduced the hidden manifold model as a generative model for
structured data sets that displays some of the phenomena that we observe when
training two-layer neural networks on realistic data. The HMM has two key
ingredients, namely high-dimensional inputs which lie on a lower-dimensional
manifold, and labels for these inputs that depend on the inputs' position within
the low dimensional manifold.  We derived an analytical solution of the model
for online SGD learning of two-layer neural networks. We thus provide a rich
test bed for exploring the influence of data structure on learning in neural
networks.

Let us close this paper by outlining several important directions in which our
work is being (or should be) extended.

\paragraph*{Comparison to more deep learning phenomenology} In the spirit of our
experiments in Section~\ref{sec:comparison}, it is of great interest to identify
more properties of learning that are consistently reproduced across experiments
with realistic data sets and network architectures, and to test whether the HMM
reproduces these observations as well. Of particular interest will be those
cases where learning on realistic data deviates from the HMM, and how we can
extend the HMM to capture these behaviours.

\paragraph*{Beyond online SGD} Our analytical results on online SGD rely on the
assumption that each new sample seen during training is conditionally
independent from the weights of the network up to that point. In practice,
samples are seen several or even many times during training, giving rise to
additional correlations. Taking those correlations into account to analyse those
cases is an important future direction. First steps towards a solution to this
challenging problem were made using the dynamical replica
method~\cite{coolen2000online,coolen2000dynamics} for two-layer networks, and
for single-layer neural networks trained using full-batch gradient descent,
where all the samples in the training set are used at every step of the
algorithm~\cite{mei2019generalization, montanari2019generalization,
  gerace2020generalisation}. Generalising these results to two-layer networks is
clearly a direction for future work as well.

\paragraph*{Learning with a multi-layer network} The present work should be
extended to learning with multi-layer networks in order to identify how depth
helps to deal with structured data. This is a serious challenge, and it remains
an open problem to find explicitly solvable models of multi-layer (non-linear)
networks even in the canonical canonical teacher-student model where inputs are
uncorrelated.

\paragraph*{Multi-layer generative model} The hidden manifold model is akin to a
single layer generator of a GAN. A natural extension would be to take a
generator with an arbitrary number of layers. Multi-layer generators are
explored in~\cite{louart2018random,seddik2020random}, whose results are
analogous to the Gaussian equivalence property and suggest that the full
solution of the online SGD or of the full-batch gradient descent might also be
within reach.

\paragraph*{Conditioning the inputs on the labels} In the HMM, the true label
$y^*$ of an input $\vx$ is conditioned on its latent representation $\vc$, i.e.\
its coordinates in the manifold. It may be more realistic to consider models
where instead, the latent representation is conditioned on the label of the
input, i.e. $p(c|y)$. A simple case of such a model that reduces to a Gaussian
mixture of two clusters was explored recently~\cite{mignacco2020role}. This is
also the point of view taken implicitly in~\cite{cohen2020separability}. More
generally, exploring different approaches to modelling realistic inputs will
allow us to better understand how data structure influences learning.

\begin{acknowledgments}
  We would like to thank Bruno Loureiro and Federica Gerace for useful
  discussions. We thank Stanisław Jastrzębski for discussing the experiments
  of~\cite{arpit2017closer}. We are grateful to the Kavli Institute For
  Theoretical Physics for its hospitality during an extended stay, during which
  parts of this work were conceived and carried out. We acknowledge funding from
  the ERC under the European Union’s Horizon 2020 Research and Innovation
  Programme Grant Agreement 714608-SMiLe, from ``Chaire de recherche sur les
  modèles et sciences des données'', Fondation CFM pour la Recherche-ENS, and
  from the French National Research Agency (ANR) grant PAIL. This research was
  supported in part by the National Science Foundation under Grant No.\ NSF
  PHY-1748958.
\end{acknowledgments}

\appendix

\begin{widetext}
\section{The Gaussian Equivalence Property}
\label{app:GEP}

\subsection{Nonlinear functions of weakly correlated Gaussian random variables}
\label{app:lemma1}

In order to derive the GEP we first establish some auxiliary lemmas
concerning the correlations between nonlinear functions of weakly correlated
random variables.
\subsubsection{Correlations of two functions}
\begin{lemma}
\label{lemma:1}
Given $n+p$ random variables organised in two vectors,
\begin{equation}
x= \begin{pmatrix} x^1 \\ . \\ . \\ x^n \end{pmatrix}, \quad
y= \begin{pmatrix} y^1 \\ . \\ . \\ y^p  \end{pmatrix},
\end{equation}
with a joint Gaussian distribution, denote by $\EE$ the
expectation with respect to this distribution. The first moments are
supposed to vanish,
\begin{equation}
  \EE x_i=0,  \quad
  \EE y_j=0,
\end{equation}
and we denote by $Q,R,\varepsilon S$ the covariances:
\begin{equation}
\EE [x_i x_j]=Q_{ij}, \quad
\EE [y_i y_j]=R_{ij}, \quad
\EE[x_i y_j]=\varepsilon S_{ij}.
\end{equation}
Let $f(x)$ and $g(y)$ be two functions of $x$ and $y$ respectively regular
enough so that $\EE_x [x_i f(x)]$, $\EE_x [x_i x_j f(x)]$, $\EE_y [y_i f(y)]$
and $\EE_y [y_i y_j f(y)]$ exist, where $ {\mathbb E}_x$ denotes the expectation
with respect to the distribution ${\mathcal N}(a,Q)$ of $x$ and $ {\mathbb E}_y$
denotes the expectation with respect to the distribution ${\mathcal N}(b,R)$ of
$x$.

Then, in the $\varepsilon \to 0$ limit:
\begin{equation}
\EE [f(x)g(y)]= \EE_x[f(x)]\;  \EE_y[g(y)]
 +\varepsilon \sum_{i=1}^n\sum_{j=1}^p {\mathbb
  E}_x[x_i f(x)] \left(Q^{-1}SR^{-1}\right)_{ij}{\mathbb
  E}_y[y_j g(y)]+\order{\varepsilon^2}\ .
  \label{lemmaresult}
\end{equation}
\end{lemma}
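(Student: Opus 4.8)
The plan is to treat $\varepsilon$ perturbatively in the joint Gaussian density of $(x,y)$ and integrate the expansion term by term against $f(x)g(y)$. The joint covariance is
\begin{equation}
  \Sigma=\begin{pmatrix} Q & \varepsilon S\\ \varepsilon S^\top & R\end{pmatrix},
\end{equation}
and a Neumann/block expansion (or the Schur-complement formulas) gives, to first order in $\varepsilon$,
\begin{equation}
  \Sigma^{-1}=\begin{pmatrix} Q^{-1} & 0\\ 0 & R^{-1}\end{pmatrix}-\varepsilon\begin{pmatrix} 0 & Q^{-1}SR^{-1}\\ R^{-1}S^\top Q^{-1} & 0\end{pmatrix}+\order{\varepsilon^2},
\end{equation}
while $\det\Sigma=\det Q\,\det R\,\det(I-\varepsilon^2 R^{-1}S^\top Q^{-1}S)=\det Q\,\det R\,(1+\order{\varepsilon^2})$, so the normalisation constant receives no first-order correction. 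Plugging $\Sigma^{-1}$ into the quadratic form in the exponent produces a cross term $+\varepsilon\, x^\top Q^{-1}SR^{-1}y$ (the two off-diagonal blocks contribute equally, since $R$ and $Q$ are symmetric and a scalar equals its transpose), and expanding $\exp(\,\cdot\,)$ yields the factorised density with a linear correction,
\begin{equation}
  p(x,y)=p_Q(x)\,p_R(y)\bigl(1+\varepsilon\, x^\top Q^{-1}SR^{-1}y+\order{\varepsilon^2}\bigr),
\end{equation}
where $p_Q$ and $p_R$ denote the $\mathcal{N}(0,Q)$ and $\mathcal{N}(0,R)$ densities.

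Multiplying by $f(x)g(y)$ and integrating, the leading term gives $\mathbb{E}_x[f(x)]\,\mathbb{E}_y[g(y)]$, and the $\order{\varepsilon}$ term factorises as
\begin{equation}
  \varepsilon\int f(x)\,x^\top p_Q(x)\,\dd x\; Q^{-1}SR^{-1}\int y\,g(y)\,p_R(y)\,\dd y=\varepsilon\sum_{i=1}^n\sum_{j=1}^p \mathbb{E}_x[x_i f(x)]\,(Q^{-1}SR^{-1})_{ij}\,\mathbb{E}_y[y_j g(y)],
\end{equation}
which is exactly \eqref{lemmaresult}. The existence of the first-order term is precisely what the assumed regularity of $\mathbb{E}_x[x_i f(x)]$ and $\mathbb{E}_y[y_i g(y)]$ guarantees.

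The only genuinely nontrivial point is controlling the remainder: one must show that the $\order{\varepsilon^2}$ term of $p(x,y)$, integrated against $f(x)g(y)$, is truly $\order{\varepsilon^2}$ rather than merely formally so. The second-order pieces of $\Sigma^{-1}$ (and of $\log\det\Sigma$) are quadratic forms in $(x,y)$, so the remainder is dominated by a fixed Gaussian expectation of $|f(x)\,g(y)|$ times a quadratic in the coordinates of $x,y$; a Cauchy–Schwarz step together with the assumed finiteness of $\mathbb{E}_x[x_ix_j f(x)]$ and $\mathbb{E}_y[y_iy_j g(y)]$ makes this bound effective, uniformly for small $\varepsilon$. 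I expect this remainder estimate to be the main (indeed only) obstacle; the algebra of the first two orders is routine. As a cross-check one can bypass the determinant expansion by conditioning: given $x$, the law of $y$ is $\mathcal{N}\bigl(\varepsilon S^\top Q^{-1}x,\ R-\varepsilon^2 S^\top Q^{-1}S\bigr)$, so $\mathbb{E}[g(y)\mid x]=\mathbb{E}_{z\sim\mathcal{N}(0,R)}[g(z)]+\varepsilon\,(S^\top Q^{-1}x)^\top\mathbb{E}_z[\nabla g(z)]+\order{\varepsilon^2}$, and Gaussian integration by parts (Stein's identity) rewrites $\mathbb{E}_z[\nabla g(z)]=R^{-1}\mathbb{E}_z[z\,g(z)]$; taking $\mathbb{E}_x[f(x)\,\cdot\,]$ then reproduces \eqref{lemmaresult}. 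This route requires $g$ differentiable, whereas the density-expansion route needs only the stated moment conditions, which is presumably why the lemma is phrased as it is.
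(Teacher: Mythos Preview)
Your proposal is correct and follows essentially the same route as the paper: expand the inverse of the block covariance to first order in $\varepsilon$, obtain the factorised density $p_Q(x)p_R(y)\bigl(1+\varepsilon\,x^\top Q^{-1}SR^{-1}y+\order{\varepsilon^2}\bigr)$, and integrate against $f(x)g(y)$. You add more than the paper does---the explicit observation that $\det\Sigma$ has no first-order correction, a genuine discussion of the $\order{\varepsilon^2}$ remainder, and the conditioning/Stein cross-check---so if anything your version is more complete.
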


\begin{proof}
  The result is obtained by a straightforward expansion in
  $\varepsilon$.

  The joint distribution of $x$ and $y$ is
  \begin{equation}
    \label{Pxy}
    P(x,y)=\frac{1}{Z}\exp\left[-\frac{1}{2}
      \begin{pmatrix} x & y \end{pmatrix}
      M^{-1}
      \begin{pmatrix} x \\ y \end{pmatrix}
      \right]
    \end{equation}
    where
    \begin{equation}
      M=\begin{pmatrix} Q & \varepsilon S \\ \varepsilon S^T &
        R\end{pmatrix} \, .
    \end{equation}
   One can expand the inverse matrix $M^{-1}$ to first order in
    $\varepsilon$:
    \begin{equation}
      M^{-1}=
       \begin{pmatrix} Q^{-1} & 0 \\ 0 &
         R^{-1}\end{pmatrix}
       -\varepsilon
          \begin{pmatrix} 0 & Q^{-1}S R^{-1} \\ R^{-1} S^T Q^{-1}  & 0 \end{pmatrix}
     \end{equation}
     and substitute this into the joint distribution~\eqref{Pxy} to find
          \begin{equation}
    P(x,y)= \frac{1}{Z}\exp\left[-\frac{1}{2}
      \begin{pmatrix} x&y \end{pmatrix}
     \begin{pmatrix} Q^{-1} & 0 \\ 0 &
         R^{-1}\end{pmatrix}
      \begin{pmatrix} x\\ y\end{pmatrix}
            \right]
\left[1+\varepsilon \sum_{i=1}^n\sum_{j=1}^p
  x_i\left(Q^{-1}SR^{-1}\right)_{ij}y_j+\order{\varepsilon^2}\right].
\end{equation}
Using this expression, the result~\eqref{lemmaresult} follows immediately.
\end{proof}

An immediate application of the lemma to the case when $n=p=1$ is the
following. Consider two Gaussian random variables $u_1,u_2$ with mean
zero and covariance
\begin{equation}
  \EE [u_1^2]=1 \ \ \ ; \ \ \  \EE [u_2^2]=1 \ \ \ ; \ \ \
  \EE[ u_1 u_2]=\varepsilon m_{12}\ ,
  \label{u1u2def}
\end{equation}
and  two functions $f_1$ and $f_2$. Define, for $i \in \{1,2\}$:
\begin{equation}
  a_i= \langle f_i(u)\rangle \ \ \ ; \ \ \  b_i= \langle u
  f_i(u)\rangle 
  \label{abdef}
  \end{equation}
  where  $\langle .\rangle$ denotes the average over the distribution of
  the random Gaussian variable $u$ distributed as ${\cal N}
  (0,1)$.
  
Then, in the
$\varepsilon\to 0$ limit, the correlation between $f(u_1)$ and
$g(u_2)$ is given by
\begin{equation}
  \EE [f_1(u_1) f_2(u_2)] = a_1 a_2
  +\varepsilon   m_{12} b_1 b_2+\order{\varepsilon^2} \, .
\end{equation}
This means that, if we consider centered functions
  $\tilde f_i(u_i)= f_i(u_i)-a_i$,
their covariance is
\begin{equation}
  \label{f1f2corr}
  \EE [\tilde f_1(u_1) \tilde f_2(u_2)] =
  +\varepsilon   m_{12} b_1 b_2+\order{\varepsilon^2} \, .
\end{equation}
This result generalises to correlation functions of higher order, as
stated in the following lemma.

\subsubsection {Higher-order correlations}
\begin{lemma}
  \label{lemma:2}
Consider $m$ Gaussian random variables $u_1,\dots,u_m$ with mean
zero and covariance
\begin{equation}
 \forall i:\  \EE [u_i^2]=1, \quad \forall i\neq j:\ 
  \EE[ u_i u_j]=\varepsilon m_{ij},
\end{equation}
and $m$ functions $f_1, \ldots, f_m$.
Define as before:
\begin{equation}
  a_i= \langle f_i(u)\rangle,\quad b_i= \langle u
  f_i(u)\rangle ,\ \ i\in\{1,\dots,m\}
\end{equation}
and define the centered functions as
\begin{equation}
  \tilde f_i(u)=f_i(u)-a_i \ ,
\end{equation}
then
\begin{equation}
  \label{lemma2eq}
  \lim_{\varepsilon \to 0} \frac{1}{\varepsilon^{m/2}} \EE
  \tilde  f_1(u_1)\dots \tilde f_m(u_m)
  = \begin{cases}
    \sum_{\sigma \in \Pi}m_{\sigma_1\sigma_2}
    m_{\sigma_{p-1}}m_{\sigma_p} & \text{if p is even}\\
    = 0& {\text{if p is odd}}
  \end{cases}
\end{equation}

where $\Pi$ denotes all the $m!/(2^{m/2}(m/2)!)$ partitions of
$\{1,\dots,m\}$ into $m/2$ disjoint pairs. This result means that, for
the moments involving only different indices, the random variables
$\tilde f_1(u_1)/\sqrt{\varepsilon},\dots, \tilde f_m(u_m)/\sqrt{\varepsilon}$
behave, in the $\varepsilon\to 0$ limit, like Gaussian variables with
a covariance matrix $b_i b_j m_{ij}$.
\end{lemma}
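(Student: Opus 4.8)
The plan is to mimic the proof of Lemma~\ref{lemma:1}: write the joint density of $(u_1,\dots,u_m)$ as a product of standard normals times a correction factor, expand that factor in powers of $\varepsilon$, and track which order survives once the centering of the $\tilde f_i$ is used. Write the covariance matrix as $M=\mathbb{I}+\varepsilon\widehat M$, where $\widehat M$ has vanishing diagonal and off-diagonal entries $\widehat M_{ij}=m_{ij}$. Since $\mathrm{tr}\,\widehat M=0$ we have $\det M=1+\order{\varepsilon^2}$, so the normalization constant only affects terms of order $\varepsilon^2$ and higher; and $M^{-1}=\mathbb{I}-\varepsilon\widehat M+\order{\varepsilon^2}$, so that $P(u)=P_0(u)\,\exp\!\big(\tfrac{\varepsilon}{2}\,u^\top\widehat M u+\order{\varepsilon^2}\big)$, where $P_0$ is the density of $m$ independent ${\cal N}(0,1)$ variables and the $\order{\varepsilon^2}$ pieces of the exponent are again quadratic forms in $u$. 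Let $\langle\cdot\rangle_0$ denote the expectation under $P_0$; then $\langle\tilde f_i(u_i)\rangle_0=0$ and $\langle u_i\tilde f_i(u_i)\rangle_0=b_i$ by the definitions in the statement.

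Next I would expand the exponential as a power series in $\varepsilon$ and organize the evaluation of $\langle\tilde f_1(u_1)\cdots\tilde f_m(u_m)\,e^{(\cdots)}\rangle_0$. The key observation: because the $u_i$ are independent under $\langle\cdot\rangle_0$ and every $\tilde f_i$ is centered, a monomial in $u$ coming from the expansion contributes only if it contains at least one power of each of $u_1,\dots,u_m$; but the exponent supplies a factor of $\varepsilon$ for every two powers of $u$, so the lowest surviving order is $\varepsilon^{\lceil m/2\rceil}$. For odd $m$ this already forces $\varepsilon^{-m/2}\langle\cdots\rangle_0=\order{\varepsilon^{1/2}}\to 0$, which is the second line of~\eqref{lemma2eq}.

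For even $m$, reaching degree $m$ in $u$ at order $\varepsilon^{m/2}$ requires taking the term $\tfrac{1}{(m/2)!}\big(\tfrac{\varepsilon}{2}\,u^\top\widehat M u\big)^{m/2}$ from the exponential series (using any $\order{\varepsilon^2}$ piece of the exponent $k\ge 1$ times leaves degree at most $m-2k<m$), and within $\big(\sum_{i\ne j}m_{ij}u_iu_j\big)^{m/2}$ only the monomials in which each index appears exactly once survive against the centered $\tilde f_i$ — i.e.\ precisely the perfect matchings $\sigma\in\Pi$. The last step is the bookkeeping: a given unordered pairing $\sigma$ arises in $(m/2)!$ ways (assigning its $m/2$ pairs to the $m/2$ factors) times $2^{m/2}$ ways (the two equal orderings $m_{ij}u_iu_j=m_{ji}u_ju_i$ per pair), and this symmetry factor cancels the $\tfrac{1}{(m/2)!}$ and $\tfrac{1}{2^{m/2}}$ in front. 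Since the surviving monomial is $\prod_i u_i$, it yields $\prod_i\langle u_i\tilde f_i(u_i)\rangle_0=\prod_i b_i$, leaving $\varepsilon^{m/2}\sum_{\sigma\in\Pi}\big(\prod_{\{i,j\}\in\sigma}m_{ij}\big)\prod_i b_i=\varepsilon^{m/2}\sum_{\sigma\in\Pi}\prod_{\{i,j\}\in\sigma}\big(b_ib_j\,m_{ij}\big)$, using $\prod_i b_i=\prod_{\{i,j\}\in\sigma}b_ib_j$. Dividing by $\varepsilon^{m/2}$ and sending $\varepsilon\to0$ gives exactly Isserlis'/Wick's theorem for a Gaussian family with covariance $b_ib_jm_{ij}$, which is the first line of~\eqref{lemma2eq}.

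I expect the main obstacle to be not this combinatorics but making the $\varepsilon$-expansion rigorous: one must justify interchanging $\lim_{\varepsilon\to0}$ with the expectation and bound the $\order{\varepsilon^{\lceil m/2\rceil+1})}$ remainder uniformly for $\varepsilon$ near $0$, which requires mild growth/integrability hypotheses on the $f_i$ (e.g.\ finiteness of the relevant low moments $\langle|u|^k|f_i(u)|\rangle$), exactly as was tacitly assumed in Lemma~\ref{lemma:1}. An alternative route is an induction on $m$ that peels off one variable at a time using Lemma~\ref{lemma:1} with the first group $\{u_1,\dots,u_{m-1}\}$ and the singleton $\{u_m\}$; this works but is slightly awkward because the extracted factor $u_i$ de-centers one of the remaining $\tilde f_j$, so I would favor the direct density expansion above.
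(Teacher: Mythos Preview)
Your proposal is correct and follows essentially the same route as the paper: both write the covariance as $\mathbb{I}+\varepsilon\widehat M$, expand the density against the product of standard normals, use the centering of the $\tilde f_i$ to see that only monomials containing every $u_i$ survive, and then identify the leading order as $\varepsilon^{\lceil m/2\rceil}$ with the Wick-sum over pairings in the even case. Your bookkeeping (the $(m/2)!\,2^{m/2}$ symmetry factor, the role of the $\order{\varepsilon^2}$ pieces of the exponent, and the normalization $\det M=1+\order{\varepsilon^2}$) is spelled out more carefully than in the paper, but the argument is the same.
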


\begin{proof}
The covariance matrix $U$ of the variables $u_1,\dots,u_m$ has
elements $1$ on the diagonal, and elements of order $\varepsilon$ out
of the diagonal: $U={\mathbb{I}}+\varepsilon m$. One can expand
$U^{-1}$ in powers of $\varepsilon$:
\begin{equation}
U^{-1}=\sum_{p=0}^\infty (-\varepsilon)^p m^p\, .
\end{equation}
The integration measure over the variables $u_1,\dots, u_m$ can be
expanded as:
\begin{equation}
  \sqrt{(2\pi)^m {\text{det}}M}\; e^{-\frac{1}{2}\sum_i u_i^2}
  \prod_{p=1}^\infty G_p(u_1,\dots,u_m)
  \label{measure_expanded}
\end{equation}
where
\begin{equation}
  G_p(u_1,\dots,u_m)=
  1 +\left(-\frac{\varepsilon}{2}\right)^p \sum_{ij}(m^p)_{ij}u_i u_j+
  \frac{1}{2!}\left(-\frac{\varepsilon}{2}\right)^{2p}
  \sum_{ijk\ell}(m^p)_{ij}(m^p)_{k\ell}u_i u_j u_k u_\ell+\dots
\end{equation}
When we compute the integral of $\tilde f_1(u_1)\dots \tilde f_m(u_m)$ with the
measure~\eqref{measure_expanded}, because of the fact that
$\langle \tilde f_i(u_i)\rangle=0$, we need to include terms coming from
$\prod_p G_p(u_1,\dots,u_p)$ that involve at least one power of each of the
variables $u_1,\dots,u_m$.

When $m$ is even, say $m=2r$, for $\varepsilon\to 0$, the term of this kind with
the smallest power of $\varepsilon$ is the monomial $u_1\dots u_{2r}$ that comes
from the $r$th order term in $G_1$. This gives:
\begin{equation}
\EE f_1(u_1)\dots f_{2r}(u_{2r})=
\frac{1}{r!}\left(\frac{\varepsilon}{2}\right)^r\hat \sum_{i_1j_1\dots
  i_r j_r} m_{i_1j_1}m_{i_r j_r}+\order{\varepsilon^{r+1}}\ ,
\end{equation}
where the sum $\hat \sum_{i_1j_1\dots i_r j_r}$ runs over all
permutations of the indices $1,\dots,2r$. This leads to (\ref{lemma2eq})
for $m$ even.

When $m$ is odd, $m=2r+1$, for $\varepsilon\to 0$, the leading terms coming from
$\prod G_p$ that give a non-zero result are monomials of the type
$u_1^1 u_2\dots u_{2r+1}$. They are of order $\order{\varepsilon^{r+1}}$. This
proves (\ref{lemma2eq}) for $m$ odd.
\end{proof}

\begin{corollary}
  In the special case $m=3$, we get
  \begin{equation}
    \label{3fcorr}
    \EE [f_1(u_1)f_2(u_2)f_3(u_3)]= a_1a_2a_3+\varepsilon (a_1
    m_{23}b_2b_3+a_2 m_{13}b_1b_3+a_3m_{12}b_1b_2) \, .
  \end{equation}
\end{corollary}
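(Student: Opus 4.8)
The plan is to reduce the corollary to the results already proved for \emph{centered} functions. First I would write each function as its mean plus its centered part, $f_i(u) = \tilde f_i(u) + a_i$ with $\tilde f_i(u) = f_i(u) - a_i$ exactly as in Lemma~\ref{lemma:2}, and expand the triple product
\begin{equation}
  f_1(u_1)f_2(u_2)f_3(u_3) = \prod_{i=1}^{3} \big(\tilde f_i(u_i) + a_i\big)
\end{equation}
into its eight terms, organised by the number of centered factors they contain. Taking $\EE$ of each term and collecting the contributions up to order $\varepsilon$ will then produce the claimed formula.

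Next I would dispatch the four groups of terms. The single constant term contributes exactly $a_1 a_2 a_3$. The three terms carrying a \emph{single} centered factor, of the form $a_j a_k\,\tilde f_i(u_i)$, vanish identically because $\EE\,\tilde f_i(u_i) = \langle \tilde f_i(u)\rangle = 0$ by construction; here no expansion in $\varepsilon$ is needed at all. The three terms carrying \emph{two} centered factors, e.g.\ $a_3\,\tilde f_1(u_1)\tilde f_2(u_2)$, are controlled by the $n=p=1$ specialisation of Lemma~\ref{lemma:1} recorded in Eq.~\eqref{f1f2corr}, namely $\EE[\tilde f_1(u_1)\tilde f_2(u_2)] = \varepsilon\, m_{12} b_1 b_2 + \order{\varepsilon^2}$, so this term yields $\varepsilon\, a_3 m_{12} b_1 b_2$ at leading order, and the analogous statements hold for the pairs $\{1,3\}$ and $\{2,3\}$. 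Finally, the fully centered term $\EE[\tilde f_1(u_1)\tilde f_2(u_2)\tilde f_3(u_3)]$ is precisely the $m=3$ (odd) case of Lemma~\ref{lemma:2}, hence of order $\order{\varepsilon^{2}}$ and therefore negligible at order $\varepsilon$. Summing the surviving pieces gives $a_1 a_2 a_3 + \varepsilon\,(a_1 m_{23} b_2 b_3 + a_2 m_{13} b_1 b_3 + a_3 m_{12} b_1 b_2) + \order{\varepsilon^{2}}$, which is the assertion.

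As a cross-check I would also note an equivalent direct route: apply the $\varepsilon$-expansion of the Gaussian measure used in the proof of Lemma~\ref{lemma:2} straight to the uncentered product, i.e.\ expand $\prod_p G_p(u_1,u_2,u_3)$ against $f_1(u_1)f_2(u_2)f_3(u_3)\,e^{-\frac12\sum_i u_i^2}$; the only monomials that survive integration at order $\varepsilon$ are $u_i u_j$ over the three pairs $\{i,j\}$, each producing $\langle u f_i\rangle\langle u f_j\rangle\langle f_k\rangle = b_i b_j a_k$, which reproduces the same answer. There is no genuine obstacle in this proof — it is purely mechanical — and the only point that requires a moment's care is the bookkeeping of orders in the binomial expansion, in particular invoking the odd-order cancellation of Lemma~\ref{lemma:2} so that the triple-centered term is genuinely $\order{\varepsilon^{2}}$ rather than merely $\order{\varepsilon^{3/2}}$, and confirming that the single-centered-factor terms drop out exactly rather than only to leading order.
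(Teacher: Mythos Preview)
Your proposal is correct and is exactly the intended derivation: the paper states the corollary without proof, and the natural route is precisely the one you describe --- center each $f_i$, expand the product, kill the single-centered terms exactly, use Eq.~\eqref{f1f2corr} for the three pair terms, and invoke the odd-$m$ case of Lemma~\ref{lemma:2} (whose proof gives the sharper $\order{\varepsilon^{r+1}}=\order{\varepsilon^2}$ bound you note) for the fully centered triple. Your alternative cross-check via the direct measure expansion is also fine and is essentially the same computation in a different order.
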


\subsection{Derivation of the Gaussian Equivalence Property}

The derivation is based on the computation of moments of the variables $\lambda^k$
and $\nu^m$, showing that, in the thermodynamic limit, all the moments are those
of Gaussian random variables. Here we shall explicit the derivation up to fourth
order moments, and leave the daunting higher order moments for a
future formal proof of the GEP.
  
\subsubsection{Covariances}
  
We first compute the covariance matrix
$G^{k \ell}= {\mathbb E}[\tilde \lambda^k\tilde \lambda^\ell]$:

\begin{eqnarray}
G^{k \ell}&=&\usN \sum_{i,j} w_i^k w_j^\ell
 \EE ( f(u_i)-a)( f(u_j)-a) \\
          &=&(c-a^2)W^{k\ell}+
              \usN \sum_{i\neq j} w_i^k w_j^\ell
 \EE ( f(u_i)-a)( f(u_j)-a).
 \label{G1}
\end{eqnarray}
In the last piece, we need to compute $ \EE[ ( f(u_i)-a)( f(u_j)-a)]$ for two Gaussian random variables $u_i$ and $u_j$ which
are weakly correlated in the large $N$ limit. In fact, as $i\neq j$:
\begin{equation}
 \EE u_i u_j = U_{ij}
 \end{equation}
is of order $1/\sqrt{D}$. In the thermodynamic limit, we can apply the
lemma (\ref{lemma:1}) which gives:
\begin{equation}
 \EE f(u_i) f(u_j) = a^2 +b^2 \usR \sum_{r=1}^D F_{ir}F_{jr} \qquad (i\neq j).
 \label{G2}
\end{equation}
From Eqns.~\eqref{G1} and~\eqref{G2}, we get the covariance of $\lambda$
variables as written in (\ref{eq:Q}).  The covariance $ \EE [\nu^m \nu^n]$ is
analogous.

We now compute the covariance $ \EE [\tilde \lambda^k\nu^m]$.
This is equal to
\begin{equation}
 \ussqN \sum_{i=1}^N w_i^k \ussqR \sum_{r=1}^D \tilde w^m_r \; \EE [
 f(u_i) C_r]\, .
 \label{intermed1}
\end{equation}
The two variables $u_i$ and $c_r$ are Gaussian random variables with a
correlation 
 \begin{equation}
   \EE [u_i C_r]= \ussD F_{ir}
 \end{equation}
 which goes to zero as $\order{1/\sqrt{N}}$ in the thermodynamic limit. We
 can thus use Lemma (\ref{lemma:1}), and more precisely
 Eq. (\ref{f1f2corr}), to get

\begin{equation}
  \EE [
  f(u_i) C_r]= \ussR F_{ir}\langle u f(u)\rangle \langle C_r^2\rangle=
  \frac{b}{\sqrt{D}} F_{ir} \, .
 \end{equation}
 Using this result in (\ref{intermed1}) gives Eq.~\eqref{eq:R}.
 
 \subsubsection{ Fourth moments of $\tilde \lambda^k$ variables}

 We study the fourth moment defined as:
 \begin{equation}
   \label{lambda-four}
   G^{k_1k_2k_3k_4}=\langle \tilde \lambda^{k_1} \tilde \lambda^{k_2} \tilde \lambda^{k_3}
   \tilde \lambda^{k_4}\rangle=
   \frac{1}{N^2}\sum_{i_1,i_2,i_3,i_4} w_{i_1}^{k_1} w_{i_2}^{k_2} w_{i_3}^{k_3} w_{i_4}^{k_4}
   \langle \tilde  f(u_{i_1}) \tilde  f(u_{i_2}) \tilde  f(u_{i_3}) \tilde  f(u_{i_4})\rangle
\end{equation}
where $\tilde f(u)=f(u)-a$ is the centered function.

We shall decompose the sum over $i_1,i_2,i_3,i_4$ depending on the
number of distinct indices there are.

\paragraph{Distinct indices} Let us study the first piece of the fourth moment
$\langle\lambda^{k_1}\lambda^{k_2}\lambda^{k_3}\lambda^{k_4}\rangle$:
\begin{equation}
G_4^{k_1k_2k_3k_4}= 
\frac{1}{N^2}\sum'_{i_1,i_2,i_3,i_4} w_{i_1}^{k_1} w_{i_2}^{k_2} w_{i_3}^{k_3} w_{i_4}^{k_4}
\langle \tilde  f(u_{i_1}) \tilde  f(u_{i_2}) \tilde  f(u_{i_3}) \tilde  f(u_{i_4})\rangle
\end{equation}
where the sum runs over four indices $i_1,i_2,i_3,i_4$ which are distinct from
each other. We can use the factorisation property of the 4th moments of $f(u)$
of lemma (\ref{lemma:2}).  This gives
\begin{eqnarray}\nonumber
  G_4^{k_1k_2k_3k_4}&=&\frac{1}{N^2}\sum'_{i_1,i_2,i_3,i_4} w_{i_1}^{k_1} w_{i_2}^{k_2} w_{i_3}^{k_3} w_{i_4}^{k_4}
                        \left[\langle \tilde  f(u_{i_1}) \tilde  f(u_{i_2}) \rangle \langle \tilde  f(u_{i_3})
                        \tilde  f(u_{i_4})\rangle+{\text{2 perm.}}\right]\\
                    &=&\left(  \left[
                        \usN\sum'_{i_1,i_2} w_{i_1}^{k_1}w_{i_2}^{k_2}\langle f(u_{i_1}) f(u_{i_2})\rangle
                        \right]
                        \left[\usN\sum'_{i_3,i_4} w_{i_3}^{k_3}
                        w_{i_4}^{k_4}\langle f(u_{i_3}) f(u_{i_4}) \rangle\right]
                        -{\text{Corr.}}\right)\nonumber\\
                    &&+{\text{2 perm.}} \, .
\end{eqnarray}
The correction terms come from pieces where the intersection between
$\{i_1,i_2\}$ and $\{i_3,i_4\}$ is non-empty. If we first neglect this
correction, we find
\begin{eqnarray}
   \label{lambda-four-res-final}
G_4^{k_1k_2k_3k_4}= b^4\left[
  \left(\Sigma^{k_1k_2}-W^{k_1k_2}\right) 
  \left(\Sigma^{k_3k_4}-W^{k_3k_4}\right) +{\text{2 perm.}}\right] \, .
\end{eqnarray}

Now we shall show that the corrections are negligible. Consider the term
$i_1=i_3$, $i_2\neq i_4$. This gives a correction
\begin{equation}
-\frac{1}{N^2}\sum'_{i_1,i_2,i_4} w_{i_1}^{k_1} w_{i_2}^{k_2}
w_{i_1}^{k_3} w_{i_4}^{k_4}
\left[\langle \tilde  f(u_{i_1}) \tilde  f(u_{i_2}) \rangle \langle \tilde  f(u_{i_1})
                   \tilde  f(u_{i_4})\rangle\right] \, .
\end{equation}
Using~\eqref{f1f2corr}
\begin{equation}
\langle \tilde f(u_{i_1}) \tilde f(u_{i_2}) \rangle=b^2U_{i_1i_2}=
b^2\frac{1}{D} \sum_{r=1}^D F_{i_1 r}F_{i_2 r}\ ,
  \end{equation}
  we get the expression for the correction
  \begin{equation}
    -\frac{1}{N^2D^2 } \langle
    uf(u)\rangle^4\sum'_{i_1,i_2,i_4} w_{i_1}^{k_1} w_{i_2}^{k_2}
    w_{i_1}^{k_3} w_{i_4}^{k_4}F_{i_1 r}F_{i_2 r}F_{i_1 s}F_{i_4 s}=
    -\frac{1}{\sqrt{N} D^2 }\sum_{r,s}S^{k_1k_3}_{rs} S^{k_2}_r S^{k_4}_s.
\end{equation}
Using our hypothesis on the fact that the quantities $S$ are of order one, this
correction is clearly at most of order $\order{1/\sqrt{N}}$, and therefore
negligible.

The last correction that we need to consider is the term where
$i_1=i_3=i$, and $i_2=i_4=j$.
This gives
\begin{equation}
-\frac{1}{N^2}\sum'_{i,j} w_{i}^{k_1} w_{j}^{k_2}
w_{i}^{k_3} w_{j}^{k_4}
\langle \tilde f(u_{i}) \tilde f(u_{j}) \rangle^2
=
  -\frac{1}{NR^2} \langle
  uf(u)\rangle^4\sum_{r,s}\left[S^{k_1k_3}_{rs}S^{k_2k_4}_{rs}-S^{k_1k_3k_2k_4}_{rrss}\right]\ ,
\end{equation}
which is again negligible in the large $N$ limit.

\paragraph{Three distinct indices} Let us study the contributions to the fourth
moment of $\lambda$ coming from three distinct indices. We study the case where
$i_1=i_4$:
\begin{equation}
  \label{lambda-three}
E^{k_1k_2k_3k_4}= 
\frac{1}{N^2}\sum'_{i_1,i_2,i_3} w_{i_1}^{k_1} w_{i_2}^{k_2} w_{i_3}^{k_3} w_{i_1}^{k_4}
\langle \tilde f(u_{i_1})^2 \tilde f(u_{i_2}) \tilde f(u_{i_3})\rangle
\, .
\end{equation}
Using the expression for the third moment of functions of $u_1,u_2,u_3$ found in (\ref{3fcorr}), we get:
\begin{eqnarray}\nonumber
E^{k_1k_2k_3k_4}&= &
c b^2 \frac{1}{N^2}\sum'_{i_1,i_2,i_3} w_{i_1}^{k_1} w_{i_2}^{k_2} w_{i_3}^{k_3} w_{i_1}^{k_4}
                     - {\text{Corr.}}
  \nonumber \\
  &=&
c b^2
      W^{k_1k_4}\left[\Sigma^{k_2k_3}-W^{k_2k_3}\right]
      - {\text{Corr.}}
  \label{lambda-three-res}
\end{eqnarray}
The corrections come from cases when $i_1=i_2$ or $i_1=i_3$. For instance the
piece with $i_1=i_2$ gives
\begin{equation}
  - c b^2\frac{1}{N D}\sum_r S^{k_1k_2k_4}_{r}S^{k_3}_r
\end{equation}
which is $\order{1/N}$ at most.

The only pieces that do not vanish in the large $N$ limit are thus the
pieces similar to the one computed in (\ref{lambda-three-res}). Putting
all of them together we find that the contribution to $\langle
\tilde \lambda^{k_1}\tilde \lambda^{k_2}\tilde \lambda^{k_3}\tilde \lambda^{k_4}\rangle$ coming
form pieces with exactly three distinct indices in $i_1,i_2,i_3,i_4$ is
equal to:
\begin{eqnarray}\nonumber
  G_3^{k_1k_2k_3k_4} =c b^2 \big(
  X^{k_1k_2;k_3k_4}+
  X^{k_1k_3;k_2k_4}+
  X^{k_1k_4;k_2k_3}+
     X^{k_2k_3;k_1k_4}+
  X^{k_2k_4;k_1k_3}+
  X^{k_3k_4;k_1k_2}
     \big)
     \label{lambda-three-res-final}
\end{eqnarray}
where
\begin{equation}
  X^{k_1k_2;k_3k_4} =
  W^{k_1k_2}\left[\Sigma^{k_3k_4}-W^{k_3k_4}\right] \, .
  \label{Xdef}
\end{equation}

\paragraph{Two distinct indices} Let us now study the contribution to the fourth
moment of $\lambda$ coming from two distinct indices.  We study first one piece
of this contribution to the fourth moment, corresponding to $i_1=i_2=i$,
$i_3=i_4=j$:
\begin{equation}
  \label{lambda-two-a}
F^{k_1k_2k_3k_4}= 
\frac{1}{N^2}\sum'_{i,j} w_{i}^{k_1} w_{i}^{k_2} w_{j}^{k_3} w_{j}^{k_4}
\langle \tilde f(u_{i})^2 \tilde f(u_{j})^2\rangle \, .
\end{equation}
To leading order in the thermodynamic limit, we can write
\begin{equation}
  \langle \tilde f(u_{i})^2 \tilde f(u_{j})^2\rangle= c^2
\end{equation}
and therefore
\begin{equation}
  \label{res-2-indices}
  F^{k_1k_2k_3k_4}=c^2 W^{k_1k_2}W^{k_3k_4}
\end{equation}
(the correction coming from $i=j$ being obviously at most $\order{1/N}$).

We study now the second piece of this contribution to the fourth moment,
corresponding to $i_1=i_2=i_3=i$, $i_4=j$. This is equal to
\begin{equation}
  \label{lambda-two-b}
\frac{1}{N^2}\sum'_{i,j} w_{i}^{k_1} w_{i}^{k_2} w_{i}^{k_3} w_{j}^{k_4}
\langle\tilde  f(u_{i})^3 \tilde f(u_{j})\rangle \, .
\end{equation}
Using
\begin{equation}
\langle \tilde f(u_{i})^3 \tilde f(u_{j})\rangle= b \langle u \tilde f(u)^3\rangle \frac{1}{D}\sum_r F_{ir}F_{jr}\ ,
\end{equation}
this gives
\begin{equation}
  b \langle u \tilde f(u)^3\rangle \frac{1}{ND}\sum_r S^{k_1k_2k_3}_r S^{k_4}_r
\end{equation}
and it is therefore negligible.

Therefore all the contributions to the fourth moment of $\lambda$
coming from exactly two distinct indices are of the type
(\ref{res-2-indices}). They give a total contribution:
\begin{equation}
  \label{lambda-two-res-final}
G_2^{k_1k_2k_3k_4}= c^2
\left[W^{k_1k_2}W^{k_3k_4}+W^{k_1k_3}W^{k_2k_4}+W^{k_1k_4}W^{k_2k_4}\right]
\, .
\end{equation}

\paragraph{One distinct index} The contribution to the fourth moment
$\langle\lambda^{k_1}\lambda^{k_2}\lambda^{k_3}\lambda^{k_4}\rangle$ coming from
$i_1=i_2=i_3=i_4$ is clearly of $\order{1/N}$ and can be neglected.

\paragraph{Final result for the four-point correlation function of $\lambda$
  variables} We can now put together all the contributions to the fourth moment
$\langle \tilde \lambda^{k_1}\tilde \lambda^{k_2}
\tilde\lambda^{k_3}\tilde\lambda^{k_4}\rangle$ coming form pieces with four
distinct indices found in~\eqref{lambda-four-res-final}, those with three
distinct indices found in~\eqref{lambda-three-res-final}, and those with two
distinct indices found in~\eqref{lambda-two-res-final}.  Defining
\begin{equation}
  Y^{k_1k_2}=\Sigma^{k_1k_2}-W^{k_1k_2},
\end{equation}
and recalling the definition~\eqref{Xdef} of the $X$ variables, we obtain:
\begin{align}\nonumber
  \langle \tilde \lambda^{k_1}\tilde \lambda^{k_2}\tilde \lambda^{k_3}\tilde \lambda^{k_4}\rangle
  &=
      b^4 \big( Y^{k_1k_2}Y^{k_3k_4}+
      Y^{k_1k_3}Y^{k_2k_4}+Y^{k_1k_4}Y^{k_2k_3}\big)\\
  &\quad + b^2 c
      \big(
      X^{k_1k_2;k_3k_4}+
      X^{k_1k_3;k_2k_4}\nonumber \\
  &\quad + X^{k_1k_4;k_2k_3}+
      X^{k_2k_3;k_1k_4}+
      X^{k_2k_4;k_1k_3}+
      X^{k_3k_4;k_1k_2}
      \big) \nonumber \\
  &\quad +c^2
      \left[W^{k_1k_2}W^{k_3k_4}+W^{k_1k_3}W^{k_2k_4}+W^{k_1k_4}W^{k_2k_4}\right].
\end{align}
We can see that this is equal to
 \begin{eqnarray}
   \left[b^2
   Y^{k_1k_2}+c
   W^{k_1k_2}\right]
   \left[b^2
   Y^{k_3k_4}+c
   W^{k_3k_4}\right]+{\text{2 perm.}}
 \end{eqnarray}
 which shows that
\begin{equation} 
  \langle \tilde \lambda^{k_1} \tilde \lambda^{k_2} \tilde \lambda^{k_3}
  \tilde \lambda^{k_4}\rangle=\langle \tilde \lambda^{k_1} \tilde \lambda^{k_2}\rangle \langle \tilde \lambda^{k_3}
  \tilde \lambda^{k_4}\rangle+{\text{2 permutations}} \, .
\end{equation}
With this, it is clear how to proceed with the calculation of the fourth moments
involving $\lambda$ and~$\nu$ variables. We first need to study the moments with
three $\lambda$ and one $\nu$, then moments with two $\lambda$ and two $\nu$,
and finally the moments with one $\lambda$ and three $\nu$ variables. In the
interest of conciseness, we do not spell out the full details of this
calculations here, which proceeds very similarly to the calculations performed
hitherto.


The generalisation to higher moments of $\lambda$ variables employs the same
combination of repeated use of Lemma~\ref{lemma:2} and careful decomposition in
subsets of distinct indices. As a result, it can be seen that the set of
$\lambda$ variables has a Gaussian distribution in the thermodynamic limit.

\section{Derivation of the equations of motion}
\label{app:derivation}

When we make a step of SGD, we update the weight $w_i^k$ using a new sample,
generated using a previously unused sample according to
\begin{subequations}
  \label{eq:sgd}
  \begin{align}
    \left(w_i^k\right)_{\mu+1}- \left(w_i^k\right)_{\mu}
    & =-\frac{\eta}{\sqrt{N}} v^k \Delta g'(\lambda^k) f(u_i), \label{eq:sgdw}\\
    v^k_{\mu+1} - v^k_\mu &= - \frac{\eta}{N} g(\lambda^k) \Delta, \label{eq:sgdv}
  \end{align}
\end{subequations}
where
$\Delta = \sum_{j = 1}^K v^j g(\lambda^j) -\sum_{m=1}^M \tilde v^m
\tilde{g}(\nu^m)$. Note the different re-scaling of the learning rate for the
first and second-layer weights. From here on out, we shall drop the index $\mu$
on the right-hand side as we work at a fixed iteration time. We will keep the
convention of Sec.~\ref{sec:GEP} where extensive indices (taking values up to
$N$ or $D$) are below the line, while we'll use upper indices when they take
finite values up to $M$ or $K$. The challenge of controlling the learning in the
thermodynamic limit will be to write closed equations using matrices with only
``upper'' indices left. Furthermore, we will adopt the convention that the
indices $j,k,\ell,\iota=1,\ldots,K$ always denote \emph{student} nodes, while
$n,m=1,\ldots,M$ are reserved for teacher hidden nodes.

\subsection{First steps}

We will start by focussing on the dynamics of the first layer,
Eq.~\eqref{eq:sgdw}. When we study the evolution of quantities that are linear
in the weights, like $S_r^k$ and the order parameters constructed from it,
\emph{e.g.}  $\Sigma^{k\ell}$, we need to study
\begin{equation}
  \left[ \sum_{j=1}^K v^j g(\lambda^j) -\sum_{m=1}^M\tilde v^m \tilde g(\nu^m)\right]
  g'(\lambda^k) f(u_i) = \sum_{j \neq k}^K v^j \cala^{jk}_i +
  v^k \calb^k_{i}-\sum_{n=1}^M \tilde v^n \calc^{nk}_i,
\end{equation}
where
\begin{align}
  \cala^{j k}_ i&= g(\lambda^j) g'(\lambda^k) f(u_i)\, ,\\
  \calb^k_{i}&=  g(\lambda^k) g'(\lambda^k) f(u_i)\, ,\\
  \calc^{nk}_i &= \tilde g(\nu^n) g'(\lambda^k) f(u_i) \, .
\end{align}
while we keep the second-layer weights $v^k$ fixed. We can thus follow the
dynamics of $S_r^k$~\eqref{eq:S}, which is linear in the weights and enters the
definition of the order parameters $R^{km}$~\eqref{eq:R} and
$\Sigma^{kl}$~\eqref{eq:Sigma}:
\begin{equation}
  \label{eq:eom-S-with-F}
  \left(S_r^k\right)_{\mu+1}-\left(S_r^k\right)_\mu=-\frac{\eta}{N}v^k \sum_i F_{ir} 
  \left[\sum_{j\neq k}^K v^j \cala^{jk}_i + v^k \calb^k_{i}-\sum_n^M \tilde v^n \calc^{nk}_i\right].
\end{equation}
We want to average this update equation over a new incoming sample, i.e.\ over
the $\vc_r$ variables. Upon contraction with $F_{ir}$ in
Eq.~\eqref{eq:eom-S-with-F}, we are thus led to computing the averages
\begin{gather}
  \mathcal{A}^{jk}_r \equiv \ussqN \sum_i \EE\left[ F_{ir} \cala_i^{jk} \right]= \EE\left[ g(\lambda^j) g'(\lambda^k) \beta_r \right],\label{eq:Ajkr}\\
  \mathcal{B}^k_r \equiv \EE\left[ g(\lambda^k) g'(\lambda^k) \beta_r
  \right], \label{eq:Bkr}
\end{gather}
and
\begin{equation}
  \label{eq:Cnkr}
  \mathcal{C}^{nk}_r = \EE\left[\tilde g(\nu^n) g'(\lambda^k)
    \beta_r \right] \ ,
\end{equation}
where
\begin{equation}
  \label{def:betar}
  \beta_r= \frac{1}{\sqrt N} \sum_i F_{ir}f(u_i).
\end{equation}
The crucial fact that allows for an analytic study of online learning is that,
at each step $\mu$ of SGD, a previously unseen input $\vx_\mu$ is used to
evaluate the gradient. The latent representation $\vc_\mu$ of this input is
given by a new set of i.i.d.\ Gaussian random variables $c_{\mu r}$, which are
thus independent of the current weights of the student at that time. In the
thermodynamic limit, the GEP of the previous section shows that, for one given
value of $r$, the $K + M + 1$ variables $\{\lambda^k\}$, $\{\nu^m\}$ and
$\beta_r$ have a joint Gaussian distribution, making it possible to express the
averages over $\{\lambda^k, \nu^m, \beta_r\}$ in terms of only their
covariances.

Looking closer, we see that the average of
(\ref{eq:Ajkr},\ref{eq:Bkr},\ref{eq:Cnkr}) over this Gaussian distribution
involves two sets of random variables: on the one hand, the $M+K$ local fields
$\{\nu^m, \lambda^k\}$, which have correlations of order $1$, and on the other
hand the variable $\beta_r$ (for one given value of $r$). It turns out that
$\beta_r$ is only weakly correlated with the local fields $\{\nu^m, \lambda^k\}$
(the correlation is $\mathcal{O}(1/\sqrt N)$). In Appendix~\ref{app:lemma1}, we
discuss how to compute this type of average and prove Lemma~\ref{lemma:1}, which
for the averages~(\ref{eq:Ajkr}--\ref{eq:Cnkr}) yields
\begin{align}
  \begin{split}
    \mathcal{A}^{jk}_r &=\frac{1}{Q^{kk}Q^{j
        j}-(Q^{kj})^2}  \left( Q^{jj} \EE\left[ g'(\lambda^k)\lambda^k
    g(\lambda^j) \right] \;
    \EE\left[ \lambda^k \beta_r\right] -Q^{kj} \EE\left[ g'(\lambda^k)\lambda^j g(\lambda^j) \right] \;
    \EE\left[ \lambda^k \beta_r\right] \right.  \\
    & \hspace*{7em}\left. -Q^{kj} \EE\left[ g'(\lambda^k) \lambda^k g(\lambda^j) \right] \;
    \EE\left[ \lambda^j \beta_r\right]  +Q^{kk} \EE\left[ g'(\lambda^k) \lambda^j g(\lambda^j) \right] \;
    \EE\left[ \lambda^j \beta_r\right] \right),
  \end{split}\\
  \mathcal{B}^k_{r}&  = \frac{1}{Q^{kk}}
                         \EE\left[ g'(\lambda^k)\lambda^k g(\lambda^k) \right] \; 
                         \EE\left[ \lambda^k \beta_r \right],\\
  \begin{split}
    \mathcal{C}^{nk}_r &= \frac{1}{Q^{kk}T^{nn}-{\left(R^{kn}\right)}^2} 
    \left( T^{nn} \EE\left[ g'(\lambda^k)\lambda^k \tilde g(\nu^n) \right] \;
    \EE\left[ \lambda^k \beta_r\right]   -R^{kn} \EE\left[ g'(\lambda^k) \nu^n \tilde g(\nu^n) \right] \;
    \EE\left[ \lambda^k \beta_r\right]  \right.\\
    & \hspace*{7em} \left. -R^{kn} \EE\left[ g'(\lambda^k)\lambda^k \tilde g(\nu^n) \right] \;
    \EE\left[ \nu^n \beta_r\right] + Q^{kk} \EE\left[ g'(\lambda^k) \nu^n \tilde g(\nu^n) \right] \; \EE\left[
    \nu^n \beta_r\right] \right).
  \end{split}
\end{align}
This yields
\begin{equation}
  \label{eq:eom-S}
  \left(S_r^k\right)_{\mu+1}-\left(S_r^k\right)_\mu=-\frac{\eta}{\sqrt{N}}v^k
  \left[\sum_{j\neq k}^K v^j \mathcal{A}^{jk}_r + v^k \mathcal{B}^k_r - \sum_n^M
    \tilde v^n \mathcal{C}^{nk}_r\right],
\end{equation}
with only the single extensive index $r$ left. While this equation would appear to
open up a way to write down the equation of motion for the ``teacher-student''
overlap $R^{km}$ by contracting~\eqref{eq:eom-S} with $\tilde w_r^m$, we show in
Appendix~\ref{sec:cannot-close} that such a program will lead to an infinite
hierarchy of equations. To avoid this problem, we rotate the problem to a
different basis, as we explain in the next section.

\subsection{Changing the basis to close the equations}
\label{sec:rotating}

We can close the equations for the order parameters by studying their dynamics
in the basis given by the eigenvectors of the operator
\begin{equation}
  \label{eq:Omega}
  \Omega_{rs}\equiv\usN \sum_i F_{ir} F_{is},
\end{equation}
which is a $D\times D$ symmetric matrix, with diagonal elements $\Omega_{rr}=1$,
and off-diagonal elements of order $1/\sqrt{N}$. Consider the orthogonal basis
of eigenvectors $\psi_{\tau=1,\dots,D}$ of this matrix, with corresponding
eigenvalues $\rho_\tau$, such that
\begin{equation}
\sum_s \Omega_{rs}\psi_{\tau s}=\rho_\tau \psi_{\tau r}.
\end{equation}
We will suppose that the components of the eigenvectors $\psi_{\tau r}$ are of
order 1 and we impose the following normalisation:
\begin{equation}
  \sum_s \psi_{\tau s} \psi_{\tau' s}=D \delta_{\tau \tau'},\qquad \sum_\tau \psi_{\tau r} \psi_{\tau s}=D \delta_{rs}.
\end{equation}
In this basis, the teacher-student overlap $R^{km}$~\eqref{eq:R} is given by
\begin{equation}
  R^{km}=\frac{b}{D} \sum_\tau \Gamma_\tau^k \tilde \omega_\tau^m,
\end{equation}
where we have introduced the projections
\begin{equation}
  \label{eq:Gamma_tau}
  \Gamma_\tau^k=\ussqR \sum_r S_r^k \psi_{\tau r}
\end{equation}
and
\begin{equation}
  \label{eq:omega-tau}
  \tilde{\omega}_\tau^m=\ussqR \sum_r \tilde w_r^m \psi_{\tau r}.
\end{equation}
Since $\tilde{\omega}_\tau^m$ is a static variable, the time evolution of
$\Gamma_\tau^k$ is given by
\begin{equation}
  {\left(\Gamma_\tau^k\right)}_{\mu+1}-{\left(\Gamma_\tau^k\right)}_\mu
  =-\frac{\eta}{\sqrt{\delta}N }v^k \sum_{r} \psi_{\tau r} \left[ \sum_{j \neq
      k}^K v^j \mathcal{A}^{jk}_r+ v^k \mathcal{B}^k_r  -  \sum_n^M\tilde v^n \mathcal{C}^{nk}_r \right]
\end{equation}
As we aim to compute the remaining sum over $r$, two types of terms appear:
\begin{equation}
  \sum_{r}\psi_{\tau r}\EE\left[ \lambda^k \beta_r\right] =
  \frac{1}{\sqrt{\delta}}\left((c-b^2) \delta +
    b^2 \rho_\tau \right)
  \Gamma_\tau^k = \frac{d_\tau}{\sqrt{\delta}} \Gamma_\tau^k,
\end{equation}
where we have defined $d_\tau=(c-b^2) \delta +  b^2 \rho_\tau $, and 
\begin{align}
  \sum_r \psi_{\tau r} \EE\left[  \nu^n \beta_r \right] = \frac{b}{\sqrt{\delta}} \rho_\tau \tilde{\omega}^n_\tau.
\end{align}
Putting everything together, the final evolution of $\Gamma_\tau^k$ is
\begin{equation}
  \begin{split}
    \label{eq:eom-Gamma}
    {\left(\Gamma_\tau^k\right)}_{\mu+1}-{\left(\Gamma_\tau^k\right)}_\mu
    =-\frac{\eta}{\delta N} v^k & \left[d_\tau \Gamma_\tau^k \sum_{j\neq k} v^j
      \frac{ Q^{jj} \EE\left[ g'(\lambda^k)\lambda^k g(\lambda^j) \right]
        -Q^{kj} \EE\left[ g'(\lambda^k)\lambda^j g(\lambda^j)\right]}
      {Q^{kk}Q^{jj}-(Q^{kj})^2}\right.  \\
    &\qquad + \sum_{j\neq k}v^j d_\tau \Gamma_\tau^j \frac{ Q^{kk} \EE\left[
        g'(\lambda^k) \lambda^j g(\lambda^j) \right] - Q^{kj} \EE\left[
        g'(\lambda^k) \lambda^k g(\lambda^j)\right] }
    {Q^{kk}Q^{jj}-(Q^{kj})^2} \\
    &\qquad + d_\tau v^k \Gamma_\tau^k \frac{1}{Q^{kk}}
    \EE\left[  g'(\lambda^k)\lambda^k g(\lambda^k)\right]  \\
    &\qquad -d_\tau \Gamma_\tau^k \sum_{n}\tilde v^n \frac{ T^{nn} \EE\left[
        g'(\lambda^k)\lambda^k\tilde g(\nu^n) \right] -R^{kn} \EE\left[
        g'(\lambda^k) \nu^n \tilde g(\nu^n) \right] }
    {Q^{kk}T^{nn}-(R^{kn})^2} \\
    &\qquad \left.  - b \rho_\tau \sum_{n}\tilde v^n \tilde{\omega}_\tau^n \frac{Q^{kk}
        \EE\left[ g'(\lambda^k) \nu^n \tilde g(\nu^n)\right] -R^{kn} \EE\left[
          g'(\lambda^k) \lambda^k \tilde g(\nu^n)\right]}
      {Q^{kk}T^{nn}-(R^{kn})^2} \right].
  \end{split}
\end{equation}
At this point, we note that the remaining averages appearing in this expression,
such as $\EE\left[\lambda^k g'(\lambda^k)\tilde g(\nu^m) \right]$, depend only
on subsets of the local fields $\{\lambda^{k=1,\ldots,K},
\nu^{m=1,\ldots,M}\}$. As discussed above, the GEP guarantees that these random
variables follow a multi-dimensional Gaussian distribution, so these averages
depend only on the covariances of the local fields~$R^{km}$,
$Q^{k\ell}$, 
and~$T^{mn}$. To simplify the subsequent equations, we will use the 
shorthand notation for the three-dimensional Gaussian averages
\begin{equation}
  I_3(k, j, n) \equiv \EE\left[ g'(\lambda^k) \lambda^j
    \tilde{g}(\nu^n) \right] \, ,
\end{equation}
which was introduced by Saad~\& Solla~\cite{Saad1995a} and that we discuss in
the main text. To make this section self-contained, we recall that arguments
passed to $I_3$ should be translated into local fields on the right-hand side by
using the convention where the indices $j,k,\ell,\iota$ always refer to student
local fields $\lambda^j$, etc., while the indices $n,m$ always refer to teacher
local fields $\nu^n$, $\nu^m$. Similarly,
\begin{equation}
  \label{eq:sm_I3}
  I_3(k, j, j) \equiv \EE\left[ g'(\lambda^k) \lambda^j g(\lambda^j) \right],
\end{equation}
where having the index $j$ as the third argument means that the third factor is
$g(\lambda^j)$, rather than $\tilde{g}(\nu^m)$ in Eq.~\eqref{eq:sm_I3}. The average
in Eq.~\eqref{eq:sm_I3} is taken over a three-dimensional normal distribution with
mean zero and covariance matrix
\begin{equation}
  \label{eq:sm_Phi3}
  \Phi^{(3)}(k, j, n) = \begin{pmatrix}
    \EE\left[ \lambda^k \lambda^k\right] &  \EE\left[ \lambda^k\lambda^j \right] & \EE\left[
    \lambda^k\nu^n \right] \\
    \EE\left[  \lambda^k\lambda^j \right] &  \EE\left[ \lambda^j\lambda^j \right] & \EE\left[ \lambda^j\nu^n \right] \\
    \EE\left[ \lambda^k\nu^n \right] &  \EE\left[ \lambda^j\nu^n \right] & \EE\left[ \nu^n\nu^n \right]
  \end{pmatrix} = \begin{pmatrix}
    Q^{kk} &  Q^{kj} & R^{kn} \\
    Q^{kj} &  Q^{jj} & R^{jn} \\

    R^{kn} &  R^{jn} & T^{nn}
  \end{pmatrix}.
\end{equation}

\subsection{Dynamics of the teacher-student overlap $R^{km}$}

We are now in a position to write the update equation for
\begin{equation}
  \left(R^{km}\right)_{\mu+1} - \left(R^{km}\right)_\mu =\frac{b}{D} \sum_\tau \left[{\left(\Gamma_\tau^k\right)}_{\mu+1}-{\left(\Gamma_\tau^k\right)}_\mu\right] \tilde \omega_\tau^m,
\end{equation}
where we have used that the $\tilde\omega_\tau^m$ are static. When performing
the last remaining sum over $\tau$, two types of terms appear. First, there is
\begin{equation}
  \tilde T^{mn} \equiv \usR\sum_\tau \rho_\tau \tilde \omega_\tau^m\tilde \omega_\tau^n.
\end{equation}
which depends only on the choice of the feature matrix $F_{ir}$ and the teacher
weights $w^*_{mr}$ and is thus a constant of the motion. The second type of term
is of the form
\begin{equation}
  \usR\sum_\tau \rho_\tau \Gamma_\tau^\ell\tilde \omega_\tau^n.
\end{equation}
This sum cannot be reduced to a simple expression in terms of other order
parameters. Instead, we are led to introduce the density
\begin{equation}
  \label{eq:r}
  r^{km}(\rho)=\frac{1}{\varepsilon_\rho} \usR \sum_\tau \Gamma_\tau^k
  \tilde\omega_\tau^m \; \ind\left(\rho_\tau \in
    \mathopen[\rho,\rho+\varepsilon_\rho\mathclose[ \right),
\end{equation}
where $\ind(\cdot)$ is the indicator function which evaluates to~1 if the
condition given to it as an argument is true, and which otherwise evaluates to
0. We take the limit $\varepsilon_\rho\to 0$ after the thermodynamic limit. Then
we can rewrite the order parameter $R^{km}$ as an integral over the density
$r^{km}$, weighted by the distribution of eigenvalues of the operator
$\Omega_{rs}$, $p_{\Omega}(\rho)$:
\begin{equation}
  \label{eq:sm_R_int}
  R^{km}= b \;\int \dd \rho\;  p_{\Omega}(\rho)\;  r^{km}(\rho)\, .
\end{equation}
If, for example, we take the elements of the feature matrix $F_{ir}$ to be
element-wise i.i.d.\ from the normal distribution with mean zero and unit
variance, then the limiting density of eigenvalues of $\Omega$ is given by the
Marchenko-Pastur law~\cite{marchenko1967distribution}:
\begin{equation}
  \label{eq:pmp}
  p_{\mathrm{MP}}(\rho)=\frac{1}{2\pi
    \delta}\frac{\sqrt{(\rho_{\max}-\rho)(\rho-\rho_{\min})}}{\rho}\, ,
\end{equation}
where $\rho_{\min}=\left(1-\sqrt{\delta}\right)^2$ and
$\rho_{\max}=\left(1+\sqrt{\delta}\right)^2$.

The update equation of $r^{km}(\rho)$ can be obtained immediately by
substituting the update equation for $\Gamma_\tau^k$~\eqref{eq:eom-Gamma} into
its definition~\eqref{eq:r}. Finally, in the thermodynamic limit, the
normalised number of steps $t=P / N$ can be interpreted as a continuous
time-like variable, and so we have
\begin{equation}
  R^{km}(t)= b \;\int \dd \rho\;  p_{\Omega}(\rho)\;  r^{km}(\rho, t)
\end{equation}
and we recover the equation of motion for $r^{km}(\rho)$, which we re-state here
for ease of reading:
\begin{align}
  \begin{split}
    \frac{\partial r^{km}(\rho, t)}{\partial t} = -\frac{\eta}{\delta} v^k &
    \left( d(\rho) r^{km}(\rho) \sum_{j\neq k}v^j \frac{ Q^{jj}\; I_3(k, k, j) -Q^{kj}
        I_3(k, j, j)}
      {Q^{jj}Q^{kk}-(Q^{kj})^2}\right.  \\
    &\qquad + d(\rho) \sum_{j\neq k} v^j r^{j m}(\rho) \frac{ Q^{kk} I_3(k, j, j) - Q^{kj}\;
      I_3(k, k, j) }
    {Q^{jj}Q^{kk}-(Q^{kj})^2} \\
    &\qquad + v^k d(\rho) r^{k m}(\rho) \frac{1}{Q^{kk}} I_3(k, k, k)  \\
    &\qquad - d(\rho) r^{km}(\rho) \sum_{n} \tilde v^n \frac{T^{nn} I_3(k, k, n) - R^{kn} I_3(k, n,
      n) }
    {Q^{kk}T^{nn}-(R^{kn})^2} \\
    &\qquad \left.  - b \rho \sum_{n}\tilde v^n \tilde T^{nm}
      \frac{Q^{kk} I_3(k, n, n) -R^{kn} I_3(k, k, n)} {Q^{kk}T^{nn}-(R^{kn})^2}
    \right),
  \end{split}
\end{align}
where $d(\rho)=(c-b^2) \delta + b^2 \rho$. Note that while we have dropped the
explicit time dependence from the right-hand side to keep the equation readable,
all the order parameters on the right-hand side are explicitly time-dependent,
i.e. $Q^{jj}=Q^{jj}(t)$, $r^{km}(\rho)=r^{km}(\rho, t)$, and the averages
$I_3(\cdot)$ are also time-dependent via their dependence on the order
parameters (see Eq.~\eqref{eq:sm_I3} and the subsequent discussion). In order to
close the equations of motion, we now need to find the equations for the order
parameters that are quadratic in the weights.

\subsection{Order parameters that are quadratic in the weights}

There are two order parameters that appear when evaluating the covariance of the
$\lambda$ variables:
\begin{equation}
  Q^{k\ell}\equiv\EE\left[ \lambda^k \lambda^\ell\right] = \left[c-b^2\right]W^{k\ell}+ b^2 \Sigma^{k \ell}.
\end{equation}
We will look at both $W^{k\ell}$ and $\Sigma^{k\ell}$ in turn now.

\paragraph{Equation of motion for $W^{k\ell}$} For the student-student overlap
matrix
\begin{equation}
  W^{k\ell} = \usN \sum_i^N w_i^k w_i^\ell,
\end{equation}
we find, after some algebra, that updates read
\begin{align}
  \begin{split}
    \left(W^{k\ell}\right)^{\mu+1}  - \left(W^{k\ell}\right)_\mu &=
    - \frac{\eta}{N^{3/2}} v^k \sum_i^N w_i^\ell \left[\sum_{j\neq k}^K v^j \cala_i^{jk}
      + v^k \calb_i^k - \sum_n^M\tilde v^n \calc_i^{nk}\right] \\
    & \quad - \frac{\eta}{N^{3/2}} v^\ell \sum_i^N w_i^k
    \left[\sum_{j\neq \ell}^K v^j \cala_i^{j\ell} + v^\ell \calb_i^\ell
      - \sum_n^M\tilde v^n \calc_i^{n\ell}\right] \\
    & \quad + \frac{\eta^2}{N^2} v^k v^\ell \sum_i^N {f(u_i)}^2 g'(\lambda^k) g'(\lambda^\ell) 
    \left[ \sum_{j,\iota}^K v^j v^\iota g(\lambda^j)g(\lambda^\iota) + \sum_{n,m}^M 
      \tilde v^n \tilde v^m \tilde g(\nu^n)\tilde g(\nu^m) \right. \\
      & \hspace*{16.5em} \left. - 2 \sum_j^K \sum_m^M v^j \tilde v^m
        g(\lambda^j) \tilde g(\nu^m)\right]
  \end{split}
\end{align}
For the terms linear in the learning rate $\eta$, we can immediately carry out
the sum over $i$, which yields terms of the type
\begin{equation}
  \ussqN \sum_i w^\ell_i \EE\left[ g(\lambda^j) g'(\lambda^k) f(u_i) \right] =
  \EE\left[ g'(\lambda^k) \lambda^\ell g(\lambda^j) \right] = I_3(k, \ell, j)
  \quad \text{etc.}
\end{equation}
The term quadratic in the learning rate $\eta$ requires the evaluation of terms
of the type
\begin{equation}
  \usN \sum_i \EE\left[ f(u_i)^2  g'(\lambda^k) g'(\lambda^\ell)
  g(\lambda^j)g(\lambda^\iota) \right]
  = c\EE\left[ g'(\lambda^k) g'(\lambda^\ell) g(\lambda^j)g(\lambda^\iota) \right].
\end{equation}
The sum over $i$ thus makes this second-order term contribute to the total
variation of $W^{k\ell}$ at leading order, and we're left with an average over
four local fields, for which we introduce the short-hand
\begin{equation}
  \label{eq:sm_I4}
  I_4(k, \ell, j, \iota) \equiv \EE\left[ g'(\lambda^k)
    g'(\lambda^\ell) g(\lambda^j) g(\lambda^\iota)\right],
\end{equation}
where we use the same notation as we did for $I_3(\cdot)$~\eqref{eq:sm_I3}. The
full equation of motion for $W^{k\ell}$ thus reads
\begin{align}
  \begin{split}
    \frac{\dd W^{k\ell}(t)}{\dd t} = &- \eta v^k \left(\sum_{j}^K v^j I_3(k, \ell, j) -
      \sum_n \tilde v^n I_3(k, \ell, n) \right) - \eta v^\ell \left(\sum_{j}^K
      v^j I_3(\ell, k, j) - \sum_n\tilde v^n I_3(\ell, k, n) \right)\\
    & + c\eta^2 v^k v^\ell \left( \sum_{j,a}^K v^j v^a I_4(k, \ell, j, a) - 2 \sum_j^K \sum_m^M
      v^j \tilde v^m I_4(k, \ell, j, m) + \sum_{n,m}^M \tilde v^n \tilde v^m I_4(k, \ell, n, m) \right).
  \end{split}
\end{align}

\paragraph{Equation of motion for $\Sigma^{k\ell}$} After rotating to the basis
$\psi_\tau$, we have
\begin{equation}
  \label{eq:Sigma_rotated}
  \Sigma^{k\ell} \equiv \frac{1}{D} \sum_r S_r^k S_r^\ell = \frac{1}{D}
  \sum_{\tau} \Gamma_\tau^k \Gamma_\tau^\ell.
\end{equation}
It is then immediate that
\begin{align}
  \begin{split}
    {(\Sigma^{k\ell})}^{\mu+1} - {(\Sigma^{k\ell})}_\mu & = \frac{1}{D} \sum_{\tau}  {\left(\Gamma_\tau^\ell\right)}_\mu \left[ {(\Gamma_\tau^k)}^{\mu+1} - {(\Gamma_\tau^k)}_\mu \right]
    + \frac{1}{D} \sum_{\tau} \left(\Gamma_\tau^k\right)_\mu \left[
      {(\Gamma_\tau^\ell)}^{\mu+1} - {(\Gamma_\tau^\ell)}_\mu \right] \\
    & \qquad + \frac{\eta^2}{D^2N} \sum_\tau \sum_{r, s}^R \psi_{\tau r}\psi_{\tau s} \EE\left[
    \Delta^2 g'(\lambda^k) g'(\lambda^\ell) \beta_r \beta_s \right].
  \end{split}
\end{align}
The terms linear in $\eta$ can be obtained directly by substituting in the
update equation for $\Gamma_\tau^k$~\eqref{eq:eom-Gamma} and are similar to the
update equation for $r^{km}(\rho)$. As for the term quadratic in $\eta$, we have
to leading order
\begin{multline}
  \frac{\eta^2}{DN} \sum_{r, s}^R \psi_{\tau r}\psi_{\tau s} \EE\left[ \Delta^2
  g'(\lambda^k) g'(\lambda^\ell) \beta_r \beta_s \right] = \frac{\eta^2}{DN}
  \sum_{r}^R (\psi_{\tau r})^2 \EE\left[ \Delta^2 g'(\lambda^k)
  g'(\lambda^\ell)\right]\EE\left[ \beta_r^2 \right]\\
  = \frac{\eta^2}{N} \EE\left[ \Delta^2 g'(\lambda^k)
  g'(\lambda^\ell)\right]\left[ (c-b^2)\rho_\tau + \frac{b^2}{\delta}\rho_\tau^2 \right],
\end{multline}
where we have used that covariance of $\beta_r$ is given by
\begin{equation}
  \label{eq:cov-betar}
  \EE\left[ \beta_r^2\right] = c - b^2 + \frac{b^2}{\delta} \sum_t \Omega_{rt}^2.
\end{equation}
To deal with the remaining sum over $\tau$, we again make use of the fact that
the equation of motion for $\Sigma^{k\ell}$ depends on the eigenvector index
$\tau$ only through the eigenvalue $\rho_\tau$. Introducing the density
\begin{equation}
  \sigma^{k\ell}(\rho)=\frac{1}{\varepsilon_\rho}\; \usR \sum_\tau \Gamma_\tau^k \Gamma_\tau^\ell
  \ind\left(\rho_\tau \in \mathopen[\rho,\rho+\varepsilon_\rho\mathclose[\right),
\end{equation}
as we did for $r^{km}(\rho)$~\eqref{eq:r}, we have
\begin{equation}
  \Sigma^{k\ell}(t)= \;\int \dd \rho\;  p_{\Omega}(\rho)\;  \sigma^{k\ell}(\rho, t)
\end{equation}
with
\begin{align}
  \begin{split}
    \frac{\partial \sigma^{k\ell}(\rho, t)}{\partial t} = -\frac{\eta}{\delta} &
    \left(d(\rho) v^k \sigma^{k\ell}(\rho) \sum_{j\neq k}v^j \frac{Q^{jj}
        I_3(k, k, j) -Q^{kj} I_3(k, j, j)}
      {Q^{jj}Q^{kk}-(Q^{kj})^2}\right.  \\
    &\qquad + v^k \sum_{j\neq k} v^j d(\rho) \sigma^{j \ell}(\rho) \frac{ Q^{kk}
      I_3(k, j, j) - Q^{kj} I_3(k, k, j) }
    {Q^{jj}Q^{kk}-(Q^{kj})^2} \\
    &\qquad + d(\rho) v^k \sigma^{k \ell}(\rho)v^k \frac{1}{Q^{kk}} I_3(k, k, k)  \\
    &\qquad - d(\rho) v^k \sigma^{k\ell}(\rho) \sum_n \tilde v^n\frac{T^{nn}
      I_3(k, k, n) - R^{kn} I_3(k, n, n) }
    {Q^{kk}T^{nn}-(R^{kn})^2} \\
    &\qquad -b \rho v^k \sum_n \tilde v^n r^{\ell n}(\rho) \frac{Q^{kk} I_3(k,
      n, n) -R^{kn}
      I_3(k, k, n)}{Q^{kk}T^{nn}-(R^{kn})^2} \\
    & \qquad + \text{all of the above with } \ell\to k, k\to\ell\Bigg).\\
    & \hspace*{-1em} + \eta^2 v^k v^\ell \left[(c - b^2)\rho +
      \frac{b^2}{\delta}\rho^2\right] \left(
      \sum_{j,\iota}^K v^j v^\iota I_4(k, \ell, j, \iota)\right. \\
    & \hspace*{12.5em} \left. - 2 \sum_j^K \sum_m^M v^j \tilde{v}^m I_4(k, \ell,
      j, m) + \sum_{n,m}^M \tilde{v}^n\tilde{v}^m I_4(k, \ell, n, m) \right)
  \end{split}
\end{align}

\subsection{Second-layer weights}

Finally, we will treat each of the second-layer weights of the student $\vv$ as
an order parameter in its own right. Their equations of motion are readily found
from from their SGD udpate~\eqref{eq:sgdv}, and read
\begin{equation}
  \frac{\dd v^k}{\dd t} = \eta \left[ \sum_n^M \tilde v_n I_2(k, n) - \sum_j^K
    v^j I_2(k, j) \right],
\end{equation}
where we have introduced the final short-hand
\begin{equation}
  \label{eq:supp_I2}
  I_2(k, j) \equiv \EE\left[ g(\lambda^k) g(\lambda^j)\right] \; \text{etc.}
\end{equation}
where we again use the notation we introduced for $I_3(\cdot)$~\eqref{eq:sm_I3}. 

\subsection{Generalisation error}

Having introduced the short-hand for the integrals
$I_2(k, j)$~\eqref{eq:supp_I2}, we realise that their form also enters the
formula for the generalisation error
\begin{align}
  \epsilon_g(\vtheta, \widetilde{\vtheta}) =  \frac{1}{2}\EE {\left( \sum_{k}^K
      v^k g(\lambda^k) - \sum_{m}^M \tilde v^m \tilde g(\nu^m)\right)}^2 =
  \frac{1}{2} \sum_{k,\ell} v^k v^\ell I_2(k, \ell) + \frac{1}{2} \sum_{n,m}
  \tilde v^n \tilde v^m I_2(n, m) -
  \sum_{k,n} v^k \tilde v^n I_2(k, n).
\end{align}
For example, for a student with
$g(\lambda^k)=\erf(\lambda^k/\sqrt{2})$ and a teacher with
$\tilde g(\nu^m)=\max(0, \nu^m)$, we find that
\begin{multline}
  \label{eq:eg-example} \epsilon_g(Q^{k\ell}, R^{kn}, T^{nm}, v^k, \tilde v^m) =
  \frac{1}{\pi} \sum_{k,\ell} v^k v^\ell \arcsin \frac{Q^{k\ell}}{\sqrt{1
        + Q^{kk}}\sqrt{1 + Q^{\ell\ell}}} - \sum_{k,n} v^k \tilde v^n
  \frac{R^{kn}}{\sqrt{2 \pi } \sqrt{1 + Q^{kk}}}\\ + \frac{1}{4\pi} \sum_{n,m}
  \tilde{v}^n \tilde{v}^m \left(\sqrt{T^{mm}
      T^{nn}-\left(T^{nm}\right)^2}+T^{nm}\left[\pi +
      2\arctan\frac{T^{nm}}{\sqrt{T^{mm}
            T^{nn}-\left(T^{nm}\right)^2}}\right]\right)
\end{multline}

\subsection{Summary of the derivation}

We have now completed the programme that we embarked upon at the beginning of
this Appendix~\ref{app:derivation}: we have derived a closed set of equations of
motion for the teacher-student overlap $R^{km}$
(\ref{eq:sm_R_int},\ref{eq:eom-r}) the student-student overlap
$Q^{k\ell}= \left[c - b^2\right]W^{k\ell}+ b^2\Sigma^{k \ell}$
(\ref{eq:eom-W},\ref{eq:Sigma_int},\ref{eq:eom-sigma}), and the student's
second-layer weights $v^k$~\eqref{eq:eom-v}. These equations give us complete
access to the dynamics of a neural network performing one-shot stochastic
gradient descent on a data set generated by the hidden manifold model. We can
now integrate these equations and substitute the values of the order parameters
at any time into the expression for the generalisation
error~\eqref{eq:eg-order-parameters}, thereby tracking the dynamics of the
generalisation error at all times. We describe this procedure in more detail
next.

\subsection{Explicit form of the integrals $I_3$ and $I_4$ for sigmoidal
  students}

The explicit forms of the integrals $I_3$ and $I_4$ that appear in the equations
of motion for the order parameters and the generalisation error for networks
with $g(x)=\tilde g(x)=\erf\left( x/\sqrt{2} \right)$ were first given
by~\cite{Biehl1995,Saad1995a}. Here, we will state them to make the paper as
self-contained as possible. Denoting the elements of the covariance matrix such
as $\Phi^{3}$~\eqref{eq:sm_Phi3} as~$\phi_{ij}$, we have
\begin{equation}
  I_3(\cdot, \cdot, \cdot) = 
  \frac{2}{\pi}\frac{1}{\sqrt{\Lambda_3}} \frac{\phi_{23}(1 + \phi_{11}) -
    \phi_{12}\phi_{13}}{1 + \phi_{11}}
\end{equation}
with
\begin{equation}
  \Lambda_3 = (1 + \phi_{11}) (1 + \phi_{33}) - \phi_{13}^2.
\end{equation}
For the average~$I_4$, we have a covariance matrix $\Phi^{(4)}$ that is
populated in analogy to $\Phi^{(3)}$~\eqref{eq:sm_Phi3}, we have
\begin{equation}
  I_4(\cdot, \cdot, \cdot, \cdot) = \frac{4}{\pi^2}\frac{1}{\sqrt{\Lambda_4}}\arcsin\left( \frac{\Lambda_0}{\sqrt{\Lambda_1\Lambda2}} \right)
\end{equation}
where
\begin{align}
  \Lambda_4 &= (1 + \phi_{11})  (1 + \phi_{22}) - \phi_{12}^2,\\[0.5em]
  \Lambda_0 &= \Lambda_4 \phi_{34} - \phi_{23}\phi_{24}(1 + \phi_{11}) - \phi_{13}\phi_{14}(1 +
  \phi_{22}) + \phi_{12}\phi_{13}\phi_{24} + \phi_{12}\phi_{14}\phi_{23},\\[0.5em]
  \Lambda_1 &= \Lambda_4 (1 + \phi_{33}) - \phi_{23}^2(1 + \phi_{11}) - \phi_{13}^2(1 +
                \phi_{22}) + 2 \phi_{12}\phi_{13}\phi_{23},\\[0.5em]
  \Lambda_2 & = \Lambda_4 (1 + \phi_{44}) - \phi_{24}^2(1 + \phi_{11}) - \phi_{14}^2(1 +
                \phi_{22}) + 2 \phi_{12}\phi_{14}\phi_{24}
\end{align}

\section{The equations of motion do not close in the trivial basis}
\label{sec:cannot-close}

Here we give a short demonstration that it is not possible to close the
equations for order parameters if we do not rotate their dynamics to the basis
given by the eigenvectors of $\Omega$, which is what we do in our derivation in
Sec.~\ref{sec:sgd}.

\subsection{Order parameters that are linear in the weights}

To start with a variable that is linear in the weights, take the time-evolution
of $S_r^k$.  It is clear that the tensor structure of the
result~\eqref{eq:eom-S} will be of the form
\begin{equation}
  (S_r^k)^{\mu+1}-(S_r^k)^\mu = -\frac{\eta}{N} \left[ \sum_\ell D^{k
      \ell}\sum_s \Omega_{rs}S_s^\ell
    +  \sum_m E^{km}\sum_s \Omega_{rs} \tilde w_s^m \right]
\end{equation}
where $D^{k \ell}$ and $E^{km} $ are known quantities, expressed in terms of the
matrices $Q, T, R$, and we have introduced the operator
\begin{equation}
\Omega_{rs}=\usN \sum_i F_{ir} F_{is}
\end{equation}
which has diagonal elements equal $1$, and off diagonal elements of
order $1/\sqrt{N}$.

In particular we can use this evolution to study the evolution of $R$:
\begin{equation}
  (R^{km})^{\mu+1}-(R^{km})^\mu=-\langle u f(u)\rangle \frac{\eta}{N} \left[
    \sum_\ell D^{k \ell} \usR\sum_{rs} \tilde w_r^m\Omega_{rs}S_s^\ell + 
    \sum_m E^{km}\usR \sum_{rs} \tilde w_s^r \Omega_{rs} \tilde w_s^m  \right]
\end{equation}
The point of this analysis is to show that the time evolution of $S_r^k$
involves ${(\Omega S)}_r^\ell$. Therefore to know the evolution of $S$ we need
the one of $\Omega S$. This is not inocuous because, in order to have dynamical
evolution equations with only ``up'' indices, we need to contract it.  The
evolution of $R^{km}$, which is proportional to the scalar product (in the
$R$-dimensional manifold space) of $S^k$ and $\tilde w^m$, is thus given by the
scalar product of $\Omega S^k$ and $\tilde w^m$.

It is not difficult to see that the evolution of $\Omega S$ will require knowing
$\Omega^2 S$ etc. So we have an infinite hierarchy of coupled equations, which
would be hard to analyse. Yet, we can find closed equations by changing basis
for $S$.

\section{Additional details on the numerical experiments in
  Sec.~\ref{sec:complexity}}
\label{sec:supp_complexity}

\begin{figure*}
  \includegraphics[width=\textwidth]{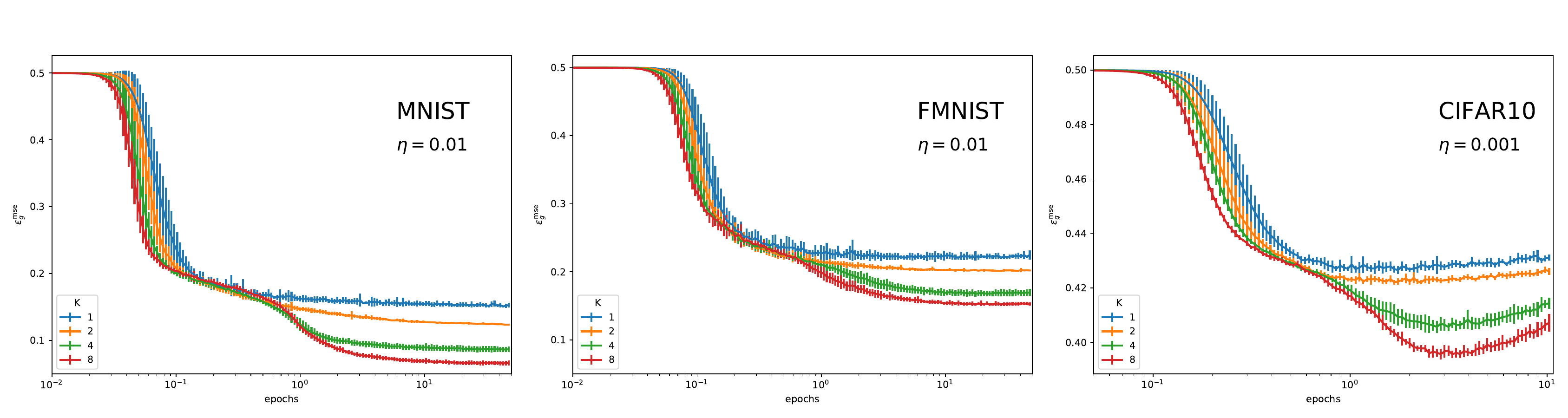}
  \caption{\label{fig:supp_inc_comp_sigmoidal} \textbf{Two-layer sigmoidal
      neural networks learn functions of increasing complexity on different data
      sets.}  We plot the mean-squared error as a function of training time for
    sigmoidal networks with increasing hidden layer when trained on three
    different data sets. The curves are obtained by averaging ten different runs,
    starting from different initial weights. Error bars indicate one standard
    deviation. For all plots,
    $g(x)=\mathrm{erf}\left(x/\sqrt{2}\right)$, gaussian initial
    weights with std.\ dev.\ $10^{-3}$, batch size 32.}
\end{figure*}
For the experiments demonstrating the learning of functions of increasing
complexity discussed in Sec.~\ref{sec:complexity}, we constructed data sets for
binary classification by splitting the image data sets as follows:
\begin{description}
\item[MNIST] even vs.\ odd numbers
\item[Fashion MNIST] t-shirt/top, pullover, dress, sandal and bag vs.\ trouser, coat,
  shirt, sneaker, ankle boot
\item[CIFAR10] air plane, bird, deer, frog, ship vs.\ automobile, cat, dog,
  horse and truck.
\end{description}

We first demonstrate in Fig.~\ref{fig:supp_inc_comp_sigmoidal} that sigmoidal
networks show the same learning of functions of increasing complexity discussed
in Sec.~\ref{sec:complexity} for CIFAR10 when trained on MNIST or FMNIST. Note
that for CIFAR10 in particular, we see the effects of over-training set in after
several epochs, when the generalisation error starts to increase again (we use
plain SGD without any explicit regularisation in these experiments).

\begin{figure}
  \includegraphics[width=.66\linewidth]{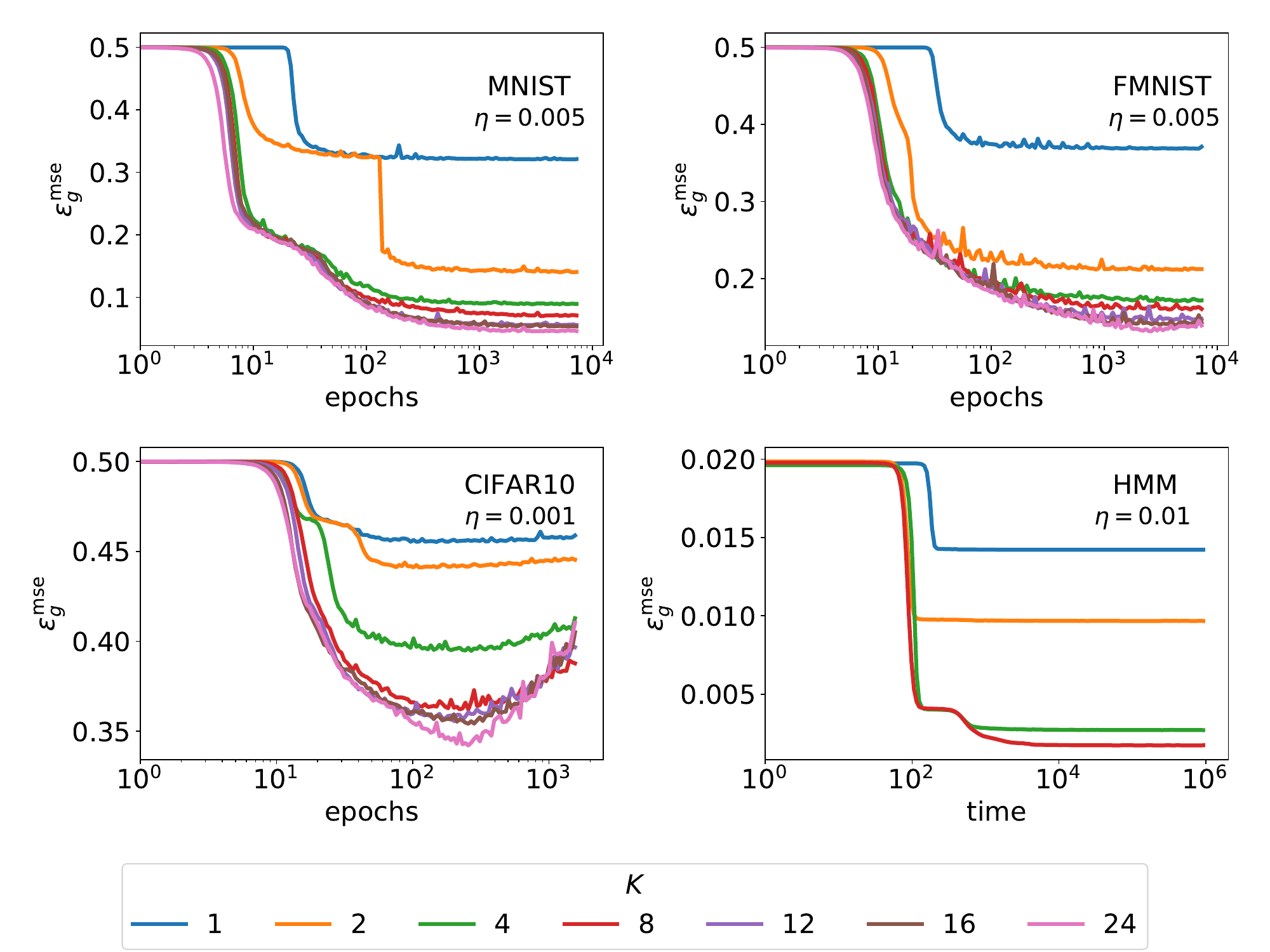}
  \caption{\label{fig:supp_inc_comp_relu} \textbf{Two-layer ReLU neural networks
      also learn functions of increasing complexity, show more run-to-run
      variance.}  We plot the mean-squared error as a function of training time
    for sigmoidal networks with increasing number of nodes $K$ when trained on
    three different data sets. For all plots, $g(x)=\max(0, x)$, gaussian
    initial weights with std.\ dev.\ $10^{-3}$, batch size 32. For online
    learning, we chose a teacher with
    $\tilde g(x)=\max(x, 0), M=10, \tilde v^m=\nicefrac{1}{M}$.}
\end{figure}

We also repeated these experiments for ReLU networks with activation function
$g(x)=\max(0, x)$. While the dynamics of ReLU students also show a progression
from simple to more complex classifiers, the run-to-run fluctuations are much
larger than for the sigmoidal students. This is true both quantitatively, but
also qualitatively: for example, networks sometimes get stuck in really
sub-optimal minimisers for a long time. Hence, plotting the mean trajectories is
not as informative as the standard variations would be very high, so in
Fig.~\ref{fig:supp_inc_comp_relu} we instead show representative curves for
individual runs of ReLU students for all three data sets and for online learning
from a teacher with $\tilde g(x)=\max(x, 0), M=10, \tilde v^m=\nicefrac{1}{M}$.
\end{widetext}

\end{document}